\documentclass{article}

\usepackage{microtype}
\usepackage[nointegrals]{wasysym}
\usepackage{graphicx}
\usepackage{subcaption}
\usepackage{booktabs} %
\usepackage{url}

\usepackage{hyperref}

\usepackage[accepted]{icml2026}

\usepackage{amsmath}
\usepackage{amssymb}
\usepackage{mathtools}
\usepackage{amsthm}
\usepackage{multicol}
\usepackage{enumitem}
\usepackage{arydshln}

\usepackage[capitalize,noabbrev]{cleveref}

\theoremstyle{plain}
\newtheorem{theorem}{Theorem}[section]
\newtheorem{proposition}[theorem]{Proposition}
\newtheorem{lemma}[theorem]{Lemma}
\newtheorem{corollary}[theorem]{Corollary}
\theoremstyle{definition}
\newtheorem{definition}[theorem]{Definition}

\theoremstyle{remark}
\newtheorem{remark}[theorem]{Remark}

\newcommand\LimitTransformer{Limit Transformer}
\newcommand\CRASP{\textbf{C-RASP}}
\newcommand\CRASPpl{\CRASP[\text{periodic},\text{local}]}
\newcommand\CRASPplshort{\CRASP[\text{Pos}]}

\newcommand{\cttn}[2]{\ensuremath{\textsc{\textbf{\#}} \left[ #1 \right] \; #2}}

\newcommand{\cif}[3]{\ensuremath{#1\;\mathbf{?}\; #2\; \textbf{:} \;#3}}

\newcommand{\infrasp}{\ensuremath{\textbf{C*-RASP}}}
\newcommand\infrasppl{\infrasp[\text{Pos}]}
\newcommand{\RegStrong}{\ensuremath{\mathcal{R}}}
\newcommand{\RegWeak}{\ensuremath{\mathcal{R}_-}}

\newcommand{\lightsout}{Lights Out}
\newcommand{\colors}{Colors}
\newcommand{\grippers}{Heavy Grippers}

\usepackage{tikz}
\usetikzlibrary{decorations.markings}
\usetikzlibrary{arrows,positioning,shapes,decorations,automata,petri,backgrounds,arrows.meta}
\usetikzlibrary{calc}
\usetikzlibrary{fit}
\usetikzlibrary{decorations.pathreplacing,calligraphy}
\usetikzlibrary{matrix}
\usetikzlibrary{tikzmark} %
\usepackage{pgfplots}
\pgfplotsset{compat=1.18}

\newcommand{\attLogit}[2]{a_{#1,#2}}

\newcommand{\transp}{^T}
\newcommand{\spectral}{}

\definecolor{dgreen}{HTML}{00b200}

\usepackage{amsmath,amsfonts,bm}

\def\eqref#1{equation~\ref{#1}}

\def\1{\bm{1}}

\def\vb{{\bm{b}}}
\def\vc{{\bm{c}}}

\def\vp{{\bm{p}}}

\def\vu{{\bm{u}}}
\def\vv{{\bm{v}}}
\def\vw{{\bm{w}}}

\def\vy{{\bm{y}}}
\def\vY{{\bm{Y}}}

\def\mA{{\bm{A}}}
\def\mB{{\bm{B}}}

\def\mE{{\bm{E}}}

\def\mK{{\bm{K}}}

\def\mQ{{\bm{Q}}}

\def\mU{{\bm{U}}}
\def\mV{{\bm{V}}}
\def\mW{{\bm{W}}}

\DeclareMathAlphabet{\mathsfit}{\encodingdefault}{\sfdefault}{m}{sl}
\SetMathAlphabet{\mathsfit}{bold}{\encodingdefault}{\sfdefault}{bx}{n}

\newcommand{\Omit}[1]{}

\newcommand{\predicates}{{\cal P}}
\newcommand{\actionschemas}{{\cal A}}
\newcommand{\as}{a}
\newcommand{\arity}[1]{\text{arity}(#1)}

\newcommand{\Args}[1]{\text{args}(#1)}
\newcommand{\Pre}[1]{\text{pre}(#1)}
\newcommand{\Eff}[2]{\text{eff}_{#1}(#2)}
\newcommand{\Cond}[2]{\text{C}_{#1}(#2)}

\newcommand{\nbeff}[1]{{k(#1)}}
\newcommand{\instance}{\Pi}
\newcommand{\domain}{{\cal D}}
\newcommand{\objects}{O}
\newcommand{\initial}{I}
\newcommand{\goal}{G}

\newcommand{\actions}{A}
\newcommand{\state}{S}
\newcommand{\ac}{a}
\newcommand{\poslit}[1]{{#1}^+}
\newcommand{\neglit}[1]{{#1}^-}
\newcommand{\successor}{\text{succ}}
\newcommand{\plan}{\pi}
\newcommand{\obj}[1]{{\sf{#1}}}

\definecolor{primary}{HTML}{263238}   %
\definecolor{accent}{HTML}{1565C0}    %
\definecolor{logic}{HTML}{C62828}     %
\definecolor{cond}{HTML}{2E7D32}      %
\definecolor{divider}{HTML}{CFD8DC}   %

\newcommand{\lneg}{\textcolor{logic}{\neg}}
\newcommand{\lcond}{\textcolor{cond}{\rhd}}
\newcommand{\midsize}{\fontsize{8.5pt}{10.2pt}\selectfont}

\usepackage[textsize=tiny]{todonotes}

\icmltitlerunning{On the Ability of Transformers to Verify Plans}

\begin{document}

\twocolumn[
  \icmltitle{On the Ability of Transformers to Verify Plans}

  \icmlsetsymbol{equal}{*}

  \begin{icmlauthorlist}
    \icmlauthor{Yash Sarrof\textsuperscript{\quarternote}}{saar}
    \icmlauthor{Yupei Du\textsuperscript{\twonotes}}{saar}
    \icmlauthor{Katharina Stein\textsuperscript{\twonotes}}{saar}
    \icmlauthor{Alexander Koller}{saar}
    \icmlauthor{Sylvie Thi\'ebaux}{anu,ut}
    \icmlauthor{Michael Hahn}{saar}
  \end{icmlauthorlist}

  \icmlaffiliation{saar}{Saarland University}
  \icmlaffiliation{anu}{Australian National University}
  \icmlaffiliation{ut}{University of Toulouse / LAAS-CNRS}

  \icmlcorrespondingauthor{Yash Sarrof}{ysarrof@lst.uni-saarland.de}
  \icmlkeywords{Machine Learning, ICML}

  \vskip 0.3in
]

\printAffiliationsAndNotice{\textsuperscript{\quarternote}Led theoretical work \quad \textsuperscript{\twonotes}Led empirical work\\}  %
\begin{abstract}
Transformers have shown inconsistent success in AI planning tasks, and theoretical understanding of when generalization should be expected has been limited. 
We take important steps towards addressing this gap by analyzing the ability of decoder-only models to verify whether a given plan correctly solves a given planning instance. To analyse the general setting where the number of objects -- and thus the effective input alphabet --  grows at test time,  we introduce $\infrasp$, an extension of $\CRASP$ designed to establish length generalization guarantees for transformers under the simultaneous growth in sequence length and vocabulary size. Our results identify a large class of classical planning domains for which transformers can provably learn to verify long plans, and structural properties that significantly affects the learnability of length generalizable solutions. 
Empirical experiments corroborate our theory. 
\end{abstract}

\section{Introduction}

Transformer based Large Language Models (LLMs) have demonstrated remarkable capabilities, but their success 
in the field of \emph{AI planning} has been mixed. Planning involves computing a sequence of actions to transform an initial state into a desired goal state. To circumvent combinatorial explosion, planning state spaces are represented symbolically, typically using variants of the STRIPS formalism as standardized in the PDDL planning definition language \cite{fikes:etal:71,haslum:etal:19}.
LLMs have made considerable strides in solving planning problems through apt prompting, fine tuning, and programmatic plan or heuristic generation \cite{yao2023react,stein2025llmactionchoice,pallagani2023plansformer,silverGeneralizedPlanningPDDL2024,chen:etal:25,aghzal2025surveylargelanguagemodels}, 
but there is also evidence pointing to their \emph{inability} to \emph{reliably} solve planning problems,
e.g.\ under the slogan ``LLMs still can't plan''
\cite{valmeekam2022large,valmeekam:etal:24}.
On many 
benchmarks, transformers tend to lose track of the world state,
propose inapplicable actions, or stop before reaching the goal
-- especially as plans grow longer %
\cite{huang:etal:25,
fritzsche-et-al-aaai2026a}. 
The analysis of LLM planning failures in the literature is almost exclusively empirical and a clear theoretical analysis is lacking.

\begin{figure}
    \centering
    \includegraphics[width=\linewidth]{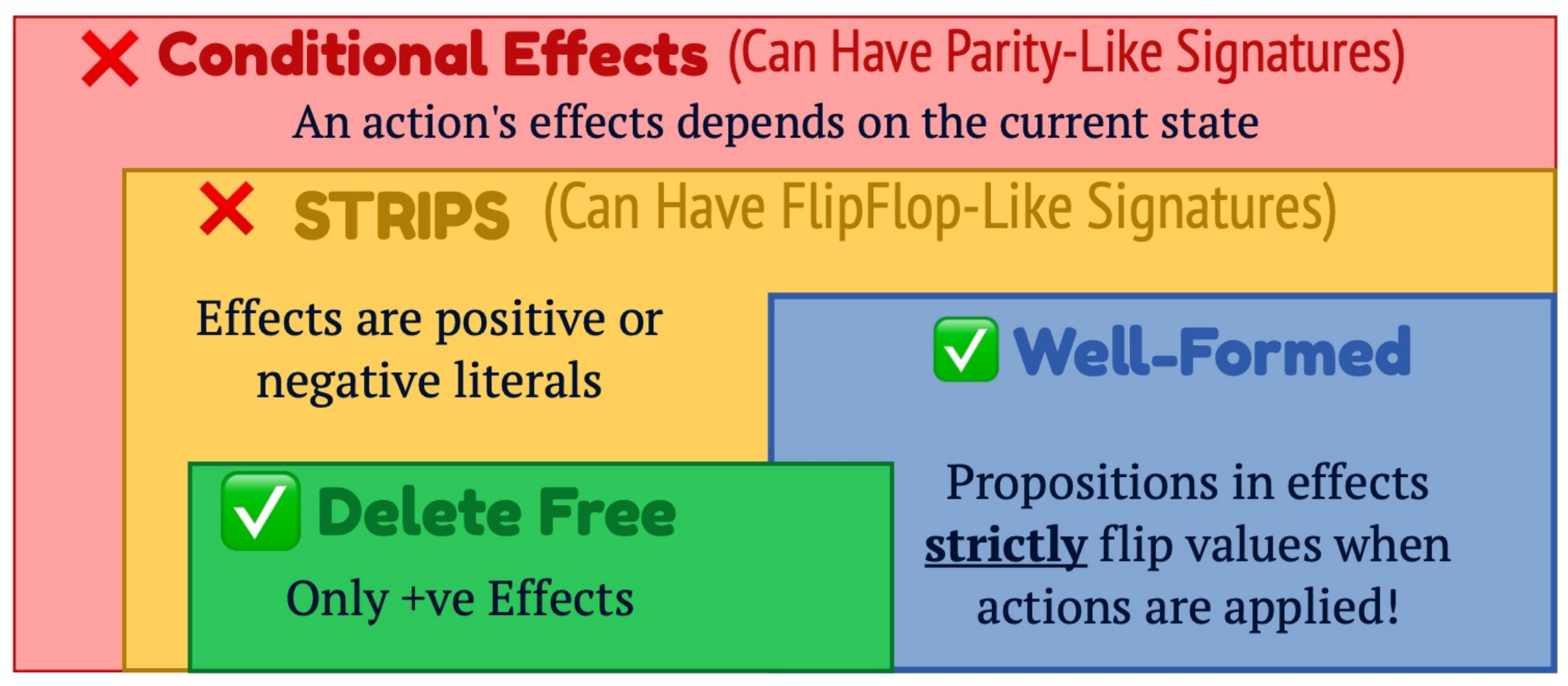}

    \caption{We study plan verification in the \emph{fixed-universe} setting, where the object set is fixed and the plan length grows, and the \emph{variable-universe} setting, where longer test plans may contain more objects than seen during training. We study four classes: delete-free domains that allow only positive effects; well-formed domains where propositions listed in the effects strictly change value when the action is applied; STRIPS domains without any restriction on propositions appearing in effects or preconditions; and domains with conditional effects which permit actions to have differing effects conditioned on the current state. We prove that transformers can learn to generalizably verify plans for delete-free and well-formed domains. We also show that there exist STRIPS and conditional-effect domains for which transformers are not expected to learn to generalizably verify plans. We use $\CRASP$, a length generalization framework for transformers \cite{yangcounting2024, huangformal2025} for proving these results for the fixed universe setting. We introduce $\infrasp$, an extension of $\CRASP$ as a tool to give length generalization guarantees for Transformers in cases where the test time vocabulary grows, and use it to prove our results in the variable universe setting.}
    \label{fig:overview}
\end{figure}

This paper contributes to the \emph{theoretical understanding
of why transformers succeed on certain planning tasks and fail on others}. We focus specifically on \emph{plan verification,} which decides for a given planning
instance and a given plan whether the plan correctly solves the
instance. 
Furthermore, we focus on the question of \emph{length generalization:} can a transformer learn to verify long plans if it is trained only on short plans? If not, then they cannot fully learn to verify plans from any finite amount of training data.
A key technical challenge in understanding how transformers generalize on plan verification is that existing techniques for length generalization are limited to the case where the token vocabulary at training and inference time is the same. In this case, a recent line of work has established a theoretically principled framework for predicting length generalization of transformers based on the expressiveness of the $\CRASP$ language \cite{yangcounting2024, huangformal2025, yang2025kneedeep, jobanputra2025born, jiang2025softmax, chen2025nonasymptotic} and its variant  $\CRASPplshort$\footnote{\citet{huangformal2025} call it $\CRASPpl$.} \cite{huangformal2025, jobanputra2025born}. It is known that transformers can length-generalize on problems that can be expressed in $\CRASPplshort$, and that they empirically tend not to length-generalize otherwise. We use the $\CRASP$ framework to analyze plan verification for the case where the planning instances at training and test time contain the same objects (Fixed Universe); but of course an even more interesting question is whether transformer can generalize to instances with a \emph{larger} set of objects (Variable Universe). This case is out of reach of a $\CRASPplshort$-based analysis.

We therefore introduce \infrasp,  which extends $\CRASP$ with the ability to handle increasing alphabets during test time,
and provides length generalization guarantees under a formal learning model inspired by \citet{huangformal2025}. This is a nontrivial technical result with ramifications beyond plan verification.
Using our frameworks, we establish the following results (Overview in Figure~\ref{fig:overview}). Plan verification is in \infrasp\ if it is
\emph{delete-free} (i.e. actions only have positive effects) or \emph{well-formed} (i.e. each of their effects necessarily changes the state). Full STRIPS plan verification is not in \infrasp, nor is plan verification with conditional effects. 
This implies that certain structural properties 
-- such as well-formedness which holds of a large class of classical planning domains --
can enable the learning of length generalizable solutions, whereas the  latter
is intractable for unconstrained plan verification with transformers. We confirm our theoretical findings experimentally on a number of planning domains.

\section{Background \& Task Definition}

\subsection{Planning} \label{subsec:planning_definition}
Planning involves finding a path from an initial to a goal state in a huge transition system where we transition between possible states of the world with actions. This problem is compactly represented in terms of a \emph{domain} (predicates representing states and action schemas describing state transitions) and an \emph{instance} that describes the particular set of objects present in the world, the initial state, and goal. 

A planning \emph{domain} is a pair $\domain = \langle \predicates, \actionschemas \rangle$ where

1. $\predicates$ is a finite set of \emph{predicates}. A predicate $p \in \predicates$ has the form $p(x_1,\ldots,x_{n_p})$ where $x_i$ are arguments $p$ takes.

2. $\actionschemas$ is a finite set of \emph{action schemas}. Each $\as \in \actionschemas$ is a triplet $\langle \Args{\as}, \Pre{\as}, \Eff{}{\as}\rangle$
  where 
  $\Args{\as}$ are arguments, $\Eff{}{\as} = [\Cond{i}{\as} \rhd \Eff{i}{\as}]_{i=1}^\nbeff{\as}$ are \emph{conditional effects}, and 
  $\Pre{\as}$, $\Eff{i}{\as}$, and $\Cond{i}{\as}$ are sets of literals over $\predicates$ with arguments taken from $\Args{\as}$. They represent
  the \emph{preconditions} of $a$, and its \emph{effects} under various mutually exclusive and exhaustive \emph{conditions}, respectively.
  
Figure~\ref{fig:gripper} presents an example, a variant of a well-known domain which we call \grippers, where a robot must relocate balls within a set of rooms. The robot can move between rooms, pick, drop and carry balls using its grippers. Some balls are heavy. Dropping balls and picking heavy balls discharges the battery, whereas moving recharges it.

\begin{figure}[t]
  \centering
  \fboxrule=0.4pt
  \fboxsep=6pt
  \fbox{
    \begin{minipage}{0.44\textwidth}
      \midsize

      \textbf{Predicates:} $\text{room}(r), \text{ball}(b), \text{gripper}(g), \text{free}(g), \text{heavy}(b),$ \\
      $\text{charged, atRobby}(r), \text{at}(b,r), \text{carry}(b,g)$
      
      \vspace{6pt}
      
      \textbf{Action} $\text{move}(r_1, r_2)$:\\
      \hspace*{0.5em}\textbf{Pre:} $\{\text{room}(r_1), \text{room}(r_2), \text{atRobby}(r_1)\}$\\
      \hspace*{0.5em}\textbf{Eff:} $\{\text{charged}, \text{atRobby}(r_2), \lneg \text{atRobby}(r_1)\}$
      
      \vspace{6pt}
      
      \textbf{Action} $\text{pick}(b, r, g)$:\\
      \hspace*{0.5em}\textbf{Pre:} $\{\text{ball}(b), \text{room}(r), \text{gripper}(g), \text{atRobby}(r), \text{at}(b,r),$\\
      \hspace*{2.8em}$\text{free}(g), \text{charged}\}$\\
      \hspace*{0.5em}\textbf{Eff:} $\{\lneg \text{heavy}(b)\} \lcond \{\text{carry}(b,g), \lneg \text{free}(g), \lneg \text{at}(b,r)\}$\\
      \hspace*{2.8em}$\{\text{heavy}(b)\} \lcond \{\text{carry}(b,g), \lneg \text{free}(g), \lneg \text{at}(b,r), \lneg \text{charged}\}$
      
      \vspace{6pt}
      
      \textbf{Action} $\text{drop}(b, r, g)$:\\
      \hspace*{0.5em}\textbf{Pre:} $\{\text{ball}(b), \text{room}(r), \text{gripper}(g), \text{atRobby}(r), \text{carry}(b,g)\}$\\
      \hspace*{0.5em}\textbf{Eff:} $\{\text{at}(b,r), \text{free}(g), \lneg \text{carry}(b,g), \lneg \text{charged}\}$
    \end{minipage}
  }
  \caption{\label{fig:gripper} \grippers\ domain.
    We omit the empty condition sets for the effects in $move$ and $drop$ which are unconditional.}
\vspace{-14pt}
\end{figure}

A planning \emph{instance} $\instance = \langle \domain, \objects, \initial, \goal \rangle$, has a domain $\domain$, set of \emph{objects} $\objects$, \emph{initial state} $\initial$, and \emph{goal} $\goal$.

\begin{figure*}[t]
    \centering
    \includegraphics[width=0.9\textwidth]{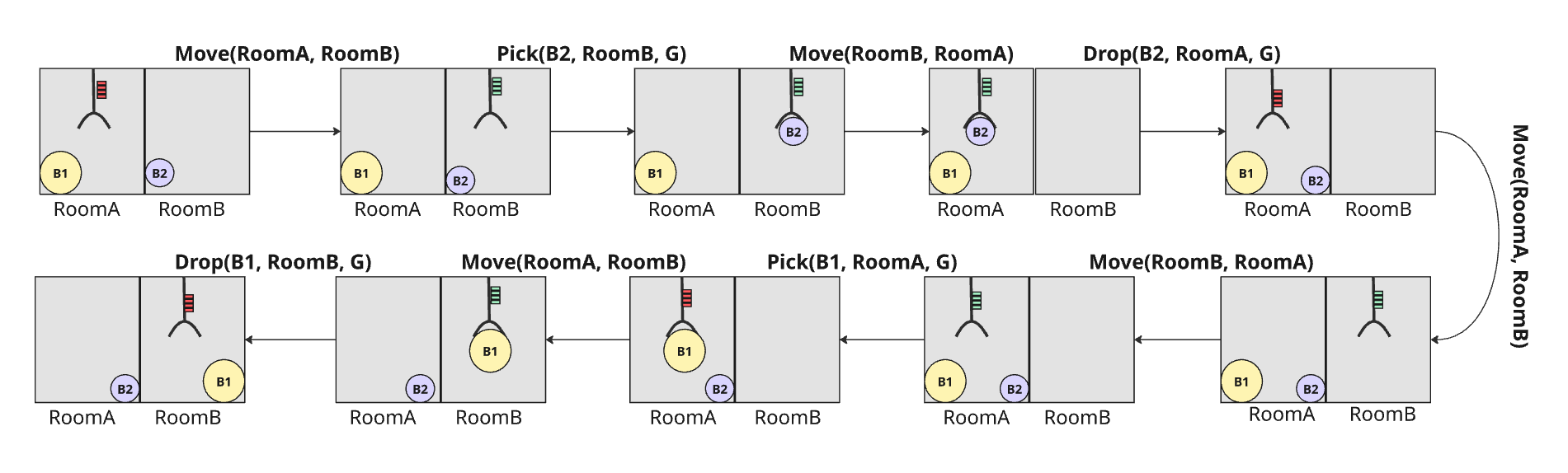}
    \vspace*{-1ex}
    \caption{Valid plan for a \grippers\ instance with 1 gripper, 2 balls and 2 rooms. The goal $\goal = \{\text{at}(\text{B1},\text{RoomB} ), \text{at}(\text{B2}, \text{RoomA})\}$ is to swap the balls in our rooms. Ball B1 is heavy and requires charge, making moving the only legal action in $\initial$. The robot then picks B2 from RoomB, moves, drops it in RoomA. It needs charge so it moves back and forth before picking B1, moving and dropping it in RoomB}
    \label{fig:plan_visual}
\end{figure*}

A \emph{proposition} $p(o_1,\ldots,o_{\arity{p}})$ is a ground instance of a predicate $p\in\predicates$ obtained by substituting the arguments of $p$ with objects from $\objects$; we write $P$ for the sets of all propositions. 
A \emph{state} is a set of propositions under the closed world assumption, i.e. this set contains exactly those propositions that hold in the state.
The goal $\goal$ is represented by a set of ground \textit{literals}. In the following, we partition sets of literals $L$ into: $\poslit{L}$ containing all propositions appearing positively in $L$, and $\neglit{L}$ containing propositions whose negation appears in $L$.
$L$ is satisfied in state $\state$, written $\state\models L$, iff $\poslit{L}\subseteq \state$ and $\neglit{L}\cap \state = \emptyset$. 
$\state$ is a goal state iff $\state \models \goal$.

Actions represent one-step transitions in the state space of the planning instance. An \emph{action} $\as(o_1,\ldots,o_{\arity{\as}})$ is a ground instance of an action schema $\as\in \actionschemas$, where arguments of $\as$ are substituted with objects from $\objects$ in its preconditions, conditions and effects; we write $\actions$ for the set of actions. An action $\ac \in \actions$ is \emph{applicable} in a state $\state$ iff its preconditions are satisfied (i.e. $\state\models \Pre{\ac}$), in which case the \emph{successor} state is obtained by removing (resp. adding) the propositions made false (resp. true) by $\ac$: i.e.
 $\successor(\state,\ac) = (\state\setminus\neglit{\Eff{i}{\ac}})\cup\poslit{\Eff{i}{\ac}}$ where $\state \models\Cond{i}{\ac}$.

A \emph{plan} is a sequence of actions $\plan = [a_1, \ldots, a_n]$, and is \emph{valid} for the planning instance $\instance$ iff all actions are applicable and it reaches the goal. That is, $\plan$ induces a sequence of states $s_1, \ldots, s_{n+1}$ such that $s_1 = I$, $s_{i+1} = \successor(s_i,a_i)$ and $s_{n+1} \models \goal$. Figure~\ref{fig:plan_visual} shows a valid plan for a small instance of the \grippers\ domain, where the robot must swap the balls in the rooms. In the following, we say that a state $\state$ is \emph{reachable} iff there exists a valid plan for the planning instance $\langle \domain, \objects, \initial, \state\rangle$.

\subsection{Planning Subclasses \& Our Generalization Setting}
\label{subsec:planning_subclass}

Our results distinguish the following subclasses of planning instances, listed from the least restrictive to the most:

\noindent\textbf{Conditional Effects:} no restrictions are applied.

\noindent\textbf{STRIPS:} no action has conditional effects. Thus, for all $\ac\in \actions$, $\nbeff{\ac}=1$ and $\Cond{1}{\ac} =\emptyset$.
Hence $\Eff{}{\ac}$ simplifies to a single literal set and we write STRIPS actions in this simplified form in the rest of the paper (e.g. see move and drop in Figure~\ref{fig:gripper}).

\noindent\textbf{Well-Formed:} a STRIPS instance where each proposition listed in the effects of an action strictly changes value whenever the action is applied. Formally, for all reachable states $\state$, all actions $\ac\in \actions$, and all $l\in \Eff{}{\ac}$ we have that $\state\models \Pre{\ac} \implies \state\models \neg l$.

\noindent\textbf{Delete-Free:} a STRIPS instance where actions \emph{only} have positive effects, i.e. for all $\ac\in A$, $\neglit{\Eff{}{\ac}} =\emptyset$.

We say that a planning \textit{domain} falls into one of the above classes if all instances of interest for this domain do.\footnote{Planning domain formulations normally assume a set of implicit constraints that all states must satisfy, and which hold in the initial state and are preserved by actions; the instances of interest are those for which $\initial$ and $\goal$ satisfy these constraints. This is only relevant to Well-Formedness, since the other three properties only depend on the domain.}

STRIPS is the standard formalism to describe planning problems. Conditional effects are a useful feature when modeling real-world problems, and are strictly more expressive than STRIPS \cite{nebel:00}. Many classical planning benchmarks are well-formed according to our definition. The Delete-Free class captures the most well-known relaxation of STRIPS, and is commonly used when computing cost-to-goal heuristics estimates to guide the search of planners \cite{helmert:domshlak:09}.

Our primary focus is to study a transformer's capability to \textbf{length generalize}: training on valid plans of a limited length and testing on longer plans. 
We study two generalization settings, providing both theoretical and experimental results for each. The domain $\domain = \langle \predicates, \actionschemas \rangle$ is fixed throughout, and the task is to decide if a given plan $\plan$ is valid. 

\textbf{1. Fixed Universe:} The set of objects $\objects$ is fixed, but $\initial$ and $\goal$ can vary per input sample. The model must learn to validate longer plans for unseen pairs of $(\initial, \goal)$.

\textbf{2. Variable Universe:} The set of objects $\objects, \initial, \goal$ can all vary across samples. Thus at test time, longer plans might also have many more objects than ever seen during training.

Note that both the fixed and variable universe settings are widely used in the planning literature, see e.g. \cite{arfae:etal:11,ferber:etal:22,rossetti:etal:24,toyer:etal:20,stahlberg:etal:22,huang:etal:25}.

\subsection{C-RASP}

Understanding and predicting transformers' generalization behavior has been of great interest \citep[e.g.][]{zhou2023algorithms,golowich2025role, izzo2025quantitative}.
Recently, expressiveness in the $\CRASP$ (Counting RASP) language \cite{yangcounting2024}
and its variant  $\CRASPplshort$ \cite{huangformal2025, jobanputra2025born} has been used to establish a theoretically principled and empirically predictive framework for predicting length generalization of transformers \cite{huangformal2025, yang2025kneedeep, jobanputra2025born, jiang2025softmax, chen2025nonasymptotic}. 
If a task can be defined by a program in $\CRASPplshort$, then decoder-only transformers with absolute positional encodings (APE) provably generalize to longer inputs under a specific formal model of training (Theorem 7 in \citet{huangformal2025}).
\citet{huangformal2025} show that this result unifies a range of prior empirical observations. Tasks exhibiting length generalization can be programmed in $\CRASPplshort$, whereas tasks provably outside this class empirically fail to generalize.
For instance, \citet{huangformal2025} proved that the Flipflop language $L = \Sigma^*be^*$ (over $\Sigma = \{a, b, e\}$) and the PARITY language $L= b^*(ab^*ab^*)^*$ cannot be expressed in $\CRASPplshort$. Languages reducible to either Flipflop or PARITY, are therefore predicted to and have been empirically observed to not length generalize \cite{liu2023exposing, hahn2024sensitive, huangformal2025, jobanputra2025born}.
Thus, we obtain positive generalization results via membership in $\CRASPplshort$, and negative results via reduction to PARITY or Flipflop.

\section{Theoretical Results} \label{sec:theoretical_results}
\subsection{Generalization over a Fixed Universe}  \label{subsec:gen_fixed}

\begin{figure}
    \centering
    \includegraphics[width=0.95\linewidth]{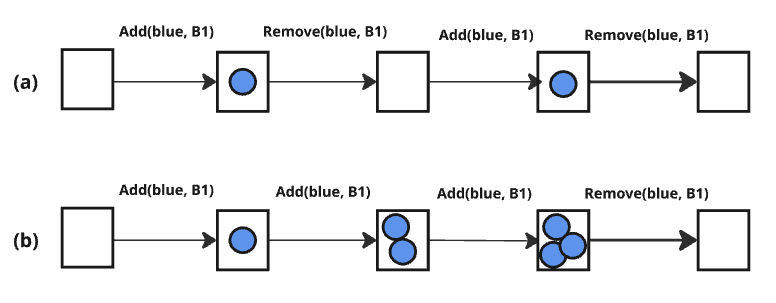}
    \caption{Example from the \colors\ domain (Fig.~\ref{fig:color}), where actions add balls of a given color
    to a bag 
    or remove all balls of a given color from a bag. (a) Well-Formed: balls of the same color cannot be added if already present, and can't be removed if not in the bag. Such alternation allows the truth of “Is %
    a color
    in the bag?” to be determined by simply counting add versus remove actions from the initial state. (b) STRIPS: In general STRIPS, determining the truth value requires identifying the last action that actually changed the state, making it equivalent to solving the FlipFlop language.
      }
    \label{fig:wellformedintuition}
\end{figure}

\begin{theorem}[Fixed Universe] \label{thm:fixed_universe_results}

    Let $\domain$ be a planning domain and $\objects$ a fixed set of objects. Consider the language $L_{\domain, \objects}$ consisting of all sequences $\langle \initial, \plan, \goal \rangle$ where $\plan$ is a valid plan for the planning instance $\langle \domain, \objects, \initial, \goal \rangle$.

    1. \textbf{(Delete-Free \& Well-Formed)} If $\domain$ is delete-free or well-formed, then $L_{\domain, \objects} \in \CRASPplshort$. 
    
    2. \textbf{(STRIPS)} There exist STRIPS domains and object sets such that $L_{\domain, \objects} \notin \CRASPplshort$. 
    
    3. \textbf{(Conditional Effects)} There exist domains with conditional effects such that $L_{\domain, \objects}$ is neither star-free nor in $\CRASPplshort$.
\end{theorem}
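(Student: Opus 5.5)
For part 1 I will write an explicit $\CRASPplshort$ program over the fixed finite alphabet of tokens encoding $\initial$, the ground actions of $\plan$, and $\goal$. Because $\objects$ is fixed, the sets $P$ of propositions and $\actions$ of ground actions are finite, so a separate Boolean program can be kept for each $p\in P$ and combined by finite conjunctions and disjunctions. The central step is to express, at every position $i$ of the plan, the predicate ``$p$ holds in state $s_i$'' as a counting formula.

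\emph{Delete-free domains.} Here $p\in s_i$ iff $p\in\initial$ or some earlier action $a_j$ ($j<i$) has $p\in\poslit{\Eff{}{a_j}}$. The first disjunct is a count of the token for $p$ in the initial-state segment, which is positive iff $p\in\initial$. The second is $\cttn{j\le i}{\mathrm{Adds}_p(j)}\ge 1$, where $\mathrm{Adds}_p$ is the finite disjunction of the action tokens adding $p$.

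\emph{Well-formed domains.} Consider a plan whose prefix is applicable so far. Whenever an action lists $p$ in its effects, the value of $p$ must strictly flip. Hence adds and deletes of $p$ strictly alternate, and the initial value fixes which comes first. It follows that $p\in s_i$ iff
\[
[p\in\initial] + \#\mathrm{Adds}_p(<i) - \#\mathrm{Dels}_p(<i) = 1 .
\]
This is a linear comparison of counts, which $\CRASP$ supports.

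The alternation argument assumes all earlier actions were applicable, so applicability is checked globally. At each action position $i$, I build a Boolean $\mathrm{App}(i)$ as the conjunction, over the literals of $\Pre{a_i}$, of the proposition tests above; this is a finite case split on the token at $i$. I then require $\cttn{j}{\neg\mathrm{App}(j)}=0$. If no position fails, induction on $i$ shows that the counting formula computes the true state, which justifies the alternation property up to that point. Finally, at the last position I check each goal literal against the proposition tests, using counts of goal tokens to detect which literals occur in $\goal$. Positional information is needed to tell which segment (initial state, plan, or goal) a token belongs to. I will handle this either with distinct token types per segment or with separator counts, so plain $\CRASP$ may suffice. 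The main subtlety is the well-formedness hypothesis, which holds only from reachable states; the induction must therefore be arranged so that it relies on the hypothesis only at states reached by applicable prefixes. Given that arrangement, and since the first inapplicable action already falsifies the conjunction, the argument goes through.

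For part 2 I will take a \colors-style STRIPS domain, as in Figure~\ref{fig:wellformedintuition}(b). It has a single proposition $p$ and actions $\mathit{add}$ (effect $p$), $\mathit{rem}$ (effect $\neg p$) and $\mathit{noop}$, all with empty preconditions; the goal is $\{p\}$ and the initial state is empty. A plan is then valid iff its last non-$\mathit{noop}$ action is $\mathit{add}$, which is exactly Flipflop $\Sigma^*be^*$ under $b\mapsto\mathit{add}$, $a\mapsto\mathit{rem}$, $e\mapsto\mathit{noop}$. Since $\CRASPplshort$ is closed under such letter-to-letter renaming and fixing of the prefix and suffix, membership would put Flipflop in $\CRASPplshort$, contradicting \citet{huangformal2025}.

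For part 3 I will use a conditional-effect toggle action $t$ with effects $\{p\}\rhd\{\neg p\}$ and $\{\neg p\}\rhd\{p\}$. Starting from the empty state with goal $\{p\}$, and allowing an irrelevant action $u$ as filler, validity holds iff the number of $t$'s is odd. This is PARITY up to complementation, which is not star-free (its syntactic monoid contains $\mathbb{Z}_2$) and not in $\CRASPplshort$ by \citet{huangformal2025}. The step I expect to need the most care is confirming that these reductions (fixed prefix and suffix, renaming) preserve non-membership for $\CRASPplshort$. I would argue this directly: any program for $L_{\domain,\objects}$ can be converted into one for Flipflop or PARITY by substituting token predicates and shifting positions by a constant.
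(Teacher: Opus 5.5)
Your proposal is correct. Parts 1 and 2 are essentially the paper's argument: add/delete counts plus an initial-state indicator, preconditions checked against the pre-action state, and the one-proposition add/delete/no-op STRIPS instance whose valid plans are $\Sigma^*be^*$.

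You also make a few things explicit that the paper leaves out.
\begin{itemize}
\item You compute $[p\in\initial]$ as a count over the initial segment. The paper's $\text{Valid}_p(i)$ instead tests $\text{OccursInitial}_p(i)$ at the current position. As written, that test is false at every plan position, so it does not actually record membership in $\initial$; your version is the correct one.
\item You justify alternation by a first-failure induction along applicable prefixes from reachable states.
\item You note that passing from $L(\Pi)$ to $L_{\domain,\objects}$ needs a closure argument, which the paper skips. For the fixed goal suffix, shifting positions is not enough. You must unroll the program over the constantly many suffix positions, starting from the last plan token.
\end{itemize}

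One encoding caveat. The paper writes a ground action as a schema token followed by object tokens, and it recognizes the action ending at $i$ with local relations $i=j+\Delta$. That is where the positional part of $\CRASPplshort$ is actually used. Your ``case split on the token at $i$'' presumes one token per ground atom. Under the paper's encoding it must be replaced by such a lookback.

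Part 3 genuinely differs. The paper takes a \lightsout\ board with all lights initially off and identifies the valid-plan language with $\ker h$ for the $\mathbb{F}_2$-linear effect map $h$. It then pulls this back along $a\mapsto\gamma\beta$, $b\mapsto\alpha\alpha$ with $e_\gamma\neq e_\beta$ to obtain exactly PARITY. That step needs closure of star-free languages under inverse morphisms, and Lemma~42 of \citet{huangformal2025} for uniform-length homomorphisms.

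Your single toggle action plus a no-op needs only a letter-to-letter renaming. You also need complement, but you can avoid it by taking $\initial=\{p\}$. So your route is more elementary. The paper's route buys a stronger statement: every \lightsout\ board with two distinct press vectors already encodes PARITY. That board is the experimental domain, and it has no no-op action, which is why the paper needs the $\alpha\alpha$ padding.
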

\begin{proof}[Proof Sketch]
(Full proof in Appendix~\ref{appendix:proof_single_instance}) These results imply that transformers can learn to verify delete-free and well-formed plans in the fixed universe setting with perfect length generalization. They also show that there are more general instances where, even with a fixed set of objects, transformers cannot length generalize.

Statement 1: $\CRASP$ highlights how a transformer using uniform attention can count over its context window. For delete-free, a proposition is true if it was initially present or the count of actions listing it as an effect is $>0$. For well-formed, the truth value of a proposition toggles (See Figure~\ref{fig:wellformedintuition}a). We can count the number of actions adding a proposition (+1 if initially present) and the ones deleting it. Thus all validity checks (action preconditions and goal) can be done through such counts, which allows us to write $\CRASPplshort$ programs to prove our positive results.

For our negative results, we construct a planning instance $\Pi$ for a given $\domain, \objects$, and show that even the language of valid plans of a fixed $\Pi$: $L(\Pi) \notin \CRASPplshort$.

Statement 2:  \citet{Lin_Bercher_2022} already showed that $L(\Pi)$ when $\Pi$ is a STRIPS instance, is a strict subset of star free regular languages. We construct a STRIPS instance $\Pi$ such that $L(\Pi) = \Sigma^*be^*$, where $\Sigma = \{ a, b, e\} $, the FlipFlop language. Thus $L(\Pi) \notin \CRASPplshort$. As shown in Figure~\ref{fig:wellformedintuition}, ascertaining the truth value of a proposition in STRIPS might require figuring out the last action that affected it. 

Statement 3: We define an instance in the Lights Out planning domain with conditional effects. We show that $L(\Pi)$ in such a case can be reduced to PARITY, making $L(\Pi) \notin \CRASPplshort$ \cite{huangformal2025}. 
\end{proof}

\subsection{Generalization over a Variable Universe}

\emph{Motivation for a new Framework:} Planning might require reasoning over a larger number of objects at test time than seen at train time.
For transformers, this corresponds not only to generalization in sequence length, but also to generalization over an effectively larger alphabet. Since $\CRASP$ assumes a fixed and finite alphabet $\Sigma$, it is not suitable to analyze such settings. 
To address this limitation, we expand on several key ideas from \citet{huangformal2025} and introduce \infrasp, an extension of $\CRASP$ designed to establish length generalization guarantees for transformers under \emph{simultaneous} growth in sequence length and vocabulary size. 
\citet{huangformal2025} framed length generalization as the asymptotic convergence of a sequence of transformers -- trained on increasing lengths -- to the same underlying algorithm, formalized as a \LimitTransformer, that would thus work at arbitrarily longer lengths than training \citep{huangformal2025, izzo2025quantitative}. 
We adopt the same view, but consider asymptotic convergence of a sequence of transformers -- trained on not just increasing lengths but also on increasing vocabulary -- to the same underlying algorithm, which we refer to as a Symbolic Limit Transformer (Formal definition in Appendix~\ref{def:symbolic-limit-transformer}). \citet{huangformal2025} used $\CRASP$ as a more human readable interface to prove tasks representable in a Limit Transformer. We will do the same with $\infrasp$ and Symbolic Limit Transformers.

\emph{Split Alphabet:} The key observation underlying our extension is that, in planning domains, not all symbols play the same role. Action names and predicate names are drawn from a fixed, finite set, while object identifiers are not. We therefore view the input alphabet (say $\Omega$) as partitioned into two parts: $\Sigma \cup \mathcal{C}$, $\Sigma$ is a fixed alphabet of domain-level symbols (actions, predicates, delimiters), and $\mathcal{C}$ is a countably increasing alphabet (isomorphic to natural numbers) used to represent object identities. 

\emph{Product Functions:} To formalize this, we parameterize transformers using \emph{product functions} \citep{huangformal2025}. These functions express the internal computations of a transformer---such as attention scores and MLP activations---as inner products of learnable parameters mediated by the model’s matrices (keys, queries, values, and feedforward layers; see Definition~\ref{def:product-param} for the full formalization). They can either reflect pairwise interactions or products that depend on only a single token or position. 
Pairwise interactions are those between different tokens, arising for instance from self attention or with the unembedding matrix. An example is a token-token interaction $c_i^TWc_j$, where $W$ is the product of several parameter matrices throughout the transformer, and $c_i, c_j$ are embeddings for tokens $i,j \in \mathcal{C}$. Product functions affected by single token/position capture products formed because of the MLPs. Along the lines of \citet{huangformal2025}, for a transformer to generalize to increasing lengths and alphabets, its internal `algorithm' cannot depend on absolute position values or object identifiers, which is formalized via the following constraints:

1. \emph{Translation Invariance}: Interactions must depend only on relative differences, not absolute indices. Crucially, we apply this to the alphabet $\mathcal{C}$ and not just positions: a token-token interaction must satisfy $f(\vc_i, \vc_j) = f(\vc_{i+\delta}, \vc_{j+\delta})$. Interactions between unrelated types (e.g., Position-Token) and those involving just one token or one position must collapse to constants. 

2. \emph{Locality}: Interactions must vanish when the relative distance exceeds a finite bandwidth $\Delta$, ensuring the algorithm relies only on local context.

Practically, invariance can be brought about during training via the \emph{offset trick}: For APE, offsets are used to train positional embeddings beyond the maximum train length. For example, if the maximum training length was 100, but generalization to length 200 was desired, during training, positions are randomly shifted to $[1+o, \dots 100+o]$ such that $0 \le o \le 100$. This discourages reliance on absolute positions, promoting invariance \cite{huangformal2025}. We can apply the same idea to our extended characters, and despite having access to a limited vocabulary during training, we can shift object identities with offsets to train embeddings beyond what is seen during training. 

\emph{The Learning Model:}
Symbolic Limit Transformer is essentially an algorithm of a transformer that satisfies these constraints over an infinite context and alphabet. To prove that finite training can actually lead to this solution, we define a learning model along the lines of \citet{huangformal2025}:

1. \emph{Hypothesis Class ($\Theta_n$):} We restrict the search space to transformers that already satisfy translation invariance, and where any parameter is specified at fixed precision (finite number of bits used to represent any parameter). \footnote{In infinite precision setups one cannot hope to identify algorithms implemented by transformers from finite data, notwithstanding the choice of the regularizer \cite{huangformal2025}}.

2. \emph{Regularizer ($\mathcal{R}$):} A regularizer penalizes model complexity (depth, norms) and non-local interactions, forcing the inference procedure to select `algorithms' that are local. Our regularizer simplifies the one used by \citet{huangformal2025}.

3. \emph{Inference Procedure:} We model learning as an idealized selection of the $T \in \Theta_n$ minimizing  $\mathcal{R}(T)$ while matching the target function on all inputs of length $n/2$  \cite{huangformal2025}. Crucially, these inputs are restricted to use only a contiguous subset of the extended alphabet of size at most $n/2$. This forces the model to infer rules that are invariant to token identities. Our main result shows that this procedure converges: by selecting the simplest local algorithm on short data, the model effectively identifies the unique Symbolic Limit Transformer, guaranteeing generalization to length $n$.\footnote{In doing so, we improve on the treatment of the Unique Copy task compared to \citet{huangformal2025}. See Remark~\ref{app:uniquecopy}.}

\emph{Guaranteed Length Generalization:} We show that expressibility by a Symbolic Limit Transformer is the necessary and sufficient condition for length generalization in our framework. 
We note that this guarantee, similar to \citet{huangformal2025}, relies on the use of specific  activation functions that in practice are only approximately represented by real-world MLPs (Heaviside in \citet{huangformal2025}, we also use 1/x). See Remark~\ref{remark:1byx}
Informally our result is the following,

\begin{theorem}[Informal version of Theorem~\ref{thm:guarantee}] \label{thm:guarantee_informal}
    A task is expressible by a Symbolic Limit Transformer iff our Inference Procedure generates a sequence of transformers $T_n$ that eventually generalize: there exists a threshold $N_0$ such that for all $m > N_0$, the model $T_m$ (selected on length $m/2$) correctly computes $f$ on all inputs up to length $m$.
\end{theorem}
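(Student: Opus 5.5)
We follow the proof of Theorem 7 in \citet{huangformal2025} closely, replacing positional translation invariance with joint translation invariance over positions and the extended alphabet $\mathcal{C}$. The central tool is a compactness argument: because the regularizer $\mathcal{R}$ is bounded along the selected sequence, only finitely many distinct "algorithms" can appear, and one of them must be a Symbolic Limit Transformer.

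\textbf{Sufficiency (expressibility implies generalization).} Suppose $f$ is computed by a Symbolic Limit Transformer $T_\infty$ with complexity $R^\ast$.

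1. We first show that for every $n$ there is a $T \in \Theta_n$ with $\mathcal{R}(T)\le R^\ast$ that computes $f$ on all inputs up to length $n$. We obtain it by restricting $T_\infty$'s translation-invariant, $\Delta$-local product functions to positions $\le n$ and object symbols in any window of size $\le n$. Consequently the minimizer $T_n$ selected by the inference procedure always satisfies $\mathcal{R}(T_n)\le R^\ast$.
2. Bounded $\mathcal{R}$ bounds the depth, the norms, and (under fixed precision) the number of distinct values each product function can take. Locality with bounded bandwidth then implies that the translation-invariant product functions are determined by finitely many numbers, namely the values on offsets $|\delta|\le\Delta$ together with the collapsed constants. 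Hence only finitely many "symbolic" configurations exist.
3. We argue by contradiction. If infinitely many $T_m$ failed at length $m$, some single configuration would recur infinitely often. Its symbolic extension $T'$ would agree with $f$ on arbitrarily long inputs over arbitrarily large contiguous alphabet windows. Because $T'$ depends only on relative offsets, agreement on length $m/2$ over windows of size $m/2$ transfers to all windows via the shift $\vc_i\mapsto \vc_{i+\delta}$. So $T'$ computes $f$ everywhere.
4. We then need a uniqueness step: any two minimal-$\mathcal{R}$ symbolic solutions must agree on every input up to length $m$ once $m$ is large. This mirrors the argument in \citet{huangformal2025}. Two local, translation-invariant programs agreeing on all inputs of length $m/2$ with $m/2\gg\Delta$ cannot differ at length $m$, because every computation is a bounded-depth composition of local interactions plus global averages. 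Here the $1/x$ activation is used to normalize counts independently of length.

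\textbf{Necessity.} If the $T_m$ eventually generalize, their regularizer values are bounded, so by the same finiteness argument some configuration recurs infinitely often. Its limiting symbolic object is itself a Symbolic Limit Transformer computing $f$.

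\textbf{Main obstacle.} The hard part is the interaction between the two growing dimensions in step 3 (and the uniqueness step 4). Agreement is only guaranteed on inputs whose object identifiers lie in a contiguous window of size $m/2$, whereas at test time the identifiers can span a window of size $m$. We would therefore first prove a lemma: for token-token product functions that are translation invariant and $\Delta$-local, the output on an input depends only on the multiset of local relative-offset patterns of object identifiers, and never on absolute identities. We would then show that every pattern occurring in a length-$m$ input already occurs, with matching aggregate counts after normalization, in some length-$m/2$ input. Controlling these aggregate counts under $1/x$ normalization is where we expect the real difficulty. If the direct argument fails, the fallback is to restrict attention to the symbolic normal form and argue through the corresponding $\infrasp$-style program instead.
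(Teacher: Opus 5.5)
Your Step 2 has a genuine gap, and both Step 3 and your necessity direction depend on it. You assume the selected $T_n$ are local with a fixed bandwidth $\Delta$, so that only finitely many symbolic configurations occur. Nothing gives you this. $\Theta_n$ imposes translation invariance but not locality, and $\Delta$ is the bandwidth of the target Symbolic Limit Transformer, i.e.\ of the competitor from your Step 1, not of the near-minimizer the procedure actually returns. Bounded $\mathcal{R}$ with fixed precision bounds only the \emph{number} of offsets at which a related-type product function is nonzero (each nonzero value costs at least $2^{-2p}$ in $\zeta$), not \emph{where} those offsets lie. For example, a $T_n$ may have a nonzero position--position interaction at distance about $n/2+5$. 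It never fires on training inputs (length $\le n/2$) but does fire at test length $n$. Such functions run through infinitely many configurations as $n$ grows, so your pigeonhole in Step 3 fails.

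The missing idea is the core of the paper's Lemma~\ref{lemma:key-lemma} (adapted from Lemma 17 of \citet{huangformal2025}): a uniform bandwidth $\tau_\infty$ must be \emph{derived} from near-optimality. One splits $\mathcal{R}=\mathcal{R}_-+\zeta$ and sets $R_0=\liminf_n\mathcal{R}_-(T_n)$ along a subsequence $\nu_i$. With $D_n(\tau)$ the $\zeta$-mass at offsets $\le\tau$, one sets $D_0=\limsup_n D_n(n)$. Restrictions of $T_{\nu_{i(n)}}$ to context $n$ lie in $U_n$ and are therefore admissible competitors, and comparing against them shows $D_\infty(\tau_\infty)=D_0$. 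Infinitely many non-$\tau_\infty$-local $T_n$ would each carry an extra $2^{-2p}$ of far-away mass, contradicting this equality. Only once $\tau_\infty$ is in hand do bounded values and precision yield a finite set of limit objects.

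Once that finiteness holds, your Step 4 and your ``main obstacle'' are unnecessary. Any limit object recurring for infinitely many $n$ agrees with $f$ on the training domains of those $n$. Every finite input (any length, any identifiers) eventually lies in such a domain, so that object computes $f$ everywhere. No transfer of local patterns or normalized counts from length $m/2$ to length $m$ is needed. The principle you invoke in Step 4 is in fact false for a fixed $m$. Take programs accepting iff the number of $a$'s is at most $K_1$, resp.\ $K_2$, with $m/2\le K_1<K_2<m$: they use no positional information and agree on all inputs of length $\le m/2$, yet differ on $a^{K_2}$. Only finiteness rules out such pairs asymptotically.

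Finally, in the necessity direction you cannot infer $\sup_n\mathcal{R}(T_n)<\infty$ from eventual generalization. The formal Theorem~\ref{thm:guarantee} builds that bound into condition 2. That direction is then exactly the same key lemma, not a separate finiteness argument.
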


\emph{Proof Sketch (Detailed Proof in Appendix~\ref{thm:guarantee})} Our inference procedure generates an infinite sequence of distinct transformers $T_1, T_2, \dots$, but translation invariance and locality ensure that they traverse only a \emph{finite set of underlying algorithms}. The regularizer bounds the structural complexity (depth, precision), while the invariance constraints ensure that the number of product functions is finite. Thus, as $n \to \infty$, the procedure sequentially rules out algorithms that would not have generalized. Since the set of candidate algorithms is finite, there must be a $N_0$ after which only the correct one remains.
\qed

\emph{$\infrasp$:} \citet{huangformal2025} used $\CRASP$ as a human-readable interface to establish expressibility results for \LimitTransformer s. We will do the same for Symbolic Limit Transformers and $\infrasp$. We define the following logical predicate that acts over our extended alphabet $\mathcal{C}$, which is used to refer to object IDs. Since generally these objects IDs are numbered, we assume that $\mathcal{C} \cong  \mathbb{N}$. 

\begin{definition}[Match Predicate]\label{def:match-predicate}
    A \textbf{Match Predicate} $\chi(i,j)$ is a conjunction of equality checks between tokens in the neighborhoods of $i$ and $j$: $\chi(i,j) := \bigwedge_{k=1}^K (\vc_{j-\delta_k} = \vc_{i-\gamma_k} + \tau_k)$,
    where $\delta_k, \gamma_k \in \mathbb{N}$ and $\tau_k \in \mathbb{Z}$ are constants and $c_i$ refers to the token at position $i$. This predicate evaluates to True if the token at position $j-\delta_k$ has the same value as the token ((shifted by $\tau_k$)) at $i-\gamma_k$  for all $k$.
\end{definition}

Such a match predicate can match tokens based on local constants $\delta, \gamma, \tau$, without  memorizing identities of any token from $\mathcal{C}$. The algorithms represented by this predicate correspond to simple pattern matching operations in local neighborhoods throughout our context window. As an example, if we wish to check whether for an action $\text{pick}(b, r, g)$, its precondition $\text{at}(b, r)$ was in the initial state, the following match predicate can help us check that they refer to the same set of objects ($i, j$ denote the end of the action and the proposition respectively)
$\chi(i, j) := (c_{i-2} = c_{j-1}) \land (c_{i-3} = c_{j-2})
$.
This predicate is our sole addition (in \textcolor{dgreen}{green}) to $\CRASP$.

\begin{definition}[$\infrasp$]\label{def:CRASP}
    Let $\Sigma \cup \mathcal{C}$ be an alphabet, where $\Sigma$ is finite, and $\mathcal{C}$ is an increasing alphabet. Let $\Psi$ be a set of \emph{binary relations} $\psi : \mathbb{N} \times \mathbb{N} \rightarrow \{0,1\}$. A $\infrasp$ program $P$ is defined as a sequence $P_1,\ldots, P_k$ of operations, where each operation is either Boolean-valued or count-valued, and can be formed according to the rules listed in Table~\ref{tab:infrasp-ops}. 
\end{definition}

\begin{table}[t]
\centering
\small
\begin{tabular}{@{}l l@{}}
    \toprule
    \multicolumn{2}{c}{\textbf{Boolean-Valued Operations}} \\
    \midrule
    \textbf{Initial} & $P(i):=Q_\sigma(i)$ \quad for $\sigma\in\Sigma$\\
    \addlinespace
    \textbf{Boolean} & $P(i):=\lnot P_1(i)$ \\
                     & $P(i):=P_1(i)\land  P_2(i)$\\
    \addlinespace
    \textbf{Constant}& $P(i):=\top$\\
    \addlinespace
    \textbf{Comparison} & $P(i):=C_1(i)\leq C_2(i)$\\
    \midrule
    \multicolumn{2}{c}{\textbf{Count-Valued Operations}} \\
    \midrule
    \textbf{Counting} & $C(i):=\cttn{j\leq i, \psi(i,j)}{P(j)}$ \\
                      & \quad for $\psi\in\Psi\cup\{\top\}$\\
    \addlinespace
    \textcolor{dgreen}{\textbf{Match}} & $C(i) := \cttn{j \le i, \chi(i,j)}$ \\
    \addlinespace
    \textbf{Conditional} & $C(i):=\cif{P(i)}{C_1(i)}{C_2(i)}$\\
    \addlinespace
    \textbf{Addition}    & $C(i):=C_1(i)+C_2(i)$\\
    \addlinespace
    \textbf{Subtraction} & $C(i):=C_1(i)-C_2(i)$\\
    \addlinespace
    \textbf{Constant}    & $C(i):=1$\\
    \bottomrule
\end{tabular}
\caption{Operations allowed in a $\infrasp$ program.}
\label{tab:infrasp-ops}
\end{table}

It should be noted that only symbols from the fixed alphabet $\Sigma$ may be accessed via unary queries $Q_\sigma$, while symbols from the extended alphabet $\mathcal{C}$ can only be compared through the match predicate.\footnote{Compared to $\CRASPplshort$ as defined by \cite{huangformal2025}, we omit operations performing modular counting on positions. These play no role in our results; we discuss technical issues about their representation in Appendix~\ref{appendix:para_discuss_periodic}} Counting operations return the number of positions $j\leq i$ where $P(j)$ and $\psi(i,j)$ which represents functions such as $i = j + \delta$, for a fixed $\delta$, thus acting as a local relation ($\delta$ cannot be arbitrarily large) for checking things in local neighborhoods. 
As in $\CRASP$, if a $\infrasp$ program is run on input $w$ with final operation $L$, then we accept $w$ if and only if $L(|w|)$ is true. 
We also write $\infrasp$ for the class of all languages accepted by some $\infrasp$ program. Any $\infrasp$ program can be simulated by a Symbolic Limit Transformer. Hence, by Theorem~\ref{thm:guarantee_informal}, the existence of a $\infrasp$ program for a task implies a guarantee of length generalization for APE transformers on that task, allowing us to prove positive length generalization results. If no $\psi \in \Psi$ is used and all constants of the type $\sigma_k, \delta_k$ are $0$, then length generalization would be guaranteed even for NoPE transformers. Thus we make a distinction between $\infrasp$ and $\infrasppl$ to denote the necessity of positional encodings. On the other hand, we show that the difficulty of FlipFlop and PARITY carries over even to $\infrasp$:
\begin{theorem} \label{theorem:parity_flipflop_not_infrasp}
    Consider the alphabet $\Sigma = \{a, b, e\}$. Then, $PARITY := b^*(ab^*ab^*)^* \notin \infrasp$. and Flip flop $:= \Sigma^*be^* \notin \infrasp$ . 
\end{theorem}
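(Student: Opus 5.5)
The plan is to reduce to the known fixed-alphabet results of \citet{huangformal2025}, namely that PARITY and Flipflop are not in $\CRASPplshort$, and then check that the one new ingredient of $\infrasp$ buys nothing on these languages. Both languages live over $\Sigma=\{a,b,e\}$, so every input word contains no symbol of the extended alphabet $\mathcal{C}$. I would fix an arbitrary $\infrasp$ program $P$ and, by induction on its operations, produce an equivalent program over $\Sigma$ that uses only $\CRASP$ operations, with counting relations drawn from $\Psi\cup\{\top\}$.

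\textbf{Step 1: eliminating Match.} Every operation of $\infrasp$ other than Match is already a $\CRASP$ operation, so only the Match operation $C(i):=\cttn{j\le i,\chi(i,j)}$ needs attention. Here $\chi(i,j)=\bigwedge_k(\vc_{j-\delta_k}=\vc_{i-\gamma_k}+\tau_k)$, and the terms $\vc_\cdot$ refer to the $\mathcal{C}$-valued tokens at the shifted positions. On a word in $\Sigma^*$ there are no such tokens, so I must settle the semantics in this case. Under the convention that an equality involving a non-$\mathcal{C}$ position is false (or, equivalently for our purpose, a position-independent value), every conjunct has a constant truth value on all inputs in $\Sigma^*$. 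The same holds for positions that fall out of range ($j-\delta_k<1$), since $\Sigma^*$ inputs still carry no $\mathcal{C}$ tokens there. Hence $\chi(i,j)$ is either identically false or identically true on these inputs.

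\textbf{Step 2: concluding the reduction.} In the identically-false case, $C(i)$ equals $0$, which is expressible as $1-1$. In the identically-true case, $C(i)=\cttn{j\le i}{\top}$, the ordinary position count. Substituting these expressions, induction on program length yields a $\CRASP$ program (hence a $\CRASPplshort$ program) computing the same Boolean value at every position of every word in $\Sigma^*$. If $\Psi$ is permitted to contain local relations such as $i=j+\delta$, the resulting program still sits inside $\CRASPplshort$, which includes such local relations. So if $\infrasp$ recognized PARITY or Flipflop, $\CRASPplshort$ would recognize it as well, contradicting the results of \citet{huangformal2025} quoted in the background section.

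\textbf{Main obstacle.} The subtle point is the semantics of Match when the referenced positions hold $\Sigma$ symbols rather than $\mathcal{C}$ symbols. Suppose instead that equality is allowed to compare $\Sigma$ symbols, e.g.\ $\vc_{j}=\vc_{i}$ meaning the two tokens coincide. Then $\chi$ becomes a genuinely input-dependent pairwise predicate, and Step 1 fails. In that case I would argue differently. With $\tau_k=0$, each equality $\vc_{j-\delta}=\vc_{i-\gamma}$ unfolds into the finite disjunction $\bigvee_{\sigma\in\Sigma} Q_\sigma(j-\delta)\wedge Q_\sigma(i-\gamma)$. Distributing the conjunction yields a finite disjunction of terms of the form $\alpha(i)\wedge\beta(j)$, where $\alpha,\beta$ are local Boolean formulas. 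The count then becomes a conditional sum over the finitely many $i$-side patterns $\alpha$ of counts $\cttn{j\le i}{\beta(j)}$, handled by inclusion–exclusion to avoid double counting. The local shifts are realized through local relations in $\Psi$ or through shifted $\CRASPplshort$ predicates. This again reduces to $\CRASPplshort$. Verifying that these local shifts are available in $\CRASPplshort$ is the step I expect to need the most care.
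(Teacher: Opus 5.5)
Your proof is correct, and its skeleton matches the paper's: eliminate Match on inputs over the finite alphabet, land in $\CRASPplshort$, and contradict the known non-membership of PARITY and Flipflop. The difference is in how Match is eliminated.

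The paper proves a general lemma (Lemma~\ref{lemma:fixed_inf_is_crasp}). When the alphabet is finite, each conjunct $\vc_{j-\delta_k}=\vc_{i-\gamma_k}+\tau_k$ is unfolded over the finitely many symbol pairs $(\sigma,\sigma+\tau_k)$. Each side is checked by local lookbacks $\cttn{j\le i,\, i=j+\ell}{Q_\sigma(j)}\ge 1$, and the count is assembled from an $i$-side test and a count of $j$-side patterns. This is precisely your fallback.

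Your primary Step~1 instead observes that words in $\Sigma^*$ contain no $\mathcal{C}$ tokens, so Match collapses to a constant. That is shorter and more elementary. However, it rests on a convention (equalities at $\Sigma$ positions are false) that the paper never fixes. The lemma is convention-free and also covers finite fragments of $\mathcal{C}$ itself, which is why the paper routes through it. Keeping your fallback is therefore what makes your proof robust.

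Three small points on the fallback:
\begin{enumerate}
\item Allow $\tau_k\neq 0$ as well, unfolding over pairs $(\sigma,\sigma+\tau_k)$ exactly as for $\tau_k=0$.
\item Inclusion--exclusion is unnecessary. Condition on the full tuple $t$ of symbols at positions $i-\gamma_1,\dots,i-\gamma_K$; these cases are mutually exclusive, so $C(i)=\sum_t \cif{\alpha_t(i)}{\cttn{j\le i}{\beta_t(j)}}{0}$, with $0$ written as $1-1$.
\item The local shifts you flagged are routine. $\CRASPplshort$ already has the relations $i=j+\ell$ in counting, which is exactly how the paper performs lookbacks in the proof of Theorem~\ref{thm:fixed_universe_results}.
\end{enumerate}
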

\begin{proof}[Proof sketch]
If the input alphabet $\Omega = \Sigma \cup \mathcal{C}$ is finite, then $\infrasp$  reduces to $\CRASPplshort$, as the match predicate can then be simulated by the syntax of $\CRASPplshort$ itself (Detailed Proof in Appendix, Lemma~\ref{lemma:fixed_inf_is_crasp}). Thus, existence of a program for FlipFlop or PARITY in $\infrasp$ would imply existence of a program in $\CRASPplshort$, which \citet{huangformal2025} showed to be impossible.  
\end{proof}

\begin{theorem} \label{thm:gen_variable_universe}
Let $\domain$ be a planning domain. Consider the language $L_{\domain}$ consisting of all possible sequences $\langle \initial, \pi, \goal \rangle$ where $\plan$ is a valid plan 
for
some planning instance $\langle \domain,\objects,\initial,\goal\rangle$. 

1. \textbf{(Delete-Free / Well-Formed)}
    If $\domain$ is delete-free or well-formed, then $L_{\domain} \in \infrasppl$. In particular, generalization from both limited number of objects and plan lengths during training to a increased number of objects and plan lengths is expected. \label{thm:gen_variable_universe_wf}

    2. \textbf{(STRIPS / Conditional Effects)} There exists STRIPS domains as well as domains with conditional effects
    for which $L_{\domain} \notin \infrasppl$. Generalization to instances with longer plans or increased number of objects is thus not expected. In particular, there are domains where $L_{\domain}$ subsumes FlipFlop or PARITY.
    \label{thm:gen_variable_universe_strips}
\end{theorem}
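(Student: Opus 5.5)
For part 1, I will write an explicit $\infrasppl$ program over the split alphabet. Domain-level symbols (action names, predicate names, delimiters separating $\initial$, $\plan$, $\goal$) lie in the finite alphabet $\Sigma$, and object identifiers lie in $\mathcal{C}$. Since arities are bounded by the fixed domain $\domain$, every ground action or proposition occupies a window of bounded width. Within such a window, $Q_\sigma$ reads off the schema or predicate name at a fixed offset, and the object tokens sit at fixed offsets before the end position.

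The central primitive is a count $\mathrm{Occ}_{q}(i)$. Here $i$ is the end of an action token $a(o_1,\dots,o_m)$, and $q$ is a literal in $\Pre{a}$, in $\Eff{}{a}$, or in $\goal$, whose arguments are a fixed selection of the $o_k$. For each schema $a'$ and each literal $\ell$ of the same predicate in $\Eff{}{a'}$, I form a Match predicate $\chi(i,j)$ that equates the relevant argument offsets of $i$ with those of $j$. I combine it via a Conditional with a Boolean check, evaluated at position $j$, that $j$ ends an occurrence of $a'$ inside the plan segment. The Boolean check is obtained by local $\psi$ relations and a prefix count of delimiters.

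Strictly, Match as defined counts $j$ satisfying $\chi$ alone. I will therefore either encode the $\Sigma$-conditions at $j$ into the match, by guaranteeing that in well-formed inputs only the intended position type can satisfy $\chi$ at those offsets, or extend the operation in the obvious way. Because the window shapes are fixed, the spurious matches can be enumerated and subtracted by finitely many further Match counts. Summing over the finitely many $(a',\ell)$ gives $\mathrm{Add}_q(i)$ and $\mathrm{Del}_q(i)$: the number of earlier plan actions adding, respectively deleting, the ground proposition named by $q$ at $i$. Similarly, $\mathrm{Init}_q(i)\in\{0,1\}$ is a Match count against the $\initial$ segment.

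Then:
\begin{itemize}
\item \textbf{Delete-free:} $q$ holds before step $i$ iff $\mathrm{Init}_q+\mathrm{Add}_q\ge 1$.
\item \textbf{Well-formed:} $q$ holds iff $\mathrm{Init}_q+\mathrm{Add}_q-\mathrm{Del}_q\ge 1$. Along any valid prefix, every effect of an applied action genuinely toggles its proposition, so adds and deletes alternate starting from the initial value.
\end{itemize}
The validity of the prefix is ensured by a running count of failing positions. Let $\mathrm{Bad}(i)$ be the Boolean disjunction, over the finitely many literals of $\Pre{a}$, of "precondition violated"; for a negative literal, "violated" means the proposition is true. The program accepts iff $\cttn{j\le |w|}{\mathrm{Bad}(j)}=0$ and, at the final position, each goal literal (read from the $\goal$ segment, handled dually) is satisfied.

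For the goal, whose literals are many rather than one per position, I will instead count goal positions whose literal fails, using the same counts evaluated from goal positions and matched against plan actions. The count $\mathrm{Add}_q$ for a goal literal is then over the whole plan. Correctness on invalid inputs requires care: after the first bad action, later counts may be meaningless, but the input is rejected anyway, so only the prefix up to the first violation matters.

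The main obstacle I anticipate is exactly this expressiveness bookkeeping: the Match operation restricts attention at $j$ only through $\chi$, and I need it to respect segment and schema types. I would first try to make all type information local, so that it can be folded into the match via constant shifts; failing that, I would use inclusion–exclusion of counts.

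For part 2, I will use Theorem~\ref{theorem:parity_flipflop_not_infrasp}. I take the STRIPS instance from Theorem~\ref{thm:fixed_universe_results}(2), whose valid plans form FlipFlop, and the Lights Out instance from part (3), which reduces to PARITY. Suppose $L_{\domain}\in\infrasppl$. Fixing the instance's objects, initial state and goal as a fixed prefix and suffix, I restrict the program to inputs of that form and replace each action token by a fixed string. Since $\infrasp$ on a finite alphabet collapses to $\CRASPplshort$ (Lemma~\ref{lemma:fixed_inf_is_crasp}), and $\CRASPplshort$ is closed under such fixed-context restriction and letter-to-block substitution (a routine simulation, absorbing the prefix offsets via local relations), this would yield FlipFlop or PARITY in $\CRASPplshort$, contradicting \citet{huangformal2025}.
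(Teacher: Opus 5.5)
Your proposal matches the paper's proof essentially step for step: segment selectors from separator counts, Match-based counts of initial occurrences and of adds and deletes for each precondition and goal literal, the delete-free and net-count well-formed tests, a global count of failing positions, and for part 2 the reduction to FlipFlop/PARITY through the finite-alphabet collapse (Lemma~\ref{lemma:fixed_inf_is_crasp}), which you spell out more carefully than the paper's one-line corollary. Of your three fixes for the Match obstacle, only the guarded extension works in general, because pure object-equality matches cannot tell $\text{add}(c,b)$ from $\text{remove}(c,b)$ in \colors; the paper also relies on this extension, implicitly, by placing $\text{Curr}_\beta(j)$ and $\text{InInit}(j)$ inside its counts.
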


\begin{proof}[Detailed Proof is in Appendix~\ref{thm:gen_variable_universe_re}]

To show our positive results, we construct $\infrasp$ programs along the same lines as for Theorem~\ref{thm:fixed_universe_results}. The crucial difference is the use of the match predicate to match actions to propositions, without relying on specific object identities.
Since STRIPS and domains with conditional effects have instances with Flip-Flop and PARITY like signatures, Theorem~\ref{theorem:parity_flipflop_not_infrasp} applies.
\end{proof}

\section{Experiments}

\subsection{Planning Domains and Datasets Construction} 
We develop three domains to empirically validate our predictions regarding the learnability of different planning subclasses.
For each dataset we generate correct and incorrect plans, where the incorrect plans are constructed to be either 
\textit{incomplete} (where the goal is not fully satisfied) 
or \textit{non-executable} (where an action's preconditions are violated).\footnote{
    For some domains, like the standard \colors , since all actions are applicable, invalid plans can only be incomplete.
    Details of our domain design, data generation, and setups are in Appendix~\ref{appendix:experimental_details}.
}
\textbf{\grippers.} We modify the \grippers\
domain (Figure~\ref{fig:gripper}),
and create well-formed and delete-free variants. 
First, we eliminate the conditional effects by separating \texttt{pick} into two actions and remove charged from the preconditions of picking up not heavy balls.
Second, we create the well-formed variant by extending the preconditions and the delete-free variant by removing the delete effects.

\textbf{\colors.} This domain involves placing colored balls into bags, 
where actions add or remove a color from a specific bag, and goals specify which colors each bag should contain.
We design well-formed and standard STRIPS variants that differ in handling redundant operations: 
the well-formed variant enforces that a color can be added or removed 
only if it is absent or present in the bag, respectively, 
while the standard STRIPS formulation treats these as no-ops.

\textbf{\lightsout.} We formalize the \lightsout\ game as our final domain. 
It consists of lights (on/off) on a $5 \times 5$ grid where pressing a light toggles it and its neighbors. 
The goal is to turn all lights off. 
We consider two variants: one with conditional effects and a well-formed variant. 
In the conditional effects variant, the post-press states of the pressed light and its neighbors depend on their states before pressing; 
the well-formed variant uses separate actions for pressing each light under every possible combinations of its and its neighbors' states. 
For example, pressing the corner light $\obj{L_{00}}$ yields $2^3$ different actions, 
corresponding to the states combinations of $\obj{L_{00}}$ and its neighbors $\obj{L_{10}}$ and $\obj{L_{01}}$. 

\subsection{Experimental Setup \& Results}

\begin{figure}[t]
  \centering
  \includegraphics[width=0.95\columnwidth]{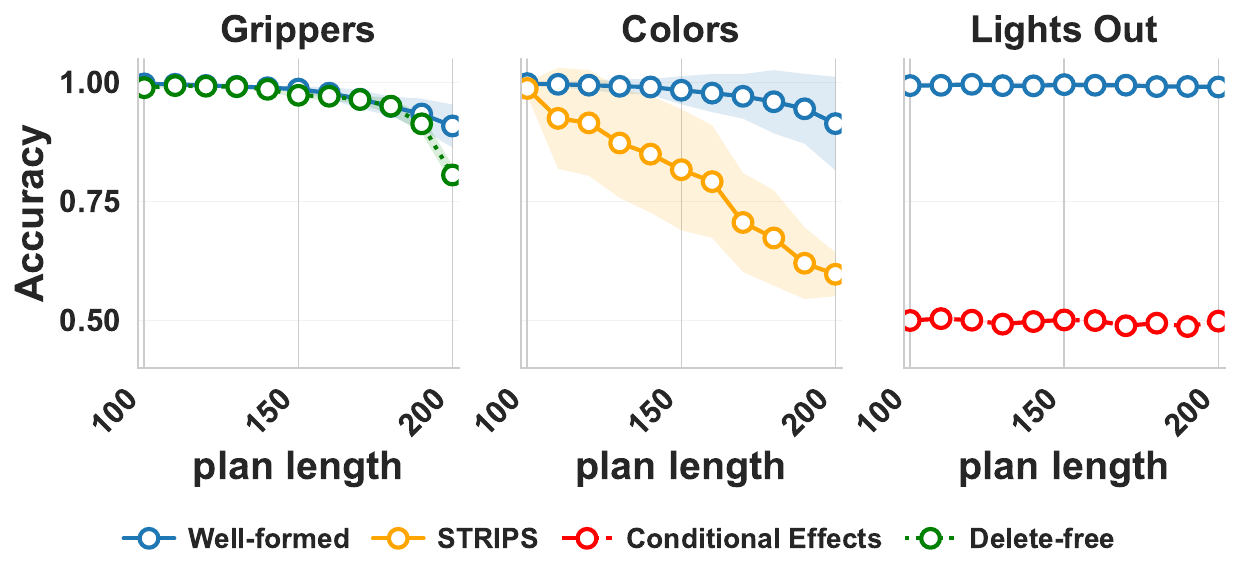}
  \caption{ID (100-bucket) OOD (110-200, 10-bucket) accuracy for our domains. We compare the well-formed variant to a minimal pair: delete-free for Grippers, STRIPS for Colors, and conditional effects for Lights Out. Well-formed and delete-free has good accuracy even at higher lengths, accuracy for STRIPS drops with plan length and performance for conditional effects is chance level. Our detailed results are in Figure~\ref{fig:acc_detailed_ape}.}
  \label{fig:results-summary}
\end{figure}

\begin{figure}
    \centering
    \includegraphics[width=\linewidth]{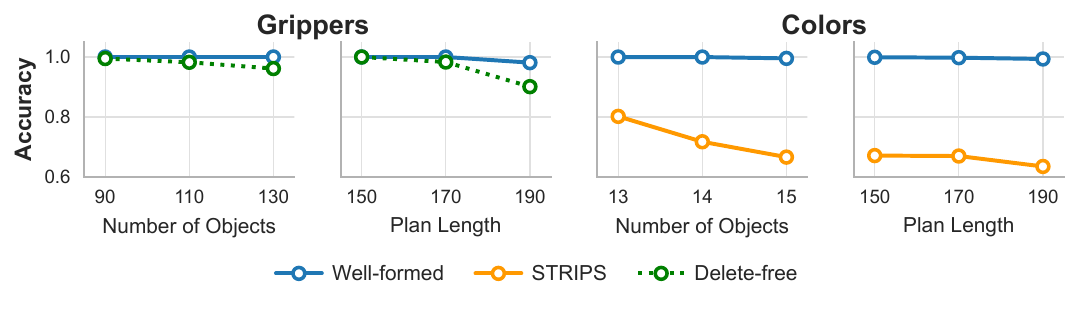}
    \caption{Controlled OOD evaluations separating generalization to more objects from generalization to longer plans. We either fix plan length as $100$ and increase number of objects, or fix number of objects (15 for \colors, 140 for \grippers) and increase plan length. Well-formed variants generalize along both dimensions, while STRIPS \colors\ declines; delete-free \grippers\ remains strong with mild degradation. More details in Appendix~\ref{appendix:additional_empirical}.}
    \label{fig:ablation}
\end{figure}

\textbf{Setup:} We use the GPT-2 architecture with APE and pre-layer normalization, 
and train it with the standard causal language modeling objective.
The initial state, plan, goal, and a verdict token indicating plan validity (`correct'/`incorrect'), each separated by a custom delimiter form our inputs: 
{\setlength{\abovedisplayskip}{2pt}
 \setlength{\belowdisplayskip}{2pt}
\[
\texttt{<init>}\initial\;\texttt{<plan>}\pi\;\texttt{<goal>}\goal\;\texttt{<verdict>}V,
\]
}
where $V$ refers to the verdict label. 
We train on plans of 11--100 actions and validate on 101--200 actions 
(length here refers to the number of actions in $\pi$, not tokens). The numbers of objects in {\colors} and {\grippers} grow with plan length, whereas {\lightsout}
has a fixed number of objects. We use a fixed number of objects for {\lightsout} to show that our negative result regarding conditional effects holds even in the setting described in Theorem~\ref{thm:fixed_universe_results}, whereas the increased number of objects in the other domains helps validate the positive results regarding well-formedness and being delete-free, as described in Theorem~\ref{thm:gen_variable_universe}.
We use four random seeds and select the best checkpoint on a held out validation set. 
At test time, the verdict token is omitted for the model to generate it. We evaluate on both in-distribution (ID, lengths 11--100) and out-of-distribution (OOD, lengths 101--200) test sets.

\textbf{Results:} Our experiments (Figure~\ref{fig:results-summary}) corroborate our theory. For {\lightsout}, we observe a sharp contrast: the conditional effects formulation yields chance-level performance even on a \emph{fixed} set of objects, whereas the well-formed variant generalizes perfectly despite having an exponential action space, corroborating Theorem~\ref{thm:fixed_universe_results}. 
Our models successfully generalize to \emph{longer plans} and \emph{more objects} in {\grippers} and {\colors} as predicted by Theorem~\ref{thm:gen_variable_universe}.
Specifically, the well-formed variants of both—as well as the delete-free variant of {\grippers} (all in $\in \infrasp$)—achieve high accuracy, while for the STRIPS variant of {\colors} ($\notin \infrasp$), the performance dips. 
We disentangle the effects of object-count growth and plan-length growth 
in Figure~\ref{fig:ablation}, by varying either the number of objects or the plan length and keeping the other fixed. The same pattern holds: well-formed and delete free variants do show generalization for \colors\ and \grippers, where as the STRIPS variant for \colors\ declines when either of the objects or plan length increases.
Overall, generalization succeeds where $\infrasp$ membership holds and fails otherwise. \footnote{Our code is available at: \url{https://github.com/coli-saar/transformers_plan_verification}}

\section{Discussion}
\label{sec:discussion}
\subsection{Implications}
Our results establish a sharp divide in the learnability of plan verification by transformers: generalization in delete-free and well-formed instances is possible; but for unconstrained STRIPS domains it is not, with conditional effects making it even harder. Our negative results highlights why transformers have had inconsistent success in planning tasks. Conversely, our positive results are significant given that a broad class of practical planning problems are well-formed. 
For instance, 70\% of the domains from the Learning Track 2023 \cite{taitler-et-al-aimag2024}, and a lot of domains in the International Planning Competition (IPC) \citep{McDermott2000AIPS} come with benchmark problem generators that can only produce well-formed instances. We achieve this breadth through a ``semantic'' definition of well-formedness in terms of the actual dynamics of the actions and states; much more permissive than the syntax-based definition of \citet{goesgens:etal:25}, which is rarely satisfied by planning benchmarks. 
We also find that the exact formulation of the planning task matters: for instance, the \lightsout\ game when formulated with conditional effects does not generalize, but a compilation of this domain into a well-formed one does. This confirms that the \emph{structural complexity} of the planning problem, rather than just the number of actions or objects, drives the ability of transformers to learn it. Thus, if one can reformulate the problem structure, transformers could have marked improvement in performance. 

It is also worth noting that the ramifications of Theorem~\ref{thm:guarantee_informal} (More Formal Statement~\ref{thm:guarantee}),  and $\infrasp$ which establishes a framework to give length generalization guarantees for transformers with growing alphabets at test time, are far beyond plan verification. For instance, they open up the possibility to reason about how powerful can transformers with a growing index hint vocabulary be, as even a limited set of index hints have shown to help improve generalization in transformers \cite{zhou2023algorithms}.

\subsection{Related Work}
Transformers have been applied extensively to planning tasks \cite{yao2023react, silverGeneralizedPlanningPDDL2024, stein2025llmactionchoice, hirsch2024whatsplanevaluatingdeveloping, pallagani2023plansformer, fritzsche-et-al-aaai2026a, chen:etal:25}. While LLMs can generate plans when test data instances are of the same size as training \cite{rossetti:etal:24}, they struggle on larger instances \cite{huang:etal:25, fritzsche-et-al-aaai2026a}. A recurrent observation is a sharp performance degradation on longer plans \cite{stein2025llmactionchoice, valmeekam_reasoning, hirsch2024whatsplanevaluatingdeveloping, pallagani2023plansformer, Hazra_Zuidberg_Dos_Martires_De_Raedt_2024}, underscoring the need to better evaluate length generalization in planning \cite{chiari2025planning}. Comparatively, \emph{plan verification} has only been studied by a few \cite{stechly2025on, NEURIPS2023_7a92bcde}. \citet{stechly2025on} found that LLMs struggle to verify self generated plans, while \citet{NEURIPS2023_7a92bcde} found LLMs to be better at verification than generation for a STRIPS domain, but overall performance remained low.

Prior work has also tried to connect computational complexity of classical planning \cite{bylander1994computational} to formal language theory \cite{holler2016assessing, Lin_Bercher_2022}, albeit not in the context of transformer expressiveness. In contrast, a rich line of research has looked into transformers expressiveness through the lens of formal language theory and computational complexity. This includes the design of formalisms that capture transformer computations, such as RASP \cite{weiss2021thinking}, B-RASP \cite{yang2024masked}, C-RASP \cite{yangcounting2024} and results linking models to complexity classes such as $\mathrm{TC}^0$ \cite{merrill2023parallelism, strobl2023average}. \citet{zhou2023algorithms} conjectured that only programs expressible in RASP-L exhibit length generalization, a notion formalized by \citet{huangformal2025} using the C-RASP formalism \cite{yangcounting2024}. The C-RASP length generalization framework has since seen substantial success, with results demonstrated at scale \cite{jobanputra2025born}, quantitative bounds on training length \cite{izzo2025quantitative}, and even predictions of the model depth required to solve certain problems \cite{yang2025kneedeep}. More broadly, length generalization remains an active area of research, with approaches exploring scratchpads, positional embeddings, and synthetic data generation \citep{anil2022exploring, generalization-on-unseen, zhou2023algorithms, golowich2025role, hou2024universal, xiao2025generalizing, press2022train, two-stones-zhenyu-he, cho2024position, leeself, chen2025nonasymptotic}. 

There have been attempts to enable vocabulary extensions in Transformers, especially when the intended extension of vocabulary involves generalizing to distinct symbols that are semantically equivalent, for instance in \citet{pmlr-v267-isik25a}. Unlike \citet{pmlr-v267-isik25a}, who advocate for randomized permutations of the interchangeable tokens (here the alphabet $\mathcal{C}$) we enforce a character ordering in $\mathcal{C}$. \emph{Removing} this ordering will \emph{not} affect our planning theorems. Its role is to make the match predicate more general. In particular, the term $\tau_k$ in Definition~\ref{def:match-predicate} lets a program match not only two identical new symbols, but also symbols that are a fixed small shift apart in the extended embedding/index space. Without an ordering on $\mathcal{C}$, this operation would not be possible, and $\infrasppl$ would become a more restrictive interface to Symbolic Limit Transformers. We believe this additional structure is precisely what would help formalize why index hints improve length generalization in prior work \cite{zhou2023algorithms}: a model could use the consistent local ordering of index tokens to move from one index to a nearby one, while the hints provide local context for disambiguation (\citet{kraus2026barriersuniversalreasoningtransformers} takes some first steps in doing so). Thus, the approaches of \citet{pmlr-v267-isik25a} and our ordered-alphabet framework address complementary aspects of vocabulary generalization, and could potentially be combined as well.

Our work is most closely related to \citet{nunezmolina2025tokenpredictionstripsworld}, who show that plan verification for propositional STRIPS is expressible in B-RASP, and thus realizable by a restricted class of transformers \cite{yang2024masked} that uses hard attention. This result is complementary to ours, but differs in key respects. First, their analysis is situated in a fixed-vocabulary, grounded setting, whereas a central motivation of our work is generalization to longer plans and instances with more objects, where the effective input alphabet grows. This distinction is important, since the desired behavior is typically to train on small grounded instances and generalize to larger ones. 
Secondly, we study \emph{generalization in learning} rather than in principle expressivity, which is what B-RASP guarantees. Our experiments support the relevance of this distinction: although B-RASP-style expressivity suggests that STRIPS verification traces are representable in principle, standard transformers fail on the STRIPS/FlipFlop-style cases in the way predicted by the C-RASP-based framework. Thus, our results are not implied by theirs; rather, they identify a length-generalization limitation for standard transformers that is invisible from the B-RASP expressivity perspective.

\subsection{Limitations \& Future Work}
Our theoretical results on length generalization apply to standard transformers without scratchpads (no chain of thought). However, our results are still relevant, as chain of thought is expensive and knowing what is possible without Chain of thought is important. Much like $\CRASP$, our theory also only applies to APE and NoPE encodings, extending our theory to cover Rotary Positional Encodings is a natural next step. Although $\CRASP$ is empirically predictive, length generalization guarantees given through it rely on idealized learning procedures. Our learning framework is inspired by \citet{huangformal2025} and so we also have the limitation of using a convenient tool for proving results but one that is not equipped to reflect subtleties of training dynamics. 
The framework makes several notable idealizations, such as assuming translation invariance, matching the target function on all training-distribution inputs (rather than only a bounded training set), and idealizing activation functions.
Finally, our focus was on verification and not  plan generation. As \citet{stechly2025on} mentions, their respective computational complexity (P vs PSPACE-complete for propositional STRIPS) should not be expected to translate to relative transformer performance. Thus generalizing our findings to plan generation is non-trivial. Our proofs rely on the existence of valid plans containing loops, whereas optimal plans could be much shorter. Thus, while our results serve as a foundational step, characterizing the learnability of plan generation remains exciting grounds for future work.

\section{Conclusion}
We presented a formal analysis of the ability of transformers to verify plans, focusing on the challenge of length generalization. To address the setting where the number of objects grows at test time, we introduced \infrasp, a novel extension of the \CRASP\ framework that characterizes learnability under simultaneous growth of sequence length and vocabulary size. Our results indicate that the specific formulation of a planning problem—rather than only the number of objects or actions—is a decisive factor in learnability. These findings take steps toward explaining inconsistent performance of transformers on planning tasks, opening new avenues for characterizing their capabilities in planning.

\section*{Impact Statement}

This paper presents work whose goal is to advance the field of Machine
Learning. There are many potential societal consequences of our work, none of 
which we feel must be specifically highlighted here.

\section*{Acknowledgments}
Katharina Stein was funded by the Deutsche Forschungsgemeinschaft (DFG, German Research Foundation) under the project number 232722074 – SFB 1102. We gratefully acknowledge the stimulating research environment of the GRK 2853/1 “Neuroexplicit Models of Language, Vision, and Action”, funded by the Deutsche Forschungsgemeinschaft under project number 471607914. Sylvie Thi\'ebaux's work was funded by the Australian Research Council (ARC) under the Discovery Project grant DP220103815 and by the Artificial and Natural Intelligence Toulouse Institute (ANITI) under the grant agreement ANR-23-IACL-0002. We thank Chaahat Jain for insightful discussions.

\section*{Contributions}
YS drafted the paper, contributed Theorem 3.1, 3.6 and their corresponding proofs (Appendix A, B) with inputs from ST and MH, and coordinated the experiments with KS and YD. YS and MH jointly developed the learning framework, symbolic limit transformer translation, C*-RASP formulation (Section 3.2 and Appendix C).
YD developed the training and evaluation pipeline, trained all models, implemented the \colors\ domain dataset in use, and contributed substantially to the design and development of the other two domain datasets. He also contributed to writing. 
KS led the development of the datasets and contributed substantially to their designs. She conducted the implementation and generation of the Grippers Heavy and Lightsout datasets, and contributed to the writing.
AK advised on the research direction and narrative framing, supervised the experimental work, and contributed to the writing of the paper.
ST contributed to the research framing, to the identification of the generalisable planning classes, to the formulation of theorems 3.1 and 3.6, and proposed the Color domain.
She drafted section 2 and provided feedback and corrections on every section except Appendix C.
MH contributed to the research framing and to the identification of the generalisable planning classes, and jointly developed the learning framework, limit transformers, and C*-RASP formulation with YS.

\bibliography{literature}
\bibliographystyle{icml2026}

\newpage

\appendix
\onecolumn

\section{Proofs on Plan Verification}

\subsection{Proofs about Generalization over a Fixed Object Universe} \label{appendix:proof_single_instance}

We restate Theorem~\ref{thm:fixed_universe_results} and then prove it. 

\begin{theorem} \label{thm:fixed_universe_results_restate}
    Let $\domain$ be a planning domain and $\objects$ a fixed set of objects. Consider the language $L_{\domain, \objects}$ consisting of all sequences $\langle \initial, \plan, \goal \rangle$ where $\plan$ is a valid plan for the planning instance $\langle \domain, \objects, \initial, \goal \rangle$.

    \begin{enumerate}
    \item \textbf{(Delete-Free \& Well-Formed)} If $\domain$ is delete-free or well-formed, then $L_{\domain, \objects} \in \CRASPplshort$. \label{thm:fixed_universe_results_df_wf_re}
    \item \textbf{(STRIPS)} There exist STRIPS domains and object sets such that $L_{\domain, \objects} \notin \CRASPplshort$.  \label{thm:fixed_universe_results_strips_re}
    \item \textbf{(Conditional Effects)} There exist domains with conditional effects such that $L_{\domain, \objects}$ is neither star-free nor in $\CRASPplshort$. \label{thm:fixed_universe_results_cond_re}
\end{enumerate}    
\end{theorem}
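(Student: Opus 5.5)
We handle the three statements separately. Throughout, we fix an encoding in which $\initial$, $\plan$ and $\goal$ are written as delimited blocks. Because $\objects$ is fixed, the sets $P$ of propositions and $\actions$ of ground actions are finite, so every proposition and every ground action can be taken as a single symbol of the finite alphabet $\Sigma$ (or a bounded-length token block, read off with local relations $\psi(i,j)\equiv i=j+\delta$ in $\CRASPplshort$).

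\textbf{Statement 1.} For each proposition $p\in P$ and each plan position $i$ we build count-valued operations
\[
A_p(i) := \cttn{j\le i}{\textstyle\bigvee_{a:\,p\in \poslit{\Eff{}{a}}} Q_a(j)},\qquad
D_p(i) := \cttn{j\le i}{\textstyle\bigvee_{a:\,p\in \neglit{\Eff{}{a}}} Q_a(j)},
\]
and $\mathrm{Init}_p(i):=\cttn{j\le i}{Q_{p}(j)\wedge \text{InInit}(j)}$. Here $\text{InInit}(j)$ says that $j$ precedes the plan delimiter; it is itself a count comparison.

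In the delete-free case, $p$ holds before the action at $i$ iff $\mathrm{Init}_p(i)+A_p(i-1)\ge 1$. The shift to $i-1$ is obtained by counting strictly earlier positions, either via $\psi$ or by subtracting the current indicator.

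In the well-formed case, we first prove by induction along a valid prefix that adds and deletes of $p$ strictly alternate, starting with a delete iff $p\in\initial$. Hence $p$ holds iff $\mathrm{Init}_p+A_p-D_p=1$, and otherwise this quantity equals $0$.

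The main subtlety is that well-formedness only holds on reachable states, so this invariant is only guaranteed on prefixes whose earlier actions were applicable. We handle it by computing a Boolean $\mathrm{Bad}(i)$ that marks an action at $i$ whose preconditions fail under the count-based truth values, and we accept only if $\cttn{j\le n}{\mathrm{Bad}(j)}=0$. By induction, the first bad position is detected correctly, since all prior states were computed correctly. After that position the computed values may be wrong, but the plan is rejected anyway.

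Finally, at the last position we check every goal literal using the same counts; the goal block is read with the same $\mathrm{Init}$-style counting restricted to the goal segment. Since $P$ is finite, the result is a finite conjunction, which gives a $\CRASPplshort$ program.

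\textbf{Statement 2.} We exhibit a STRIPS instance whose plan language, restricted to a fixed $(\initial,\goal)$, encodes FlipFlop $\Sigma^*be^*$. Take one proposition $q$ and actions $b$ with effect $q$, $a$ with effect $\neg q$, and $e$ a no-op. All actions have empty preconditions. Let $\initial=\emptyset$ and $\goal=\{q\}$; this is not well-formed, since $b$ may apply when $q$ already holds. The plan is valid iff the last non-$e$ action is $b$, i.e.\ iff it lies in $\Sigma^*be^*$.

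Suppose $L_{\domain,\objects}\in\CRASPplshort$. Fixing the prefix $\langle\initial\rangle$ and the suffix $\langle\goal\rangle$ gives a program for FlipFlop: the fixed-length prefix and suffix contribute constant offsets to counts, and we use closure of $\CRASPplshort$ under such constant-context quotients. This contradicts the result of \citet{huangformal2025}. Verifying this quotient closure carefully is the step I expect to be the main obstacle.

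\textbf{Statement 3.} We use a one-light Lights Out instance with a single action $\text{press}$ carrying two conditional effects, $\{on\}\rhd\{\neg on\}$ and $\{\neg on\}\rhd\{on\}$, plus a no-op action. Let $\initial=\emptyset$ and $\goal=\{\neg on\}$. The plan is valid iff the number of presses is even, which is PARITY with $a=\text{press}$ and $b=$ no-op.

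PARITY is not star-free, and star-freeness is preserved under the same fixed-context quotient and inverse homomorphism. It is not in $\CRASPplshort$ by \citet{huangformal2025}, using the same quotient argument as in Statement 2.
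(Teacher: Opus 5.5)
Your proposal is correct. Statement 1 follows the paper's construction: per-proposition add/delete counts, the test $\mathrm{Init}_p+A_p-D_p=1$ (resp.\ $\ge 1$) evaluated just before each action, rejection if any action fails a precondition, and a goal check at the end. Your first-bad-position induction makes explicit the reachability caveat that the paper leaves implicit. Your $\mathrm{Init}_p$ is a count over the initial block, which is what the paper's pointwise $\text{OccursInitial}_p(i)$ has to mean at plan positions. Statement 2 uses the paper's FlipFlop instance.

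The genuine differences are two. First, the paper proves non-membership only for the single-instance plan language $L(\Pi)$. It then passes to $L_{\domain,\objects}$ on the grounds that $L(\Pi)$ ``sits inside'' it, which does not by itself transfer non-membership. Your fixed-context quotient is the right bridge, and it does hold. For the prefix, your ``constant offset'' description is accurate: lookbacks landing in the prefix become constants, selected by comparing $\cttn{j\le i}{\top}$ with small numbers. For the suffix, however, the contribution is not constant. The fix is to unroll the program over the $|v|$ suffix positions, using causality to write each operation's value at position $|w|+t$ of $wv$ at position $|w|$ of $w$. A count there equals its value at $|w|$ plus its argument's values at $|w|+1,\dots,|w|+t$, which are available by induction on the program. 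A local lookback lands either in the suffix or at bounded distance at or before $|w|$, and periodic relations just shift residues. Star-free languages are likewise closed under quotients by the syntactic-monoid argument.

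Second, for Statement 3 you take one light plus a no-op, which yields PARITY directly. The paper instead works with a real Lights Out board, which has no no-op. It identifies the all-off plan language with the kernel of the effect map into $\mathbb{F}_2^n$. It then codes $a\mapsto\gamma\beta$, $b\mapsto\alpha\alpha$ so as to invoke closure under inverse uniform-length homomorphisms (Lemma~42 of \citet{huangformal2025}). Your construction is shorter and suffices for the existential claim. The paper's shows that the obstruction is intrinsic to the Lights Out domain itself, the one used in the experiments.
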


\begin{proof}[Proof of Statement~\ref{thm:fixed_universe_results_df_wf_re}]

In this setting, the domain $\domain$ and the set of objects $\objects$ are fixed and finite. However, the initial state $\initial$ and the goal state $\goal$ are provided as part of the input sequence, and can vary per sample.
Given this setting, all possible instances will have the set of all possible ground propositions $P$, and the same set of all possible ground actions $A$, where both $P, A$ are finite as well. Because of this we can directly refer to the necessary $P$, which is all the set of possible propositions, and $A$ which is the entire set of ground actions (i.e. after the object names have been substituted into the predicates and action schemas taken from $\predicates$ and $\actionschemas$ respectively). The initial state $I \subseteq P$, and the goal state $G$ are as defined in Section~\ref{subsec:planning_definition}.

We now give a recipe for constructing a $\CRASPplshort$ program for any such input, where the given plan $\pi$ is a valid action sequence and upon starting from $\initial$ and taking all the actions in $\pi$ all the goal conditions listed in $\goal$ are met.

We assume that the input sequence is as follows $w = \$\initial@\pi@\goal@$. where,
$$\initial = p_1, po_{1,1}, \dots, po_{1, pk_1}, \quad p_2, po_{2,1}, \dots, po_{2, pk_2}, \quad \dots $$
$$\pi = \alpha_1, o_{1,1}, \dots, o_{1, k_1}, \quad \alpha_2, o_{2,1}, \dots, o_{2, k_2}, \quad \dots$$
$$\goal = g_1, go_{1,1}, \dots, go_{1, gk_1}, \quad g_2, go_{2,1}, \dots, go_{2, gk_2}, \quad \dots $$
Thus, in the $\initial$ block, the token $p_t$ represents a predicate name, followed by $pk_t$ (arity of the predicate) objects representing its arguments, Thus $p_t, po_{t,1} \dots  po_{t, pk_t} \in P$. 
Similarly, in the the token $\pi$ block, $\alpha_t$ is an action schema name, followed by $k_t$ object tokens representing its arguments, and $\alpha_t, o_{t,1} \dots  o_{t, k_t} \in A$ 
Finally the $\goal$ block is also similar to the $\initial$ block and therefore $g_t, go_{t,1} \dots  go_{t, gk_t} \in P$. 
We need to verify the plan by tracking the state of every ground proposition in $P$, making sure that all preconditions of a given action was always satisfied and also checking that all the goal conditions were indeed met by the end of the $w$. 

\begin{enumerate}
    \item We must identify which part of the input a token belongs to, using cumulative counts of separator tokens.
    \[ \text{CountSep}(i) := \cttn{j \le i}Q_{@}(j) \]
    Using this, we define the section selectors:
    \begin{itemize}
        \item $\text{InInit}(i) := (\text{CountSep}(i) == 0)$
        \item $\text{InPlan}(i) := (\text{CountSep}(i) == 1)$
        \item $\text{InGoal}(i) := (\text{CountSep}(i) == 2)$
        \item $\text{End}(i) := (\text{CountSep}(i) == 3)$
    \end{itemize}
    The $==$ syntax is equivalent to checking $(\text{CountSep}(i) \le k) \land (\text{CountSep}(i) \ge k)$.

    \item We first identify, at any index $i$, whether an action has just completed and what that action is. Let $a \in A$ be a specific ground action, defined by schema $\alpha$ and argument objects $u_1, \dots, u_k$. The action $a$ ends at index $i$ if the sequence of tokens ending at $i$ matches the signature of $a$:
\[
\text{token at } i = u_k, \quad \text{token at } i-1 = u_{k-1}, \quad \dots, \quad \text{token at } i-k = \alpha
\]
In C-RASP, we can implement lookbacks using the query relation $\psi(i, j) \equiv (i = j + \Delta)$.
For each ground action $a$, we define a boolean selector sequence $Occurs_a(i)$ that is true iff the action $a$ ends at $i$:
\[
\text{OccursAction}_a(i) := \left( (\cttn{j \le i, i=j+k}{Q_{\alpha}(j)} == 1) \land \bigwedge_{m=1}^k (\cttn{j \le i, i=j+m}{Q_{u_m}(j)} == 1) \land \text{InPlan}(i) \right) 
\]
This expression verifies the action schema name at offset $k$ and every object argument $u_m$ at the correct offset. Since there are finitely many ground actions, we define one such line for every $a \in A$.

Thus at every position i, at most only 1 ground action can ever be true. 
We do the same thing for every $p \in P$ to match propositions listed in the initial and the goal state. ($u_1, \dots u_k$ representing the arguments of the proposition $p$, $k$ the arity of the proposition). 
\[
\text{OccursInitial}_p(i) := \left( (\cttn{j \le i, i=j+k}{Q_{p}(j)} == 1) \land \bigwedge_{m=1}^k (\cttn{j \le i, i=j+m}{Q_{u_m}(j)} == 1) \land \text{InInit}(i) \right) 
\]
\[
\text{OccursGoal}_p(i) := \left( (\cttn{j \le i, i=j+k}{Q_{p}(j)} == 1) \land \bigwedge_{m=1}^k (\cttn{j \le i, i=j+m}{Q_{u_m}(j)} == 1) \land \text{InGoal}(i) \right) 
\]

\item For every ground proposition $p \in P$, we identify the set of ground actions that add it, $TP_p$, where $TP_p = \{a\in A | p \in \poslit{\Eff{}{a}} \}$. These sets will be fixed and known a priori, given our fixed objects and domain.

We define the cumulative count of additions for $p$ at step $i$:
\[
\text{MadeTrue}_p(i) := (\sum_{a \in TP_p} \cttn{j \le i}{\text{OccursAction}_a(j)}) 
\]
We do the same for the set of actions that delete it $FP_p = \{a\in A | p \in \neglit{\Eff{}{a}} \}$. 
\[
\text{MadeFalse}_p(i) := (\sum_{a \in FP_p} \cttn{j \le i}{\text{OccursAction}_a(j)}) 
\]

\item Now, we define the truth value of a proposition $p$ at step $i$. 
\begin{itemize}
    \item \textbf{Well-Formed}: Because of well-formedness, the truth value of a proposition toggles (False $\to$ True $\to$ False). We compute the net flow (adds minus deletes) for each proposition at every step. If the proposition was in the initial state, then the net flow has to be $0$, and if it was not then this net flow should be $1$. 

    $$\text{Net}_{p}(i) := \text{MadeTrue}_p(i) - \text{MadeFalse}_p(i) $$
    $$
    \text{Valid}_p(i) := (\text{OccursInitial}_p(i) \land \text{Net}_{p}(i) == 0) \lor (\lnot \text{OccursInitial}_p(i) \land \text{Net}_{p}(i) == 1)
    $$
    \item \textbf{Delete-Free}: The proposition is true, if it was either already true (as it was in the initial state) or at least one action made it true. 
    $$\text{Valid}_p(i) := \text{OccursInitial}_p(i) \lor (\text{MadeTrue}_p(i) \ge 1)$$
\end{itemize}

\item For every ground action a, let $\poslit{\Pre{a}}$, $\neglit{\Pre{a}}$ define the list of positive and negative ground propositions for a given action. We need to check that all the preconditions are satisfied \textit{before} the action effect takes place. We check the state of preconditions at $i-1$ (the state before the effects of the action took place). If any of the preconditions are not met, we mark this position as invalid. 

$$\text{PreCondPassed}_a(i) := \bigwedge_{p \in \poslit{\Pre{a}}} (\cttn{j \le i, i = j + 1}\text{Valid}_p(j) \bigwedge_{p \in \neglit{\Pre{a}}} (\cttn{j \le i, i = j + 1}\lnot \text{Valid}_p(j)$$

\[
\text{InvalidAction}_a(i) := \text{OccursAction}_a(i) \land \lnot \text{PreCondPassed}_a(i)
\]

The count of invalid actions by the end of $w$ should be 0. 
$$\text{AllActionsValid(i)}:= (\sum_{a \in A} \cttn{j \le i} \text{InvalidAction}_a(j) == 0)$$

\item Finally, we need check if every goal condition is satisfied. We iterate over our ground propositions. For every proposition, either it does not appear in the goal or it is valid. 

$$\text{GoalSat}(i) := \bigwedge_{p \in P}\lnot \text{OccursGoal}_p(i) \lor \text{Valid}_p(i)$$

At the end token, we need to check, that all goals were met. 

$$\text{AllGoalsMet}(i) := \text{End}(i) \land \text{AllActionsValid}(i) \land (\cttn{j \le i}\lnot\text{GoalSat}(i) == 0)$$

Note that, in case negative goal conditions are present, we can check for invalidity of the underlying proposition, however the logic remains the same. 

\end{enumerate}
Since $A$ and $P$ are finite, this program has a finite number of lines and operations. At the last position of our sequence, our program outputs True if the given input sequence was valid else not for any possible sequence in $L_{\domain, \objects}$. Hence, for both delete-free and well-formed, $L_{\domain, \objects} \in \CRASPplshort$.

\end{proof}

For our negative results, we will show that there exist instances $\Pi = \langle \domain, \objects, \initial, \goal \rangle$, where the language of all valid plans of just that instance $L(\Pi)$ (therefore subsets of the full language $L_{\domain, \objects}$) are not contained in $\CRASPplshort$.
Consequently, the language $L_{\domain, \objects}$ itself is not in $\CRASPplshort$.

\begin{proof}[Proof of Statement~\ref{thm:fixed_universe_results_strips_re}]

\citet{Lin_Bercher_2022} showed that $L(\Pi)$ when $\Pi = \langle \domain, \objects, \initial, \goal \rangle$ represents a STRIPS instance  belongs to a strict subset of star free languages. Here we will show that there could be a STRIPS instance $\Pi$, where $L(\Pi) \notin \CRASP[\text{pos}]$. \citet{huangformal2025} had rigorously showed that the star free language $\Sigma^*be^*$ ($\Sigma =\{a, b, e\}$), also known as the flipflop language is not in $\CRASP[\text{pos}]$. Thus, we will now create a STRIPS instance $\Pi_{ff}$ where $L(\Pi_{ff}) = \Sigma^*be^*$, which will thus prove our claim. 

Let the planning domain be $\domain_{ff} = \langle \predicates, \actionschemas \rangle$ where $\predicates = \{ \text{active} \}$, $\arity{\text{active}} = 1$ and $\actionschemas = \{ \as_a, \as_b, \as_e \}$. All three schemas share the argument set $\Args{} = \{x\}$ and empty preconditions $\Pre{} = \emptyset$. Their effects are: $\Eff{}{\as_a} = \{\neg \text{active}(x)\}$, $\Eff{}{\as_b} = \{\text{active}(x)\}$, $\Eff{}{\as_e} = \emptyset$. 
Now we define the planning instance to be $\Pi_{ff} = \langle \domain_{ff}, \{ \obj{k} \}, \emptyset, \text{active}(\obj{k})\rangle$. This planning instance is a STRIPS instance, as by construction we did not violate any rules of being in STRIPS.   
The ground actions correspond directly to the alphabet $\Sigma$: $a = \as_a(\obj{k})$, $b = \as_b(\obj{k})$, and $e = \as_e(\obj{k})$. Note that since the preconditions are empty, all actions are applicable in every state.
Let $\pi \in \Sigma^*$ be a sequence of actions $\pi = \sigma_1 \dots \sigma_n$. We track the state $S_i = \successor(\initial, \sigma_1 \dots \sigma_i)$.
The ground proposition set is simply $\{ \text{active}(\obj{k}) \}$. Thus, there are only two possible states: $\emptyset$ (which we call OFF) and $\{ \text{active}(\obj{k}) \}$ (which we call ON).

Since $\initial = \emptyset$, the system starts in the OFF state. Based on the definitions of the ground actions each action regardless of the previous state acts as follows -- $a$: $\neglit{\Eff{}{a}} = \{\text{active}(\obj{k})\}$ removes the proposition, setting the state to off, $b$: $\poslit{\Eff{}{b}} = \{\text{active}(\obj{k})\}$ adds the proposition, setting the state to on, $e$: $\Eff{}{e} = \emptyset$ acts as the identity function, leaving the state unchanged.

A plan $\pi$ is valid iff $S_n \models \goal$, which means $\text{active}(\obj{k}) \in S_n$ (i.e., the final state is on).
Consider the last occurrence of a non-$e$ symbol in $w$. If $w$ contains no $a$ or $b$ (i.e., $w \in e^*$), the state remains $\initial$ (off). The goal is not satisfied. If the last non-$e$ symbol is $a$, the state becomes off at that step. Any subsequent $e$ actions maintain the off state. The goal is not satisfied. If the last non-$e$ symbol is $b$, the state becomes ON at that step. Any subsequent $e$ actions maintain the ON state. The goal is satisfied. Therefore, $w \in L(\Pi_{ff})$ if and only if $w$ contains at least one $b$, and no $a$ occurs after the last $b$. This corresponds exactly to the regular expression $\Sigma^* b e^*$.
\end{proof}

\begin{proof}[Proof of Statement~\ref{thm:fixed_universe_results_cond_re}]

We will prove this by constructing a planning instance for a domain with conditional effects, and show that its valid language has $\texttt{PARITY}$ as the the preimage of a length preserving monoid homomorphism, and thus if this language would be either star free or in $\CRASPplshort$ it would contradict well known prior results. 

\begin{definition}[Lights Out with conditional effects]\label{def:lights-out}
Let $LO = (V, E)$ be a lights out board represented as a graph where $V = \{v_1, \dots, v_n\}$ represents the set of cells (vertices) and $E$ defines the connectivity (edges between the vertices). For any $v \in V$, let $N[v] = \{u \in V \mid (v,u) \in E\} \cup \{v\}$ be the closed neighborhood of $v$.
Given an initial configuration of cells $I \subseteq V$ that are on, we define the corresponding planning instance $\Pi_{LO} = \langle \domain, \objects, \initial, \goal \rangle$ as follows:
\begin{itemize}
    \item The set of objects is empty, $\objects = \emptyset$.
    \item The domain is $\domain = \langle \predicates, \actionschemas \rangle$.
    \begin{itemize}
        
        \item The predicates are propositional (arity 0), representing the state of each cell:
        $\predicates = \{ \text{on}_i \mid v_i \in V \}$
        \item The action schema correspond to pressing specific cells. Since the logic depends on the fixed graph topology, we define a schema for each cell: $\actionschemas = \{ \as_1, \dots, \as_n \}$
        For each cell $v_k \in V$, the schema $\as_k$ is defined by:
        \begin{itemize}
            \item $\Args{\as_k} = \emptyset$
            \item $\Pre{\as_k} = \emptyset$ (actions are always applicable)
            \item The effects $\Eff{}{\as_k}$ toggle the cell and its neighbors. For every $v_j \in N[v_k]$, we include the following pair of conditional effects:\footnote{For compactness, we factorize conditions, with the semantics that all conditions that are true in the current state trigger their effects and that consistent sets of conditions must lead to consistent effect sets. This does not affect our results.}
            \[
            \begin{aligned}
                \{ \text{on}_j \} &\rhd \{ \neg \text{on}_j \} \\
                \{ \neg \text{on}_j \} &\rhd \{ \text{on}_j \}
            \end{aligned}
            \]
        \end{itemize}
    \end{itemize}
    \item The initial state is defined by the set of cells that are initially on:
    $\initial = \{ \text{on}_i \mid v_i \in I \}$
    \item The goal is to switch all lights off:
    $\goal = \{ \neg \text{on}_1, \dots, \neg \text{on}_n \}$
\end{itemize}

\end{definition}

We now use the formal language $\texttt{PARITY}=b^*(ab^*ab^*)^*\subseteq \{a,b\}^*$. Thus, a word belongs to $\texttt{PARITY}$ iff it contains an even number of
$a$'s. This language is not star-free, and \citet{huangformal2025} rigorously showed that $\texttt{PARITY}\notin\CRASPplshort$.

We will use a simple coding of words over $\{a,b\}$ as Lights Out action
sequences. Formally, this coding is a monoid homomorphism $\psi:\{a,b\}^*\to A_{LO}^*$, where $A_{LO}$ is the finite set of ground Lights Out actions. Since
$\objects=\emptyset$ and all action schemas have empty arguments, every action
schema is already ground, so $A_{LO}=\{\as_1,\dots,\as_n\}$. Thus $A_{LO}^*$ is the set of all possible finite Lights Out action sequences, and is thus quite distinct from $L(\Pi_{LO})$ which only contains action sequences with valid plans. 

A monoid homomorphism is determined by its values on individual letters and is
extended to words by concatenation. Hence, if $w=w_1\cdots w_m$ with each
$w_i\in\{a,b\}$, then $\psi(w)=\psi(w_1)\cdots\psi(w_m)$.

We will define $\psi$ so that
\[
    w\in\texttt{PARITY}
    \quad\Longleftrightarrow\quad
    \psi(w)\in L(\Pi_{LO}).
\]
Equivalently,
\[
    \psi^{-1}(L(\Pi_{LO}))=\texttt{PARITY},
\]
where
\[
    \psi^{-1}(L(\Pi_{LO}))
    =
    \{w\in\{a,b\}^*\mid \psi(w)\in L(\Pi_{LO})\}.
\]
This notation denotes the preimage of $L(\Pi_{LO})$ under $\psi$. Thus, we will show that if one could recognize $L(\Pi_{LO})$ either via a star free expression or by a $\CRASPplshort$ program, then
one could also recognize $\texttt{PARITY}$ by first applying $\psi$ and then recognizing $L(\Pi_{LO})$. 

More formally, both language classes used in the argument are closed under the
appropriate form of preimage. Star-free languages are closed under preimages of
monoid homomorphisms. Similarly, Lemma~42 of \citet{huangformal2025} shows that
$\CRASPplshort$ is closed under preimages of homomorphisms that map every input
symbol to a word of the same length. Therefore, once we prove
\[
    \psi^{-1}(L(\Pi_{LO}))=\texttt{PARITY},
\]
it follows that membership of $L(\Pi_{LO})$ in either class would imply the
corresponding membership of $\texttt{PARITY}$. Since $\texttt{PARITY}$ is known
to be neither star-free nor in $\CRASPplshort$, this gives us the desired
contradiction. Now we will construct these homomorphisms. 

\paragraph{Constructing the Homomorphism.}

We will analyze the plan language algebraically, and in turn be able to construct our desired homomorphism. Since the conditional effects
toggle truth values, the underlying states (changed due to actions) can be represented as vectors over the field $\mathbb F_2=(\{0,1\},+,\cdot)$
Let $n=|V|$, and $\mathfrak{P}$ denote power sets.  We can define
\[
    \phi:\mathfrak{P}(\predicates)\to\mathbb F_2^n
\]

so that the $k$-th coordinate of $\phi(S)$ is $1$ iff
$\mathrm{on}_k\in S$, and $0$ otherwise. The initial state maps to $b=\phi(\initial)$ and the unique state satisfying the all-off goal maps to $\vec 0$. For each action $\as_k\in A_{LO}$, let $e_k\in\mathbb F_2^n$
be its effect vector, where the $j$-th coordinate is $1$ iff
$v_j\in N[v_k]$, and $0$ otherwise. Pressing $\as_k$ toggles exactly those
coordinates, so for every state $S$,
\[
    \phi(\successor(S,\as_k))=\phi(S)+e_k.
\]

With this, we now define the plan homomorphism
\[
    h:A_{LO}^*\to\mathbb F_2^n
\]
where 
\[
    h(a_1\cdots a_m)
    =
    e_{a_1}+\cdots+e_{a_m},
\]
where each $a_\ell\in A_{LO}$ and $e_{a_\ell}$ is the effect vector of that action. We also set $h(\epsilon)=\vec 0$. Then $h$ is a monoid homomorphism,  for all $w_1,w_2\in A_{LO}^*$,
where $h(w_1w_2)=h(w_1)+h(w_2)$. 

A plan $w\in A_{LO}^*$ is valid exactly when applying its total effect to the initial state reaches the all-off state. Therefore, $b+h(w)=\vec 0$, and equivalently  $h(w)=b$. Thus the valid-plan language is
\[
    L(\Pi_{LO})
    =
    \{w\in A_{LO}^*\mid h(w)=b\}.
\]

\paragraph{The all-off instance.}

Now consider the instance where all lights are initially off, i.e.
$I=\emptyset$, then $b=\vec 0$. Thus, if $\ker(h)$ denotes the kernel of $h$, we get
\[
    L(\Pi_{LO})
    =
    \{w\in A_{LO}^*\mid h(w)=\vec 0\}
    =
    \ker(h).
\]

Choose a Lights Out board with two actions whose effect vectors are distinct;
for example, a path on three vertices has this property. Let
$\gamma,\beta\in A_{LO}$ be such that $e_\gamma\neq e_\beta$. Equivalently, $d:=e_\gamma+e_\beta\neq\vec 0$. Choose any action $\alpha\in A_{LO}$. We finally can define our desired morphism, 
\[
    \psi:\{a,b\}^*\to A_{LO}^*
\]
by
\[
    \psi(a)=\gamma\beta,
    \qquad
    \psi(b)=\alpha\alpha.
\]
Both input letters are mapped to action sequences of length $2$. The block coding $b$ has no net effect:
\[
    h(\psi(b))
    =
    h(\alpha\alpha)
    =
    e_\alpha+e_\alpha
    =
    \vec 0.
\]
The block coding $a$ has nonzero net effect:
\[
    h(\psi(a))
    =
    h(\gamma\beta)
    =
    e_\gamma+e_\beta
    =
    d\neq\vec 0.
\]
Since we work over $\mathbb F_2$, we also have $d+d=\vec 0$. Therefore, for every word $w\in\{a,b\}^*$,
\[
    h(\psi(w))=\#_a(w)\cdot d,
\]
where $\#_a(w)$ denotes the number of a's in the word $w$. Here the coefficient is taken modulo $2$. Hence, 
\[
    h(\psi(w))=\vec 0
    \quad\Longleftrightarrow\quad
    \#_a(w)\text{ is even}.
\]
Because $L(\Pi_{LO})=\ker(h)$, we obtain
\[
    \psi^{-1}(L(\Pi_{LO}))
    =
    \{w\in\{a,b\}^*\mid \#_a(w)\text{ is even}\}
    =
    \texttt{PARITY}.
\]

Now suppose first that $L(\Pi_{LO})$ were star-free. Star-free languages are
closed under preimages of monoid homomorphisms, so
$\psi^{-1}(L(\Pi_{LO}))$ would also be star-free. But this preimage is exactly
$\texttt{PARITY}$, which is not star-free. This is a contradiction.

Next suppose that $L(\Pi_{LO})\in\CRASPplshort$. By Lemma~42 of
\citet{huangformal2025}, $\CRASPplshort$ is closed under preimages of
homomorphisms that map every input symbol to a word of the same length. Since
$|\psi(a)|=|\psi(b)|=2$, we would have $\psi^{-1}(L(\Pi_{LO}))\in\CRASPplshort$.
Again this preimage is $\texttt{PARITY}$, contradicting
\citet{huangformal2025}. Therefore $L(\Pi_{LO})$ is neither star-free nor in
$\CRASPplshort$.
\end{proof}

\section{Proofs about Generalization over a Planning instances with growing number of objects} \label{appendix:proof_variable_universe}

We restate Theorem~\ref{thm:gen_variable_universe} and then prove it. 

\begin{theorem} \label{thm:gen_variable_universe_re}
Let $\domain$ be a planning domain. Consider the language $L_{\domain}$ consisting of all possible sequences $\langle \initial, \pi, \goal \rangle$ where $\plan$ is a valid plan 
for
some planning instance $\langle \domain,\objects,\initial,\goal\rangle$.

\begin{enumerate}
\item \textbf{(Delete-Free / Well-Formed)}
    If $\domain$ is delete-free or well-formed, then $L_{\domain} \in \infrasppl$. In particular, generalization from both limited number of objects and plan lengths during training to a increased number of objects and plan lengths is expected. \label{thm:gen_variable_universe_wf_re}

  \item \textbf{(STRIPS / Conditional Effects)} There exists STRIPS domains as well as domains with conditional effects
    for which $L_{\domain} \notin \infrasppl$. Generalization to instances with longer plans or increased number of objects is thus not expected. In particular, there are domains where $L_{\domain}$ subsumes FlipFlop or PARITY. \label{thm:gen_variable_universe_strips_re}
\end{enumerate}
\end{theorem}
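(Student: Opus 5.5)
For Statement~\ref{thm:gen_variable_universe_wf_re}, I will adapt the program from the proof of Theorem~\ref{thm:fixed_universe_results_restate}. I keep its overall structure: section selectors from counts of the separator $@$, a truth value for each relevant proposition, precondition checks, and goal checks. The new constraint is that the object set is unbounded, so the program may not contain one line per ground action or ground proposition. Instead, it will have one line per pair of a \emph{schema-level} action and predicate occurrence. There are finitely many such pairs, since $\domain$ is fixed. Object identities are compared only through match predicates (Definition~\ref{def:match-predicate}) with all $\tau_k=0$. The offsets $\delta_k,\gamma_k$ are bounded by the maximum arity of a predicate or action schema, which is a constant of $\domain$.

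\emph{Local selectors.} Using local lookbacks $\psi(i,j)\equiv(i=j+m)$ together with $Q_\sigma$ for $\sigma\in\Sigma$, I define $\text{EndAct}_\alpha(i)$: position $i$ is the last argument token of an action with schema $\alpha$. Similarly, $\text{EndInit}_p(i)$ and $\text{EndGoal}_p(i)$ mark the end of a proposition with predicate $p$ in the relevant section.

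\emph{Proposition state via match counts.} Fix an action schema $\alpha$ and a literal $\ell=p(x_{s_1},\dots,x_{s_r})$ occurring in $\Pre{\alpha}$ or in the goal. For each schema $\alpha'$ whose effect contains $p(x'_{t_1},\dots,x'_{t_r})$, I build a match predicate $\chi(i,j)$ that equates the object tokens at the argument offsets $s_m$ of the action ending at $i$ with the tokens at offsets $t_m$ of the action ending at $j$. The count $\cttn{j\le i,\chi(i,j)}$ includes positions $j$ that are not ends of $\alpha'$-actions, so I must restrict it to $\text{EndAct}_{\alpha'}(j)$.

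The Match rule in Table~\ref{tab:infrasp-ops} has no body predicate. I plan to handle this by folding the schema check into $\chi$ itself, or by showing that the Symbolic Limit Transformer simulation permits the conjunction $\cttn{j\le i,\chi(i,j)}P(j)$. As a fallback, since $\Sigma$ and $\mathcal{C}$ are disjoint, I can add to $\chi$ equalities pinning token $c_{j-k}$ to the token at a known schema-name offset of $i$. This works only when the schemas coincide, so the general case needs the extended form. This is the step I expect to be the main obstacle, and I would first check whether the Symbolic Limit Transformer translation supports it.

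Summing these counts gives $\text{MadeTrue}$ and $\text{MadeFalse}$ for the specific grounded literal of the current action, with no enumeration of ground propositions. The same matching against $\text{EndInit}_p$ positions yields $\text{InInit}$. For the well-formed case I then test whether the net count, plus one if the proposition is initially present, equals $1$. For the delete-free case I test whether the proposition is initially present or $\text{MadeTrue}\ge1$.

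\emph{Timing and validity checks.} The state must be evaluated before the current action. I therefore use strict counts, obtained by subtracting the self-contribution at $j=i$, which is computable locally. Invalid actions are then counted and required to total zero. Goal literals are handled identically at $\text{EndGoal}_p$ positions, matching against all plan actions. Finally, $\text{End}$ checks that both the invalid-action count and the unsatisfied-goal count are zero.

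For Statement~\ref{thm:gen_variable_universe_strips_re}, I reuse $\Pi_{ff}$ from Theorem~\ref{thm:fixed_universe_results_restate} and $\Pi_{LO}$. Suppose $L_{\domain}\in\infrasppl$. I restrict inputs to a fixed initial state, goal, and object set, so that every input lies over a finite alphabet. Intersection with a regular prefix and suffix, together with Lemma~\ref{lemma:fixed_inf_is_crasp}, would then put $L(\Pi)$ in $\CRASPplshort$. For $\Pi_{ff}$, $L(\Pi)$ is FlipFlop up to a letter-to-block homomorphism. For Lights Out, it is PARITY under the block-length-2 homomorphism $\psi$. Both conclusions contradict Theorem~\ref{theorem:parity_flipflop_not_infrasp} and Lemma~42 of \citet{huangformal2025}. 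The remaining detail is confirming that fixing the prefix and suffix preserves membership, which follows from closure of $\CRASPplshort$ under these operations.
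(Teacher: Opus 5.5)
Your proposal is correct and follows the paper's proof essentially step for step. Part~1 is the same schema-level program: separator counts, match predicates comparing argument tokens with $\tau_k=0$, and init/add/delete counts combined by the toggle test (well-formed) or the $\ge 1$ test (delete-free). It ends with zero counts of violated preconditions and unsatisfied goals. The paper handles the obstacle you flag exactly by your first option: it writes $\text{Curr}_\alpha(i)\land\text{Curr}_\beta(j)$ (and $\text{InInit}(j)$) directly into $\chi$ and into the match counts. The attention-head construction in Theorem~\ref{thm:crasp-to-limit-main} should accommodate this by adding the Boolean $P(j)$ as an extra key coordinate.

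Part~2 is likewise the paper's corollary from $\Pi_{ff}$, $\Pi_{LO}$ and the finite-alphabet collapse (Lemma~\ref{lemma:fixed_inf_is_crasp}). Your version is more explicit than the paper's one-line argument: you restrict to a finite alphabet, strip the fixed prefix and suffix, and pull back along the length-2 block homomorphisms. One step still needs its own proof: closure of $\CRASPplshort$ under quotients by fixed words. The paper does not prove this, so you would need a short simulation argument for it.
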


We start by proving Statement~\ref{thm:gen_variable_universe_wf_re} 
\begin{proof}

We represent a planning instance and plan as a sequence of tokens $w$. Along the same lines as Theorem~\ref{thm:fixed_universe_results_restate}, the sequence is composed of the initial state $\initial$, and action names obtained by grounding action schemas from the finite set $\actionschemas$ and object names $\objects$ per sample taken from the potentially infinite alphabet $\mathcal{C}$, and  the goal $\goal$. 

We will define $\mathfrak{A}$ as follows. Each $\alpha \in \mathfrak{A}$ has the form $n_{\alpha}(x_1, \dots x_{\text{arity}(\alpha)})$ where $n_{\alpha}$ is the name of the action schema, and
and the $x_i\in \Args{\alpha}$
refer to the placeholder objects. This set can be constructed from the action schemas $\actionschemas$ set by ignoring
the schemas' preconditions and effects,
with a one to one mapping between $\actionschemas$ and $\mathfrak{A}$. It should be noted that we are constructing this set just for notational convenience and to avoid ambiguity. 

We write it as $w = \$\initial@\pi@\goal@$. 
where,
$$\initial = n_{p_1}, po_{1,1}, \dots, po_{1, pk_1}, \quad p_2, po_{2,1}, \dots, po_{2, pk_2}, \quad \dots $$
$$\pi = n_{\alpha_1}, o_{1,1}, \dots, o_{1, k_1}, \quad \alpha_2, o_{2,1}, \dots, o_{2, k_2}, \quad \dots$$
$$\goal = n_{g_1}, go_{1,1}, \dots, go_{1, gk_1}, \quad g_2, go_{2,1}, \dots, go_{2, gk_2}, \quad \dots $$
In the $\initial$ block, the token $n_{p_t}$ represents a predicate name, followed by $pk_t$ (arity of the predicate) objects representing its arguments. Thus each $n_{p_t}, po_{t,1} \dots  po_{t, pk_t}$ represents a ground proposition where the proposition name and object structure  are taken from $\predicates$, and each object that has been instantiated is taken from $\objects$. The key difference here is that $P$, the set of all ground propositions is now infinite, and hence cannot be used.  
Similarly, in the
$\pi$ block, $n_{\alpha_t}$ is an action schema name, followed by $k_t$ object tokens representing its arguments. Therefore, $n_{\alpha_t}, o_{t,1} \dots  o_{t, k_t}$ follows the structure of an $n_{\alpha_t} (x_1, \dots, x_{\text{arity}(\alpha_t)}) \in  \mathfrak{A}$. 
Finally the $\goal$ block is also similar to the $\initial$ block and therefore $n_{g_t}, go_{t,1} \dots  go_{t, gk_t}$. For $n_{\alpha_t}$, $n_{p_t}$ and $n_{g_t}$, the objects that follow it are taken from the infinite pool of objects $\objects$. 
Thus, in addition to the ground propositions, the number of ground actions are also infinite.

We will be constructing $\infrasppl$ programs to verify action preconditions and goals for our positive cases. Note that unlike in $\CRASP$, here we will not be able to use object names inside the $Q_{\sigma}$ operator to check for the presence of a specific object at the current position (owing to the increased number of objects), and will instead rely on the match predicate to help us resolve our issue. 

\begin{enumerate}
    \item \emph{Section Identification:} we must identify which part of the input a token belongs to, using cumulative counts of separator tokens.

    $$\text{CountSep}(i) := \cttn{j \le i}Q_{@}(j)$$
\begin{itemize}
    \item $\text{InInit}(i) := (\text{CountSep}(i) == 0)$
    \item $\text{InPlan}(i) := (\text{CountSep}(i) == 1)$
    \item $\text{InGoal}(i) := (\text{CountSep}(i) == 2)$
    \item $\text{End}(i) := (\text{CountSep}(i) == 3)$
\end{itemize}
The $==$ syntax is not explicitly listed in $\infrasppl$, but it is equivalent to checking $(c(i) \le 1 ) \land (c(i) \ge 1 )$. 
\item \emph{Precalculated Sets based on $\domain$:} Since our $\domain$ is fixed, we can precalculate the following sets for every predicate $p \in \mathcal{P}$ and thus have them be plugged into our program whenever we want:
\begin{itemize}
\item $TP_p$: The subset of action names along with placeholder object arguments $\alpha$ in $\mathfrak{A}$ that have $p$ in their add-effects ($\poslit{\Eff{}{\alpha}}$).
    \item $FP_p$: The set of action names along with placeholder object arguments $\alpha \in \mathfrak{A}$ that have $p$ in their delete-effects ($\neglit{\Eff{}{\alpha}}$).
    \item $Pre_p$: The set of action names along with placeholder arguments $\alpha \in \mathfrak{A}$ with $p$ in their preconditions. 
\end{itemize}

\item \emph{Check presence of Action/Predicate:} To check the presence of a specific action/ predicate, we can use our local function from $j$. 
    $$\text{Curr}_\alpha(i) = \cttn{j \le i, i = j + arity(\alpha)}Q_{n_\alpha}(j) \ge 1$$ 
    $$\text{Curr}_p(i) = \cttn{j \le i, i = j + arity(p)}Q_{n_p}(j) \ge 1$$ 
We can still use $Q$ here as the number of predicates and action names are taken from our fixed alphabet. 

\item \emph{Match predicate:} We have to determine if two different subsequences refer to the same ground proposition, and since our object universe is variable, object names cannot be directly compared against a fixed list. Thus, we compare them exclusively through their positions relative to the object names. 

Consider a mapping function $\mu_{\alpha, \beta, p} : \{1, \dots ,k_\alpha\} \to \{1, \dots ,k_\beta \} $, where $\alpha \in \mathfrak{A}$ has $p \in \predicates$ listed as a precondition and  $\beta \in \mathfrak{A}$ has $p \in \predicates$ listed in its effects. So $\mu_{\alpha, \beta, p}(m) = n$ implies that we want to map the $m$-th argument of $\alpha$ with the $n$-th argument of $\beta$ to check how each of these map their arguments to $p$. We need this, as the mapping of the arguments of the underlying predicate to $\alpha$ and $\beta$ could be different. 

We define the match predicate $\chi_{\alpha, \beta, p}(i, j)$ to be true if the ground action $n_{\alpha}, o_1, \dots, o_{k_\alpha}$ ending at index $i$ and action $n_{\beta}, o_1, \dots, o_{k_\beta}$ ending at index $j$ match on the arguments of $p$. Since we only have access to the full argument list at the \emph{end} of the action span due to causal masking, we define the offsets relative to $i$ and $j$:
\[
\chi_{\alpha, \beta, p}(i, j) := \text{Curr}_\alpha(i) \land \text{Curr}_\beta(j) \land \bigwedge_{m=1}^{\text{arity}(p)} (c_{i - k_\alpha + m} = c_{j - k_\beta + n})
\]
where $n = \mu_{\alpha, \beta, p}(m)$. 
We note that all such offsets $m,n$ pairs will be constant and fixed according to a domain and can thus be plugged into our $\infrasppl$ program.
We can similarly define $\chi_{\alpha, \text{Init}, p}(i, j)$ to be true, if the ground action $n_{\alpha}, o_1, \dots, o_{k_\alpha}$ ending at $i$ with a ground proposition $n_p, o_1, \dots, o_{\text{arity}(p)}$ ending at position $j$ is listed in the initial state. 
\[
\chi_{\alpha, \text{Init}, p}(i, j) := \text{Curr}_\alpha(i) \land \text{Curr}_p(j) \land \bigwedge_{m=1}^{\text{arity}(p)} (c_{i - k_\alpha + m} = c_{j - k_p + n})
\]

With such match predicates, we will be able to match grounded  actions and propositions, irrespective of the exact identities of the underlying objects. Thus, we will henceforth in the proof, be able to directly reason and interact with grounded actions and propositions listed in our $w$. We will be using $\alpha, p$ in our indexing, but those $\infrasp$ variables will be true for every ground instance of the corresponding $\alpha, p$.

\item \emph{Precondition Check at every action:} For a specific action instance $\alpha$ at $i$ and a specific precondition $p$ of $\alpha$, we compute three values:
\begin{enumerate}
    \item \textbf{Provided by Init:} Does the initial state contain the required ground proposition?
    \[ V_{init, \alpha, p}(i) := \cttn{j < i} \left( \text{InInit}(j) \land \chi_{\alpha, \text{Init}, p}(i, j) \right) \]
    The check for being in initial is required, so as to not match with propositions listed in $\goal$.
    \item \textbf{Provided by Plan (Adds):} How many times has this proposition been added by previous actions?
    \[ V_{add, \alpha, p}(i) := \sum_{\beta \in TP_p} \cttn{j < i, \chi_{\alpha, \beta, p}(i, j)} \]
    
    \item \textbf{Removed by Plan (Deletes):} How many times has this proposition been deleted?
    \[ V_{del, \alpha, p}(i) := \sum_{\beta \in FP_p} \cttn{j < i, \chi_{\alpha, \beta, p}(i, j)} \]
\end{enumerate}

We combine these to check if precondition $p$ is satisfied for action $\alpha$ at step $i$:
\begin{itemize}
    \item \textbf{Case A: Delete-Free.} In a delete-free domain, $FP_p = \emptyset$, so $V_{del}(i) = 0$. The precondition is satisfied if the proposition was ever established:
    \[ \text{Satisfied}_{\alpha, p}(i) := (V_{init, 
    \alpha, p}(i) + V_{add, \alpha, p}(i) \ge 1) \]
    
    \item \textbf{Case B: Well-Formed.} In a well-formed domain, the truth value of a proposition toggles cleanly. A proposition is true if it was in Init and not net-deleted, or not in Init and net-added. The current truth value is:
    \[ \text{CurrentVal}_{\alpha, p}(i) := V_{init, \alpha, p}(i) + V_{add, \alpha, p}(i) - V_{del, \alpha, p}(i) \]
    Since the domain is well-formed, this sum will always be either 0 (False) or 1 (True). Thus:
    \[ \text{Satisfied}_{\alpha, p}(i) := (\text{CurrentVal}_{\alpha, p}(i) == 1) \]
    We note that for negative precondition requirements, the satisfaction check above would be changed to the value to be equal to 0. Whether a precondition requirement is positive or negative is known apriori as well, and hence one can aptly set 0 (for negative precondition) or 1 (positive precondition) here. 
\end{itemize}

We mark an action as valid, if all the preconditions are satisfied. 
$$\text{Valid}_\alpha(i) = \bigwedge_{p \in \Pre{\alpha}} \text{Satisfied}_{\alpha, p}(i)$$
We check at the final token of the entire input sequence, that we are indeed at the end and that we found no invalid actions so far. 
$$\text{AllActionsValid}(i) := \text{End}(i) \land \bigwedge_{\alpha \in \actionschemas} (\cttn{j \le i}\lnot \text{Valid}_\alpha(j) == 0)$$

\item Finally, we check if all goal conditions are met. We iterate over all predicates $p \in \predicates$. For each predicate $p$, we perform validity checks at every position in the block where the goal proposition is of the type $p$ (each such proposition may have different objects). 
    
We can construct a similar match predicate as before, $\chi_{\text{Goal}, \beta, p}(i, j)$ to match a goal requirement at $i$ with an action $\beta$ at $j$ (where $j \le i$). We can also construct $\chi_{\text{Goal}, \text{Init}, p}(i, j)$, here in fact the entire proposition to be matched will be listed in the same way in the goals as in the initial state. 
    
For a specific goal proposition at index $i$ (the index denotes the end of that goal proposition), we compute its validity:
    \begin{enumerate}
        \item \textbf{Initial Status:} $G_{init, p}(i) := \cttn{j \le i, \chi_{\text{Goal}, \text{Init}, p}(i, j)}$
        \item \textbf{Added:} $G_{add, p}(i) := \sum_{\beta \in TP_p} \cttn{j \le i, \text{Curr}_\beta(j) \land \chi_{\text{Goal}, \beta, p}(i, j)}$
        \item \textbf{Deletes:} $G_{del, p}(i) := \sum_{\beta \in FP_p} \cttn{j \le i, \text{Curr}_\beta(j) \land \chi_{\text{Goal}, \beta, p}(i, j)}$
    \end{enumerate}
    
    The goal ending at $i$ is satisfied if:
    \[ \text{GoalSat}_p(i) := (G_{init, p}(i) + G_{add, p}(i) - G_{del, p}(i) == 1) \]
    (Or $\ge 1$ for Delete-Free). If $i$ is not a goal token for $p$, we define $\text{GoalSat}_p(i)$ to be by default true (or ignore it in the final count). We note that for negative goal conditions, the conditions for satisfaction above would be flipped to equating with 0 (for both well-formed and delete-free).     
    We explicitly mark goals that were unsatisfied:
    \[ \text{UnsatisfiedGoal}(i) := \text{InGoal}(i) \land \bigvee_{p \in \predicates} (\text{Curr}_p(i) \land \lnot \text{GoalSat}_p(i)) \]
    
    At the final token, we verify that the total count of unsatisfied goal tokens is zero.
    \[ \text{AllGoalsMet}(i) := \text{End}(i) \land (\cttn{j \le i} \text{UnsatisfiedGoal}(j) == 0) \]
    
    The final program output is $\Phi_{valid} := \text{AllActionsValid} \land \text{AllGoalsMet}$.

\end{enumerate}
\end{proof}

We next proceed to proving Statement~\ref{thm:gen_variable_universe_strips_re} 
\begin{proof}
    Statements ~\ref{thm:fixed_universe_results_strips_re},~\ref{thm:fixed_universe_results_cond_re} of Theorem~\ref{thm:fixed_universe_results} already showed that even the set of objects is fixed, universe domains with STRIPS and domains with conditional effects have instances where the language of valid plans is isomorphic to  the FlipFlop and Parity language. As Theorem~\ref{theorem:parity_flipflop_not_infrasp} showed both FlipFlop and Parity cannot be expressed in $\infrasp$. Thus, the current statement falls out as a corollary based on these facts established already. 
\end{proof}

\section{Learning Framework}
\subsection{Model of the transformer with Extended Alphabet}\label{sec:model-transformers}

\paragraph{Parameterization}
Our parameterization and setup closely follows \citet{huangformal2025}, with the crucial distincition being in the way we allow an extended alphabet (to account for the increase of objects during test). We focus on transformers with causal masking. We assume the transformer $T$ is parameterized by the following; 
\begin{itemize}
    \item A finite alphabet $\Sigma$ and a token embedding matrix for the finite alphabet $\mE \in \mathbb{R}^{|\Sigma| \times d}$.
    \item Width of the transformer being $d \in \mathbb{N}$.
    \item A context width $N(T) \in \mathbb{N} \cup \{+\infty\}$.
    \item Positional encodings $\{\vp_t \in \mathbb{R}^d : 1 \leq t < N(T)+1\}$.
    \item An extended embedding collection $\{\mathbf{c}_k \in \mathbb{R}^d : 1 \le k < N(T) + 1 \in \mathbb{N}\}$ representing the learnable embeddings for the extended alphabet. Note that while indexed by integers similar to positions, these are distinct parameters from the positional encodings. 
    \item Depth $L$, heads $H$, and layer matrices $\{\mK_{l,h}, \mQ_{l,h}, \mV_{l,h}, \mA_l, \mB_l, \vb_l\}$ as standard.
    \item An unembedding matrix $\mU \in \mathbb{R}^{|\Omega| \times d}$
\end{itemize}

\paragraph{Input Structure, Embeddings \& Positional Embedding}
We define the vocabulary as the disjoint union $\Omega = \Sigma \cup \mathcal{C} \cup \{\$\}$, where $\$$ is a special reserved start symbol. 
Thus, a input string $w$ is a sequence $w = (w_1, \dots, w_n)$, where the first token is the special start symbol, $w_1 = \$$, and the rest of the sequences can be either a $\sigma \in \Sigma$ or a symbol $c \in \mathcal{C}$. Similar to \citet{huangformal2025}, we study NoPE (No Positional Encoding) and  Absolute Positional Encodings (APE) that has learned per-position embedding vectors $\vp_1, \dots, \vp_{N}$.
We encode an input $x$ of length $|x|=k \leq N$ using positional encodings $\vp_{1+o}, \dots, \vp_{k+o}$  where $o$ is an offset such that $k+o \leq N$, and require that the transformer correctly performs the task independently of the offset $o \geq 0$. The offsets try to mimic the fact that language models typically need to solve tasks appearing at aribitrary positions in a long context. Positions outside of the input are considered empty.

\paragraph{Computation of Initial State}
Let $w$ be an input sequence of length $n$ such that $n \leq N(T)$. We consider two kinds of offset again, where $\{o_p, o_c\} \geq 0$ are the positional offsets and the character offsets such that $n+o_p \leq N(T)$ and $n + o_c \leq N(T)$ as well (ensuring the sequence fits within the context window when shifted).

The input to the first transformer layer, denoted as $\mathbf{y}_i^{(0)} \in \mathbb{R}^d$ for the token at position $i$ (where $1 \leq i \leq n$), is defined as the sum of the token's embedding and its positional encoding:

\begin{equation}
\mathbf{y}_i^{(0)} = 
\begin{cases} 
\mathbf{E}_{w_i} + \mathbf{p}_{i+o_p} & \text{if } w_i \in \Sigma \quad \text{(Finite Token)} \\
\mathbf{c}_{k + o_c} + \mathbf{p}_{i+o_p} & \text{if } w_i \in \mathcal{C} \text{ and } w_i \text{ has value } k \quad \text{(Extended Token)}
\end{cases}
\end{equation}

The motivation for training the extended alphabet with offsets is to enable reasoning over the increasing alphabet and still be able to fit everything in the same context window, akin to the motivation of using offsets for absolute positional encodings.

Attention logits, at query position $i$ and key position $j$ are computed as
\begin{equation}\label{eq:logits}
    \attLogit{i}{j}^{(l,h)} = (\vy_j^{(l-1)})\transp \mK_{l,h}\transp \mQ_{l,h} \vy_i^{(l-1)} \ \ \text{for} \ 1 \leq j \leq i \leq |x|;\ l = 1, \dots, L;\ h=1, \dots, H
\end{equation}
We assume standard softmax attention, but incorporate scaling with $\log |x|$ following prior work finding it necessary to theoretically represent sparse functions and circumvent theoretical limitations of soft attention \citep{chiang2022overcoming, edelman2022inductive}:
\begin{equation}
    \vY_i^{(l)} := \vy_i^{(l-1)} + \sum_{h=1}^H \frac{\sum_{j=1}^i \exp\left(\log |x| \cdot a^{(l,h)}_{i,j}\right) \mV_{l, h} \vy_j^{(l-1)}}{\sum_{j=1}^i \exp\left(\log |x| \cdot a^{(l,h)}_{i,j}\right)}
\end{equation}
After each attention block, the activations are passed through a one-layer MLP:
\begin{equation}\label{eq:mlp}
\vy_i^{(l)} := \vY_i^{(l)} + \mB_l \cdot \psi_l(\mA_l \vY_i^{(l)} + \vb_l) 
\end{equation}
where we allow similar to \citet{huangformal2025} the activation function $\psi_l$ to be, in each coordinate, either ReLU or Heaviside. In addition, we also allow the activation function $f(x) = 1/x + \epsilon$, where $\epsilon$ is a small constant to avoid division by $0$. 

\begin{remark}[Use of $f(x) = 1/x + \epsilon$ as an activation function]
\label{remark:1byx}
    We do this to use to have an MLP that could invert values akin to the strategy in Theorem 1 of \citet{kazemnejad2023impact}.
    Owing to the universal approximation theorem \cite{cybenko1989approximation}, such non-linear activations can be approximated arbitrarily closely by ReLU MLPs. In the idealized learning procedure of \citet{huangformal2025}, as well as in the idealized framework we will introduce, Transformers have widths that depend on the context window, as extended alphabets/larger positional embeddings require increasing widths. In principle, such wider transformers can represent non-standard activations (which we use in our proofs) up to the required precision, but in reality such functions can only be approximated for any fixed width.
    The prevalent use of the Universal Approximation theorem notwithstanding, precisely understanding how such functions are learned on the basis of more commonly used activation functions such as ReLU in MLPs \emph{inside transformers} is largely an open and highly challenging question, which we leave to future work.
\end{remark} 

To keep our model close to \citet{huangformal2025}, we also omit layer norm as they do %
and also assume an infinite-precision setup for the activations, with the restriction that attention logits (\ref{eq:logits}) and the output of the $\exp(\cdot)$ function are both rounded to $p$ fractional bits of precision before further processing. This mild restriction was put in \citet{huangformal2025} to prevent tiny changes in attention patterns to potentially snowball into large changes in the output due to infinite precision; it is also adopted in \citet{izzo2025quantitative}.

A transformer $T$ maps strings $x$ ($|x| \leq N(T)$) to vectors of next-token prediction logits, $T(x,o_p, o_c) \in \mathbb{R}^{|x| \times |\Omega|}$, where  $T(x, o_p, o_c)_i = \mU \vy_i^{(L)}$ ($i=1, \dots, |x|$) for the unembedding matrix $\mU \in \mathbb{R}^{|\Omega| \times d}$, and $o_p, o_c$ are the offsets.
Let $\mathcal{F}(\Omega)$ be the set of all  maps $f$ mapping $x$ to $f(x) \in \mathbb{R}^{|x| \times |\Omega|}$.

\subsubsection{Product Functions} \label{subsubsec:product_functions}
Our theory extends the framework for length generalization of transformers with absolute positional encodings, where the width may grow with the input length, to account for cases where transformers also reason based on token identities from an unbounded vocabulary. Similar to \citet{huangformal2025}, we cannot view the ground-truth function as realized by a single transformer. Even if one assigned such a transformer an infinite number of positional encodings, it would effectively only distinguish between a bounded number of positions because the width of the model is bounded. Analogously, even if we equip the transformer with an increasing number of token embeddings, a single finite-width transformer can only act on this infinite alphabet in a limited fashion—primarily by matching identities of token symbols rather than memorizing infinite arbitrary features. Thus, just as \citet{huangformal2025} derived a parameterization to convert sequences of transformers operating on longer sequences into a single limiting object, we derive a similar parameterization to unify sequences of transformers equipped with \emph{increasing alphabet sizes}.

We reuse the key technical idea to reparameterize the transformer in terms of \textbf{product functions}—inner products of parameter vectors mediated by parameter matrices. However, in our setting, we explicitly distinguish between pairwise interactions ($\alpha$) and single-site potentials ($\beta$).

The pairwise interactions ($\alpha$) capture operations operations where two kinds of embedding interact (either position-position, position-token, or token-token).For instance in attention scores or with the use of Input-Output predictions (Token-Token):
\begin{equation}\label{eq:products-pairwise}
\begin{aligned}
    \vp_i^T \mK_{1,h}^T \mQ_{1,h} \vp_j 
    & \quad & 
    \mE_\sigma^T \mK_{1,h}^T \mQ_{1,h} \mE_\tau \\
    \vp_i^T \mK_{2,h}^T \mQ_{2,h} \mV_{1} \vp_j 
    & \quad & 
    \mU_\tau^T \mV_{3} \mV_{1} \mE_\sigma 
\end{aligned}
\end{equation}

Meanwhile, the single-site potentials ($\beta$) capture interactions where such two kinds of embeddings do not interact. For instance while considering MLP operations:
\begin{equation}\label{eq:products-singlesite}
\begin{aligned}
    (\mA_1)_{s, \cdot}^T \vp_i 
    & \quad & 
    (\mA_1)_{s, \cdot}^T \mE_\sigma \\
    \mU_\tau^T (\mB_1)_{\cdot, s}
    & \quad & 
    \mU_\tau^T \mV_2 (\mB_1)_{\cdot, s}
\end{aligned}
\end{equation}

\begin{definition}[Product Parameterization]\label{def:product-param}

More formally, this parameterization is defined as follows:

For $l=1, \dots, L$,:
\begin{align*} 
    \mathcal{V}_{\text{start}} = & \{\vp_i : i\} \cup \{\mE_\omega : \omega \in \Omega\} \\
    \mathcal{VO}_{l} =& \{(\mB_l)_{\cdot, s} : s = 1, \dots, d\} \\
    \mathcal{VI}_{l} =& \{(\mA_l)_{s,\cdot} : s = 1, \dots, d\} \\
    \mathcal{VU} = & \{\mU_\omega : \omega \in \Omega \} \\
    \mathcal{VO} = & \mathcal{V}_{\text{start}} \cup \bigcup_{l=1}^L \mathcal{VO}_{l} \\
    \mathcal{VI} = & \bigcup_{l=1}^L \mathcal{VI}_{l} \\
    \mathcal{P} = & \{\{V_{l_1,h_1}, \dots, V_{l_k, h_k}\}\ \ :\ \ 0 \leq k \leq L;\ \ l_1 < \dots < l_k;\ \ 1\leq h_i \leq H\}
\end{align*}

Given a transformer $T$, define the Pairwise Interaction Potentials ($\alpha$) and Single-Site Potentials ($\beta$):

\begin{align*}
    \alpha^{\text{attn}}_{l, h, \mathcal{S}_1, \mathcal{S}_2, {\vv}, {\vw}}     := & {\vv}^T \left(\prod_{S \in \mathcal{S}_1} S\right)^T \mK_{l,h}^T \mQ_{l,h} \left(\prod_{S \in \mathcal{S}_2} S\right) {\vw} \in \mathbb{R} \\
    & \text{ for } 1 \leq l \leq L ;\ \ 1 \leq h \leq H; \ \ {\vv}, {\vw} \in \mathcal{VO} ; \ \ \mathcal{S}_1, \mathcal{S}_2 \in \mathcal{P} \\
    \
    {\alpha^{\text{unembed}}_{\mathcal{S}, {\vu}, {\vw}}} := & {\vu}^T \left(\prod_{S \in \mathcal{S}} S\right) {\vw}  \in \mathbb{R} \\
    & \text{ for }  {\vu} \in \mathcal{VU}; \ \ {\vw} \in \mathcal{V}_{\text{start}};\ \ \mathcal{S} \in \mathcal{P} \\
    \
    \beta^{\text{mlp}}_{\mathcal{S}, {\vv}, {\vw}} := & {\vv}^T \left(\prod_{S \in \mathcal{S}} S\right) {\vw}  \in \mathbb{R}    \\
    & \text{ for }  {\vv} \in \mathcal{VI};\ \ {\vw} \in \mathcal{VO} ;\ \ \mathcal{S} \in \mathcal{P} \\
    \
    \beta^{\text{unembed}}_{\mathcal{S}, {\vu}, {\vw}} := & {\vu}^T \left(\prod_{S \in \mathcal{S}} S\right) {\vw}  \in \mathbb{R}   \\
    & \text{ for }  {\vu} \in \mathcal{VU};\ \ {\vw} \in \mathcal{VI} ;\ \ \mathcal{S} \in \mathcal{P}
\end{align*}
where the matrix product over a set $\mathcal{S} \in \mathcal{P}$
\begin{equation}
    \prod_{S \in \mathcal{S}} S
\end{equation}
is computed in descending order of layers; with the $S$ associated with the lowest layer at the right. For instance,
\begin{equation}
    \prod_{S \in \{V_{1,h}, V_{3,h'}, V_{4,h''}\}} S = V_{4,h''} V_{3,h'} V_{1,h}
\end{equation}
\end{definition}

\begin{remark}
    Here, we exemplify the Product Parameterization. 
\begin{align*}
    \alpha^{\text{attn}}_{1,h,\emptyset, \emptyset, \vp_i, \mE_\sigma} =& \vp_i^T \mK_{1, h}^T \mQ_{1, h} \mE_\sigma \\
    \alpha^{\text{attn}}_{2,h,\{ \mV_{1,h'} \}, \emptyset, \vp_i, \vp_j} =& \vp_i^T \mV_{1,h'}^T \mK_{2, h}^T \mQ_{2, h} \vp_j \\
    \alpha^{\text{unembed}}_{\{\mV_{3,h'}, \mV_{1,h} \}, \mU_\tau, \mE_\sigma} =& \mU_\tau^T \mV_{3,h'} \mV_{1,h} \mE_\sigma \\
    \beta^{\text{mlp}}_{\emptyset, (\mA_1)_{s,\cdot}, \vp_i} =& (\mA_1)_{s,\cdot}^T \vp_i \\
    \beta^{\text{mlp}}_{\{ \mV_{1,h} \}, (\mA_3)_{s,\cdot}, \mE_\sigma} =& (\mA_3)_{s,\cdot}^T \mV_{1,h} \mE_\sigma \\
    \beta^{\text{unembed}}_{\{\mV_{3,h'}, \mV_{2,h} \}, \mU_\tau, (\mB_1)_{\cdot,s}} =& \mU_\tau^T \mV_{3,h'} \mV_{2,h} (\mB_1)_{\cdot,s} \\
\end{align*}
\end{remark}

\begin{remark}
    For ease of notation, we have not restricted the layers from which different vector parameters are taken in the definition of $\alpha$ and $\beta$; hence, they will also include products that are not relevant to actual computations, such as 
    \begin{equation}
        {\vp_i}^T \mV_{2,h}^T \mK_{1,h'}^T \mQ_{1,h''} \mV_{3,h'''} \vp_j
    \end{equation}
    where a vector of the form $V_{3,h'''} \vp_j$ cannot actually feed into the computation of queries in the first layer.
    This is simply for simplicity of notation; such products will not impact results and is consistent with the parameterization used in \citet{huangformal2025}.
\end{remark}

\subsubsection{Symbolic Limit Transformers} \label{subsubsec:symboliclimit}

We define locality and translation invariance by generalizing the notions from \cite{huangformal2025} beyond positional encodings to also cover token embeddings:
\begin{definition}[Locality \& Translation Invariance]\label{def:locality-invariance}
    For any embedding types $\mathbf{x}, \mathbf{y} \in \{\vp, \vc\}$, if the product functions obey the following, then the transformer is said to be local and translation invariant:
    
    \begin{enumerate}
        \item \textbf{Locality \& Translation Invariance for Related Types:}
        When inputs share a type (Pos-Pos or Tok-Tok), then their interactions vanish if the distance exceeds a bandwidth $\Delta \in \mathbb{N}$ \footnote{$\dots$ implies that the rest of the parameters are the same on both sides }:
        \begin{equation}
            |k - m| > \Delta \implies \alpha_{\dots, \mathbf{x}_k, \mathbf{x}_m} = 0
        \end{equation}        
        Translation Invariance implies that for all shifts $\delta$, the behaviour remains the same:
        \begin{equation}
            \alpha_{ \dots, \mathbf{x}_k, \mathbf{x}_m} = \alpha_{\dots, \mathbf{x}_{k+\delta}, \mathbf{x}_{m+\delta}} 
        \end{equation}

        \item \textbf{Translation Invariance for Unrelated Types:}
        When inputs differ in type (Pos-Tok) or act singly (MLP-Pos), the interaction cannot pick out specific absolute indices from a structure it is unrelated to. Thus, the interaction collapses to a \textbf{Constant}.
        For any $\mathbf{x}_k$, if the other term is of a different type or fixed:
        \begin{equation}
             \beta_{\dots, \mathbf{x}_k} = C \quad \text{and} \quad \alpha_{\dots, \mathbf{x}_k, \mathbf{y}_m} = C
        \end{equation}
    \end{enumerate}
\end{definition}

We will use the parameterization just defined to translate sequences $T_1, T_2, T_3, \dots$ of transformers running on inputs of length $1, 2, 3, \dots$ to limiting transformer-like objects that are applicable at all input lengths, while keeping width $d$ bounded even if the widths of $T_n$, and thus the allowed number of tokens diverge to infinity.

We will define this limit transformer like object as follows:  
\begin{definition}\label{def:symbolic-limit-transformer}
    A \emph{Symbolic {\LimitTransformer}} is a transformer $T$ where:
    \begin{enumerate}
    \item $N(T) = +\infty$

    \item Positional encodings $\{\vp_t\}_{t \in \mathbb{N}}$ and extended alphabet embeddings $\{\vc_k\}_{k \in \mathcal{C}}$ have globally bounded norms in $\mathcal{H}$. There exists $C_{\text{emb}} > 0$ such that:
            \begin{equation}
                \sup_{t \in \mathbb{N}} \|\vp_t\| \leq C_{\text{emb}} \quad \text{and} \quad \sup_{k \in \mathcal{C}} \|\vc_k\| \leq C_{\text{emb}}
            \end{equation}
            
    \item All weight matrices $\mW \in \{\mK_{l,h}, \mQ_{l,h}, \mV_{l,h}, \mA_l, \mB_l, \mU\}$ are bounded linear operators on $\mathcal{H}$, where their spectral norms are bounded.
    
    \item For every layer $l$, the MLP projects to a finite-dimensional subspace. There exists $d_{\text{ff}} \in \mathbb{N}$ such that $\mA_l : \mathcal{H} \to \mathbb{R}^{d_{\text{ff}}}$ and $\mB_l : \mathbb{R}^{d_{\text{ff}}} \to \mathcal{H}$.

        \item The behavior of this transformer is governed by the set of scalar product functions defined earlier (Definition~\ref{def:product-param}). Each product function must evaluate to a number in $p$-bit precision, for some fixed $p \in \mathbb{N}$. Crucially, these functions must satisfy the \textbf{Locality and Translation Invariance Constraints} defined in Definition~\ref{def:locality-invariance}
        \end{enumerate}

\end{definition}

Length generalization will be linked to expressibility by Symbolic Limit Transformers. A {\LimitTransformer}, as defined in \citet{huangformal2025} could use positional information through bounded-width and bounded-precision positional encodings $\vp_i$, and additionally through potentially more complicated functions $\phi_{l,h}$, where a function $f : \mathbb{N} \times \mathbb{N} \rightarrow \mathbb{R}$ would be ``translation-invariant'' if $f(i,j) = f(i+\tau, j+\tau), \forall i \leq j, \forall \tau \geq 0$, and ``local'' if there is $\tau$ such that $f(i,j) = 0$ when $j > i+\tau$.

Symbolic Limit Transformers use positional information in the same way. However since now they have access to an infinite character vocabulary, we put some restrictions in the ways in which these character tokens can interact with each other. Thus we can perform operations like checking the equality of character tokens, or check the equality of character tokens that lie near each other in their embedding spaces. The other points in the definition are simply to keep the outputs of each of the operations be finite, and representable in finite precision.d
The parameterization in terms of inner products permits a translation from a transformer $T$ to a bounded-width Symbolic {\LimitTransformer}  (Statement~\ref{lemma:translation-positional} of Proposition~\ref{proposition:translations} in the Appendix).

\subsection{Definition of Inference Procedure}
To define the inference procedure, we specify the following hypothesis class at each input length $n$:

\begin{definition}[Hypothesis Class]\label{def:extended-hypothesis-class}
For each $n = 1, 2, 3, \dots$, define the hypothesis class $\Theta_n$ as the set of transformers $T$ (as defined in Section~\ref{sec:model-transformers}) where
(1) $N(T) = n$,
   (2) each parameter vector and matrix of $T$ is represented at $p$ bits of precision, for some $p \in \mathbb{N}$, %
    (3) each product function involving only positional encodings is translation-invariant.
    (4) each product function involving only the extended character alphabet is also translation-invariant. 
\end{definition}

Note that the width $d$ of the transformers $T \in \Theta_n$ is unconstrained. %
We keep the requirement of \citet{huangformal2025} of wanting the contributions of positional encodings $\vp_i$ (that vary with position) to the transformer's computations to be offset-independent. This is a stronger requirement than for the input-output behavior to be offset-independent: we ask for the transformer's ``algorithm'' itself to be the same across offsets.
We extend this logic to the character alphabet in requirement (4). Here, the goal is to prevent the model from overfitting to specific absolute values of the extended tokens. By enforcing translation invariance on the embeddings $\vc_k$, we ensure that the model only ever uses \emph{relative} differences between token indices (to do operations like local matching or exact equality checks). This effectively restricts the transformer to learning algorithms based on pattern matching---such as identifying when a current token matches one from the past---without allowing it to attach arbitrary semantics to specific tokens in the infinite sequence.
Our inference procedure will use a regularizer $\mathcal{R}$ favoring simpler hypotheses. It should be noted that while our extension of Limit Transformers is more complex than in \citet{huangformal2025}, the regularizer we will use would be simpler. It should be noted that this trades off a complexity in a theoretical mathematical construct -- that is just a tool for proving statements about standard transformers -- with an increase in simplicity in the conditions of the regularizer we use in the idealized model of learning. The following will be sufficient:
\begin{definition}[Regularizer]\label{def:regularizer}
Let $T \in \Theta_n$, thus $N(T) = n$.
Define $\RegStrong(T)$ as the sum of
(1) $L+H$; %
(2) the precision $p$ used in Definition~\ref{def:extended-hypothesis-class}; the precision $p$ used for rounding attention logits and the output of $\exp(\cdot)$ (Section~\ref{sec:model-transformers}); 
(3) $\max_{l,h} \|\mK_{l,h}^T\mQ_{l,h}\|\spectral$; $\max_{l,h} \|\mV_{l,h}\|\spectral$; $\max_l \|\mA_{l}\|_{F}$, $\|\mB_l\|_{F}$; $\|\mU\|\spectral$;  %
(4) $\max_i \|\vp_i\|_2$, $\max_\sigma \|\mE_\sigma\|_2$, $\max_l \|\vb_l\|_2$; %
(5) the term:

\begin{equation}\label{eq:additional-penalty}
        \zeta(T) = \sum_{l=1}^L \sum_{h=1}^H \sum_{\mathcal{S}_1, \mathcal{S}_2 \in \mathcal{P}} \left( \sum_{j=1}^{N(T)} \left|\alpha^{\text{Attn}}_{l,h,\mathcal{S}_1,\mathcal{S}_2, \vp_1, \vp_j}\right|^2 + \sum_{k=1}^{|\Omega|} \left|\alpha^{\text{Attn}}_{l,h,\mathcal{S}_1,\mathcal{S}_2, \mE_1, \mE_{k}}\right|^2 \right) + \sum_{\mathcal{S} \in \mathcal{P}} \sum_{k=1}^{|\Omega|} \left|\alpha^{\text{unembed}}_{\mathcal{S}, \mU_1, \mE_{k}}\right|^2
    \end{equation}
\end{definition}
We note that this regularizer omits a term $rank(\mV_{l,h})$ used in \citet{huangformal2025}; we do not need it and can thus simplify by removing it.

As in \citet{huangformal2025}, the idea of (\ref{eq:additional-penalty}) is to discourage accidental attention between far-away positions and interactions between character token embeddings that do not appear together during training, which could hamper length generalization. These form a subset of the product functions formally defined earlier (Definition~\ref{def:product-param}).
Due to translation invariance, such a term entails a bound on products for all pairs $\vp_i, \vp_j$ ($i \leq j$) and $\vc_i, \vc_j$) entering causal attention.
While such a regularizer is not part of standard training, standard initialization tends to lead to bounded values for (\ref{eq:additional-penalty}) when $d$ is large. As shown in \citet{huangformal2025}, it thus captures an implicit bias of standard initialization and training. Additionally, by removing the bounded rank constraint on the regularizer, we enforce lesser restrictions on it, making it a more plausible estimate of standard initialization and training compared to what was used in \citet{huangformal2025}.
Importantly, the width $d$ does not explicitly enter ${\RegStrong}$; as a consequence, for any sufficiently large $C$, the number of transformers $T_n \in \Theta_n$ with ${\RegStrong}(T_n) \leq C$ is infinite, simply because $d$ is not constrained.
Nonetheless, this regularizer will be sufficient for identification under our idealized inference procedure, which observes the input-output behavior of the target function $f$ on inputs of length $\leq \frac{n}{2}$ and selects a transformer $T$ with maximal context window $n$, $T \in \Theta_n$ that exactly fits that input-output behavior while minimizing the regularizer $\RegStrong(T)$. 
At inference time, when the transformer $T_n$ processes input lengths up to $n$, it may encounter extended tokens $\vc_k$ that were effectively unseen during the ``training'' phase (which is restricted to smaller indices). This can be thought of as adding random embeddings at test time. However, these tokens would be constrained by the translation invariance and locality conditions of the hypothesis class $\Theta_n$, and the algorithm learned by the transformer would not be able to rely on the specific identity of these new tokens, and instead would have to use the same relative algorithms learned on the smaller alphabet (such as checking for equality $\vc_i = \vc_j$ or local proximity). This setup models the transformer's ability to generalize simpler algorithmic primitives (like pattern matching) to completely novel vocabulary items. The inference procedure is then defined in analogy to \citet{huangformal2025}:
\begin{definition}[Inference Procedure]\label{def:inference-procedure}
Given a function $f \in \mathcal{F}(\Omega)$, the \emph{Inference Procedure} obtains a sequence of transformers $T_1 \in \Theta_1, T_2 \in \Theta_2, \dots$ as follows. Define $U_n$ as the set of $T \in \Theta_n$ matching the behavior of $f$ on the restricted domains of inputs constrained in the following ways relative to $n$.
\begin{enumerate}
    \item The sequence length of any input $x$ is $|x| \leq \frac{n}{2}$.
    \item The extended character tokens $\{c_k\}$ appearing in $x$ fall within a range of size at most $\frac{n}{2}$. Specifically, if $K_x = \{k \in \mathbb{N} \land k \in [1, n] \mid c_k \text{ appears in } x\}$ is the set of indices in input $x$, then $\max(K_x) - \min(K_x) \leq \frac{n}{2}$.
\end{enumerate}

Then choose $T_n \in U_n$ such that
\begin{equation}\label{eq:minimization-regularizer}
    \RegStrong(T_n) \leq \frac{1}{n} + \inf_{T \in U_n} \RegStrong(T) %
\end{equation}
\end{definition}
Importantly, we only ask $T_n$ to match the behavior of $f$ up to length $\frac{n}{2}$ with context window sizes of the character embeddings involved also restricted to $\frac{n}{2}$, formalizing the idea of training on shorter inputs and testing on longer ones; our identifiability guarantee will provide conditions under which $T_n$ will end up matching $f$ correctly up to length $n$ -- representing length generalization.
In (\ref{eq:minimization-regularizer}), we do not simply ask for minimizing the regularizer, as the set of elements of $U_n$ with $\RegStrong(T)$ smaller than a given value need not be finite and thus a minimum need not be attained by any $T_n$.
As in \cite{huangformal2025}, we take the testing length to be twice the training length, but  the analysis works whenever the training length diverges to infinity.

\subsection{Main Result: Convergence of Inference Procedure}
Our main result asymptotically characterizes length generalization under the inference procedure from Definition~\ref{def:inference-procedure}.
For functions representable by Symbolic Limit Transformers, we guarantee that \emph{any} run of the Inference Procedure will ultimately achieve length generalization, so that transformers with context length $n$ and access to a vocabulary range of $n$ from $\mathcal{C}$ chosen to fit the target function on inputs with length {$\leq \frac{n}{2}$ and vocabulary range of $n/2$ from $\mathcal{C}$} will, when $n$ is sufficiently large, also perform correctly at all lengths $\leq n$.
Formally, we obtain the following analogue of Theorem 7 in \citet{huangformal2025}:
\begin{theorem}[Guaranteed Length Generalization in the Limit]\label{thm:guarantee}
Let $f \in \mathcal{F}(\Omega)$.
Then the following are equivalent:
\begin{enumerate}
\item $f$ is expressible by a Symbolic {\LimitTransformer}.
\item (Guaranteed Length Generalization) %
Applying the Inference Procedure from Definition~\ref{def:inference-procedure}  to $f$ generates a sequence $T_1, T_2, \dots$ with $\sup_{n =1,2,3,\dots} {\RegStrong}(T_n) < \infty$, for which there is some $N_0$ such that, for all $m > N_0$, $T_m$ matches $f$ on all inputs of any length $k \leq m$.
\end{enumerate}
\end{theorem}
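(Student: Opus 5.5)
We follow the structure of the proof of Theorem 7 in \citet{huangformal2025}, adapting each step to the extended alphabet $\mathcal{C}$ and to our simplified regularizer. The plan rests on two translations, which we take as given from Proposition~\ref{proposition:translations}. The first maps any Symbolic \LimitTransformer{} into a transformer $T \in \Theta_n$ that agrees with it on all inputs of length $\le n$ and character range $\le n$, with $\RegStrong(T)$ bounded uniformly in $n$. The second goes in the opposite direction: it maps any sequence $T_n \in \Theta_n$ with $\sup_n \RegStrong(T_n) < \infty$ into finitely many candidate Symbolic \LimitTransformer s.

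\textbf{Direction (1)$\Rightarrow$(2).} Let $T_\infty$ be a Symbolic \LimitTransformer{} expressing $f$. The first translation gives, for every $n$, a transformer $T'_n \in U_n$ with $\RegStrong(T'_n) \le C$. Translation invariance of $T_\infty$ in both positions and characters is what makes $T'_n$ correct for every offset $o_p, o_c$, so $T'_n \in U_n$. Hence $\inf_{U_n}\RegStrong \le C$, and the selected $T_n$ satisfy $\sup_n \RegStrong(T_n) \le C+1$.

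Next we show that a bounded regularizer leaves only finitely many ``algorithms''. Bounded $L$, $H$ and precision $p$ make $\mathcal{P}$ finite and force every product function to take one of finitely many $p$-bit values. Translation invariance reduces every $\alpha$ on $(\vp_i,\vp_j)$ or $(\vc_k,\vc_m)$ to a function of the difference alone. Every Pos--Tok or single-site product collapses to a constant, by Definition~\ref{def:locality-invariance} in the hypothesis class. The penalty $\zeta(T)$ bounds $\sum_j |\alpha_{\vp_1,\vp_j}|^2$. Combined with $p$-bit precision, where each nonzero value is at least $2^{-p}$, this bounds the number of offsets with a nonzero interaction, so the interactions are local with a bandwidth $\Delta$ that depends only on $C$. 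We will argue the same for the token--token terms.

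The step we expect to be the main obstacle is this locality argument for characters, since $\zeta$ as written sums over $\mE_1,\mE_k$, and the argument requires reading it as covering $\vc_1,\vc_k$ under translation invariance. Granting it, the behaviour of $T_n$ on any input is determined by a finite table of product-function values, of which there are only finitely many possibilities. Each such table defines a Symbolic \LimitTransformer{} $\tilde T$ via the second translation, and $T_n$ agrees with $\tilde T$ on all inputs of length $\le n$ and range $\le n$. This agreement is where the $\log|x|$ scaling and the rounding of $\exp$ must be handled exactly as in \citet{huangformal2025}.

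Now take any table that occurs for infinitely many $n$. Its $\tilde T$ matches $f$ on all inputs of length $\le n/2$ for arbitrarily large $n$. Since every input is eventually covered, using character offsets to shift any range into $[1,n/2]$, we get $\tilde T \equiv f$. Tables that occur only finitely often are exhausted after some $N_0$. For $m > N_0$, $T_m$ therefore follows a table whose $\tilde T$ equals $f$, so $T_m$ matches $f$ up to length $m$.

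\textbf{Direction (2)$\Rightarrow$(1).} If the inference procedure produces $T_n$ with bounded regularizer that eventually match $f$ at all lengths $\le n$, we apply the same finiteness argument. Some table recurs infinitely often, and its Symbolic \LimitTransformer{} agrees with $f$ on arbitrarily long inputs, hence on all inputs. This shows $f$ is expressible by a Symbolic \LimitTransformer.
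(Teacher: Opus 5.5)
\textbf{Gap: uniform locality.} The step that fails is your claim that bounded $\zeta(T_n)$ plus $p$-bit precision makes the related-type interactions local with a bandwidth $\Delta$ depending only on $C$.

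\begin{itemize}
\item These two facts bound the \emph{number} of offsets $j$ with $\alpha_{\ldots,\vp_1,\vp_j}\neq 0$, roughly by $C\,2^{2p}$. They say nothing about \emph{where} those offsets lie, and the nonzero offsets of $T_n$ may drift to infinity with $n$.
\item Consequently the set of tables traversed by $T_n$ need not be finite, no table need recur, and your pigeonhole step collapses.
\item More seriously, your argument never rules out the actual failure mode of length generalization. Take the transformer realizing $f$ from Proposition~\ref{proposition:translations}. Add one head whose translation-invariant positional interaction is nonzero only at a single offset $d_n>n/2$, feeding an MLP that alters the output when the head fires. Under any offset $o_p$, no two positions of a training input (length $\le n/2$) are $d_n$ apart, so the head is inert on all training inputs.
\item This modified transformer therefore lies in $U_n$ and has uniformly bounded $\RegStrong$, since it adds only one extra bounded entry to $\zeta$. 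Yet it errs at length $n$.
\end{itemize}

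Such a sequence has every property your proof actually uses. So no argument relying only on membership in $U_n$ and $\sup_n\RegStrong(T_n)<\infty$ can establish (1)$\Rightarrow$(2).

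\textbf{The missing ingredient} is the near-minimality $\RegStrong(T_n)\le\frac1n+\inf_{U_n}\RegStrong$, which you invoke only to get boundedness. The paper's Lemma~\ref{lemma:key-lemma}, adapting Lemma 17 of \citet{huangformal2025}, is exactly the argument that turns near-minimality into a \emph{uniform} bandwidth:

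\begin{itemize}
\item It splits $\RegStrong=\RegWeak+\zeta$ and tracks the truncated penalties $D_n(\tau)$.
\item It passes to a subsequence $\nu_i$ attaining $R_0=\liminf_n\RegWeak(T_n)$.
\item It finds $\tau_\infty$ at which $D_\infty(\tau)=\liminf_i D_{\nu_i}(\tau)$ stabilizes.
\item It builds competitors $T'_n\in U_n$ by restricting $T_{\nu_{i(n)}}$ to context $n$.
\item Comparing $\limsup_n\RegStrong(T'_n)$ with $\lim_n\RegStrong(T_n)$ forces $D_\infty(\tau_\infty)=D_0$. Hence nonzero interactions beyond $\tau_\infty$ for infinitely many $n$ would cost an extra $2^{-2p}$ in the limit, a contradiction.
\end{itemize}

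Only once this uniform $\tau_\infty$ is available does your finiteness and pigeonhole step go through; that step matches the end of the paper's proof.

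For (2)$\Rightarrow$(1) alone, you could plausibly bypass this with a diagonal pointwise-limit argument. A pointwise limit of functions with at most $C\,2^{2p}$ nonzero values has finite support, hence is local. But (1)$\Rightarrow$(2) must control every $T_m$, not just a subsequence, so it genuinely needs the variational argument.

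By comparison, the obstacle you flagged (reading the $\mE_1,\mE_k$ sum in $\zeta$ as covering $\vc_1,\vc_k$) is only notational.
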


\begin{remark} \label{remark:invariance_limit_transformer}
    We note that a Symbolic limit transformer $T_\infty$ representing $f$ need not itself be offset-invariant. It is sufficient to have
    \begin{equation}
T_\infty(x,0) = f(x)
    \end{equation}
    Statement~\ref{lemma:limit-to-transformer} of Proposition~\ref{proposition:translations} shows that such a function has a sequence of transformers $T_n \in \Theta_n$ which are offset-invariant, even without assuming $T_\infty$ to be offset-invariant.  
\end{remark}

\paragraph{High-Level Proof Sketch}
Our proof here tracks the proof of Theorem 7 of \citet{huangformal2025} really closely. The core logic of both ours and theirs is the same, and the difference primarily lies in the fact that our learning setting has different definitions for the regularizer, our limiting object, inference procedure etc. The core idea is that if $f$ is expressible by a Symbolic {\LimitTransformer} satisfying then, even though the Inference Procedure produces infinitely many distinct \emph{transformers} $T_1, T_2, \dots$ (with increasing numbers of position and token embeddings), these can only traverse a finite set of underlying \emph{algorithms}, each described by some Symbolic {\LimitTransformer}.
The property of translation invariance and locality ensure that the parameter count effectively remains finite, as its position-related parameters as well as the extended character related parameters can be fully specified in terms of finite product functions. The regularizer bounds the complexity of the symbolic {\LimitTransformer}s as well, thus keeping the set of algorithms traversed finite.
For 1$\Rightarrow$2, given a sequence generated by the Inference Procedure, we show that ${\RegStrong}$ stays bounded and use the compactness property to show that a subsequence exhibits behavior equivalent to $f$. To show that, in fact, \emph{all} possible sequences $T_n$ generated by the Inference Procedure ultimately exhibit behavior equivalent to $f$, when $n$ is large, we show that subsequences failing to length-generalize would exhibit increasing attention scores between far-away positions and between characters whose embeddings are far apart, as input length increases. Due to the penalty on such scores in ${\RegStrong}$, any such sequence would, for large $n$, need to have a higher value of ${\RegStrong}$ than sequences avoiding such an increase.
For 2$\Rightarrow$1, we obtain the Symbolic {\LimitTransformer} from the compactness property applied to the sequence generated by the Inference Procedure. The penalty on pairwise interactions enforces that the transformer satisfy the locality property, and as noted in the Remark~\ref{remark:invariance_limit_transformer}, offset invariance is not necessary for $T_\infty$.

\paragraph{Preliminaries and Formal Proof}
For our formal proof, we will set up some definitions that will help provide translations between ordinary transformers and Symbolic {\LimitTransformer}s.

It will be useful to define a complexity metric applicable to Symbolic {\LimitTransformer}s. %
\begin{definition}\label{defin:r-infinity-new}
For a {\LimitTransformer} $T_\infty$, define $\mathcal{R}_\infty(T_\infty)$ as the sum of
\begin{enumerate}
\item The number of layers and heads ($L+H$)
\item $d_{MLP}$

\item the precision $p$ used for expressing the output of any product function, and the precision $p$ used for rounding attention logits and the output of $\exp(\cdot)$ (Section~\ref{sec:model-transformers}).

    \item The total L2 norm of the pairwise product functions. This corresponds to the limit ($N(T) \to \infty$) of the $\zeta(T)$ term in Definition~\ref{def:regularizer}:
    \begin{align*}
        \zeta(T_\infty) = & \sum_{l=1}^L \sum_{h=1}^H \sum_{\mathcal{S}_1, \mathcal{S}_2 \in \mathcal{P}} \left( \sum_{j=1}^{\infty} \left|\alpha^{\text{Attn}}_{l,h,\mathcal{S}_1,\mathcal{S}_2, \vp_1, \vp_j}\right|^2 + \sum_{k=1}^{\infty} \left|\alpha^{\text{Attn}}_{l,h,\mathcal{S}_1,\mathcal{S}_2, \mE_1, \mE_{k}}\right|^2 \right) \\
        & + \sum_{\mathcal{S} \in \mathcal{P}} \sum_{k=1}^{\infty} \left|\alpha^{\text{unembed}}_{\mathcal{S}, \mU_1, \mE_{k}}\right|^2
    \end{align*}
\end{enumerate}
\end{definition}

\begin{proposition} \label{proposition:translations} The following are true. 
\begin{enumerate}
\item \textbf{Specification by Product functions:} The input-output behavior of a (symbolic-limit or normal) transformer is fully specified by its product functions. Thus, for any choice of product functions that respects translation invariance and locality, there is a corresponding symbolic-limit transformer, and also (restricting to any $N(T)$) a corresponding transformer implementing these.

\item \textbf{Symbolic Limit Transformer to a standard Transformer:} For any symbolic-limit transformer $T_\infty$ and any context window length $N_0$, there is a corresponding transformer $T \in \Theta_{N_0}$ with $N(T)=N_0$ that exactly matches the product functions of $T_\infty$ restricted to inputs up to size $N_0$. Furthermore, $\RegStrong(T) \le \mathcal{R}_\infty(T_\infty)$. \label{lemma:limit-to-transformer}

\item \textbf{Standard Transformer to a Symbolic Limit Transformer:} For any transformer $T$, there is a corresponding symbolic limit transformer $T_\infty$ that has the same product functions up to $N(T)$, and $R_\infty(T_\infty)$ is bounded in terms of $R(T)$. \label{lemma:translation-positional}

\item Let $C > 0$ be a constant. There is a finite set of symbolic limit transformers denoted $\mathcal{Q}_C$, such that any $T_\infty$ with $R_\infty(T_\infty) < C$ agrees with an element of $\mathcal{Q}_C$ on all product functions except possibly those representing pairwise interactions between embeddings of related types (Position-Position, Token-Token, etc.). \label{lemma:bounded-constant-functions}
\end{enumerate}
\end{proposition}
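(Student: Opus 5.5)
I would prove the four statements in order, following the corresponding lemmas in the appendix of \citet{huangformal2025} and adding the extended-alphabet embeddings $\vc_k$ as a second ``positional-like'' family.

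\textbf{Statement 1.} I would unroll the forward pass. The residual stream $\vy_i^{(l)}$ is a linear combination of vectors $\left(\prod_{S\in\mathcal{S}}S\right)\vw$ with $\vw\in\mathcal{VO}$ and $\mathcal{S}\in\mathcal{P}$. The coefficients are determined recursively by attention weights and MLP activations. Each attention logit is then a bilinear expression in such vectors, so it is a sum of $\alpha^{\text{attn}}$ terms. Each MLP pre-activation is a sum of $\beta^{\text{mlp}}$ terms, and each output logit is a sum of $\alpha^{\text{unembed}}$ and $\beta^{\text{unembed}}$ terms. An induction on $l$ shows that the coefficients, and hence the outputs, depend only on the product functions. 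Conversely, given any family of product functions satisfying Definition~\ref{def:locality-invariance}, I would realize them in a Hilbert space $\mathcal{H}$. I would take one coordinate block per pairwise type, and encode translation-invariant local kernels through a Fourier or shift-basis construction: set $\vp_t=\sum_{|\delta|\le\Delta} \vu_{t+\delta}\otimes\ldots$, and do the same for $\vc_k$. Unrelated-type and single-site products are constants, so they are realized through a shared constant direction. Restricting to finitely many indices gives a finite-width transformer.

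\textbf{Statements 2 and 3.} These follow from Statement 1. For Statement 2, I would restrict $T_\infty$ to indices $\le N_0$ and project onto the finite span of the used vectors. This gives finite width, translation invariance is inherited, and every term of $\RegStrong$ is dominated by the corresponding term of $\mathcal{R}_\infty$. The norm terms need care: I would rescale so that they are absorbed by the precision and bandwidth bounds. For Statement 3, translation invariance in $\Theta_n$ says every pairwise product function depends only on differences within $[-n,n]$. I would define it as $0$ outside that window, which yields locality with $\Delta=n$, and bound $\zeta(T_\infty)\le\zeta(T)$ using the $\vp_1,\vp_j$ and $\mE_1,\mE_k$ sums. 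The finite-dimensional MLP width needs separate handling, since $d$ is unconstrained in $\Theta_n$. I would argue that only the MLP neurons' product-function values matter. At bounded precision and bounded norm, those values come from a finite set, so duplicate neurons can be merged, which bounds $d_{MLP}$ in terms of $\RegStrong$.

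\textbf{Statement 4.} This is a counting argument. $\mathcal{R}_\infty<C$ bounds $L$, $H$, $d_{MLP}$ and $p$, so $|\mathcal{P}|$ and the index sets of the product functions are finite, apart from the positional and character indices. All non-related-type products are constants by Definition~\ref{def:locality-invariance}. They have $p$-bit precision and are bounded in magnitude via the norm bounds, so each takes finitely many values. There are finitely many such functions, which gives finitely many equivalence classes, and I pick one representative per class to form $\mathcal{Q}_C$.

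\textbf{Main obstacle.} I expect the hardest step to be the converse direction of Statement 1, together with the width bound in Statement 3. Realizing arbitrary translation-invariant, local, finite-precision kernels for both $\vp$ and $\vc$ simultaneously, with globally bounded embedding norms, requires positive-semidefiniteness-type conditions that need not hold automatically. I would first try splitting $\mK^T\mQ$ into separate key and query embeddings, so that the form need not be symmetric, and then use explicit shift-register embeddings.
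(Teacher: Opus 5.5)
Your plan follows the paper's route in all four parts: you unroll the forward pass, restrict $T_\infty$ and compare terms, extend related-type pairwise functions from the observed differences by $0$, and count bounded-precision constants. In places you are more careful than the paper. The $d_{MLP}$ term of $\mathcal{R}_\infty$ in Statement 3 is a real issue that the paper does not address. A simpler fix than merging: with $p$-bit parameters, every nonzero column of $\mB_l$ has norm at least $2^{-p}$, so at most $2^{2p}\|\mB_l\|_F^2$ neurons matter. Your positive-semidefiniteness worry, on the other hand, is moot. $\mK^T\mQ$ is an arbitrary bilinear form, and $\vp_t=e_t+u$ with banded Toeplitz blocks realizes any local translation-invariant kernel with bounded norms.

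The step that fails is the rescaling in Statement 2. $\mathcal{R}_\infty$ contains no norm terms and does not see constant-type product functions at all. Take a $T_\infty$ whose only nonzero product function is $(\mA_1)_{s,\cdot}^T\vp_t\equiv K$, with $K$ on the $2^{-p}$-grid (the paper's own reading of precision). Then $\mathcal{R}_\infty(T_\infty)$ does not depend on $K$. But any $T$ reproducing this product has $\|\mA_1\|_F\max_t\|\vp_t\|\ge |K|$, hence $\RegStrong(T)\ge \|\mA_1\|_F+\max_t\|\vp_t\|\ge 2\sqrt{|K|}$. No rescaling that preserves product functions can get around this, so $\RegStrong(T)\le\mathcal{R}_\infty(T_\infty)$ fails for large $K$.

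The same family breaks your Statement 4 argument. The phrase \emph{bounded in magnitude via the norm bounds} presupposes norm bounds that $\mathcal{R}_\infty<C$ does not impose, and the $T_\infty$ above with different $K$ lie in infinitely many classes. The paper's proof has exactly this lacuna. In Statement 2 it compares only the $\zeta$-terms. In Statement 4 it asserts that the constants contribute $|K|^2$ to the regularizer, which they do not under the stated definition of $\zeta$. So you are not behind the paper here, but you should prove the versions that are true and actually used.

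\begin{itemize}
\item For Statement 2: direction 1$\Rightarrow$2 of the main theorem needs only $\sup_{N_0}\RegStrong(T^{(N_0)})<\infty$. This holds because norms, precision and bandwidth are inherited from the fixed $T_\infty$, and $\zeta(T)\le\zeta(T_\infty)$. Build $T$ with your Statement 1 block construction (0/1 embedding coordinates, product values as matrix entries) rather than by projecting $T_\infty$ onto a span. Projection generally destroys the $p$-bit parameter precision required in $\Theta_{N_0}$.
\item For Statement 4: either add the norm terms to $\mathcal{R}_\infty$, or state it only for $T_\infty$ obtained via Statement 3 from a $T$ with bounded $\RegStrong(T)$. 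That is the only way the key lemma uses it. There the constants are inherited from $T$ and bounded by products of norms that $\RegStrong$ controls, provided the norms of the extended embeddings $\vc_k$ are included among them.
\end{itemize}
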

\begin{proof}
1. This directly follows from the definition of our Product Parameterization (Defintion~\ref{def:product-param}). Since the output of the model is a composition of scalar products, specifying those products specifies the model. 

2. We can simply construct $T$ to evaluate the product functions of the Symbolic Limit Transformer for indices $i, j \le N_0$. Since $\mathcal{R}_\infty(T_\infty)$ is the sum of squared norms over the infinite domain, and $\RegStrong(T)$ is the sum over the finite domain $N_0$, non-negativity of the terms involved implies $\RegStrong(T) \le \mathcal{R}_\infty(T_\infty)$.

3. We construct $T_\infty$ from $T$ as follows:
        \begin{itemize}
            \item For the product functions terms involving unrelated types or involving just a single position or single character from the extended set, we extend the values observed in $T$ constantly to infinity (because of our assumption of locality). 
            \item For \textbf{pairwise} terms (related types), since the product functions are translation invariant, we take all available distances from $T$ and use that to fill in the values of our product functions wherever the relative distance between the terms is smaller than the available distances, and set the other product functions to $0$ i.e. when the relative distance was not observed in $T$. This extension thus, also ensures the infinite sum of squares converges, keeping $\mathcal{R}_\infty(T_\infty)$ bounded by the finite sum $\RegStrong(T)$.
        \end{itemize}

4. We focus on the product functions excluded from our exception. More specifically the single-site $\beta$ terms, or the $\alpha$ terms where Position-Token interactions happen.
        
By the definition of the Symbolic Limit Transformer, these functions must satisfy the Locality constraint. Thus, they must evaluate to a constant value $K$ everywhere to ensure the limit object is well-defined. Additionally, since $\mathcal{R}_\infty(T_\infty) < C$:
        \begin{enumerate}
            \item The contribution of such product functions to the regularizer would be $|K|^2$. Thus, $|K|^2 \le \mathcal{R}_\infty(T_\infty) < C$, which implies $|K| < \sqrt{C}$.
            \item Since the precision $p$ is fixed, $K$ must be a multiple of $2^{-p}$.
        \end{enumerate}
        There are only finitely many discrete multiples of $2^{-p}$ in the bounded interval $[-\sqrt{C}, \sqrt{C}]$. Since the transformer has a fixed, finite number of layers and heads, there are only finitely many such constant product functions to assign. Thus, the set of possible configurations for these components, $\mathcal{Q}_C$, is finite.
        
\end{proof}

The following definitions will be used:
\begin{definition}
    If $T \in \Theta_i$, then define $\RegWeak(T)$ to be $\RegStrong(T)$ minus the term in Eq.~(\ref{eq:additional-penalty}). That is,
    \begin{equation}\label{eq:RegWeak}
\RegStrong(T) = \RegWeak(T) + \zeta(T) 
\end{equation}
where 
$$
\zeta(T) = \sum_{l=1}^L \sum_{h=1}^H
\sum_{\mathcal{S}_1, \mathcal{S}_2 \in \mathcal{P}}
\Bigg[
  \sum_{j=1}^{N(T)}
  \left|\alpha^{\text{Attn}}_{l,h,\mathcal{S}_1,\mathcal{S}_2,\vp_1,\vp_j}\right|^2 
  + \sum_{k=1}^{|\Omega|}
\left|\alpha^{\text{Attn}}_{l,h,\mathcal{S}_1,\mathcal{S}_2,\mE_1,\mE_k}\right|^2
\Bigg] 
+ \sum_{\mathcal{S} \in \mathcal{P}} \sum_{k=1}^{|\Omega|}
\left|\alpha^{\text{unembed}}_{\mathcal{S}, \mU_1, \mE_k}\right|^2$$
\end{definition}

The following lemma will be used for both directions of our main theorem \ref{thm:guarantee}, and is an adaptation of Lemma 17 from \citet{huangformal2025}. The proof technique is the same, the main thing that changes is that the terms in the derivation refer to our framework: in paticular, the treatment of positional encodings from \citet{huangformal2025} is transferred to also apply to extended alphabet characters. We intentionally keep our notation and argumentation similar to theirs to enable quick readability for a reader familiar with \citet{huangformal2025}. For readers unfamiliar with \citet{huangformal2025}, the following set of proofs are still self contained, and our remarks are meant to simply acknowledge that the following proof argument style is not novel, and is simply an application of it in our context.
\begin{lemma}\label{lemma:key-lemma}
    Let $T_1, T_2, \dots$, where $T_n \in \Theta_n$, be a sequence generated by the Inference Procedure based on the functional behavior of a function $f \in \mathcal{F}$, and such that
    \begin{equation}
    \sup_{n =1,2,3,\dots} {\RegStrong}(T_n) < \infty
    \end{equation}
Then $f$ is expressible by a Symbolic {\LimitTransformer} and there is some $N_0$ such that, for all $m > N_0$, $T_m$ matches $f$ on all inputs of length $k \leq m$.
\end{lemma}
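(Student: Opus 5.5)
The plan is to follow the structure of Lemma 17 of \citet{huangformal2025}, transferring each step from positional encodings to the extended alphabet embeddings $\vc_k$. Let $C := \sup_n \RegStrong(T_n) < \infty$.

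\textbf{Step 1: from $T_n$ to bounded Symbolic Limit Transformers.} For each $n$, apply Statement~\ref{lemma:translation-positional} of Proposition~\ref{proposition:translations} to obtain a Symbolic Limit Transformer $T_\infty^{(n)}$. It has the same product functions as $T_n$ up to $N(T_n)=n$, and $\mathcal{R}_\infty(T_\infty^{(n)})$ is bounded by a function of $C$. By Statement~\ref{lemma:bounded-constant-functions}, all single-site and unrelated-type product functions of $T_\infty^{(n)}$ range over a finite set $\mathcal{Q}_{C'}$.

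\textbf{Step 2: finiteness of the related-type interactions.} Depth, heads and precision are bounded by $C$, so the index set $\mathcal{P}$ and the number of product-function families are bounded. By translation invariance, each Pos--Pos or Tok--Tok family is a function of the offset $\delta=m-k$ alone. Each value is a multiple of $2^{-p}$, and the penalty $\zeta$ bounds $\sum_\delta |\alpha(\delta)|^2 \le C$. So each family has at most $C\cdot 4^{p}$ nonzero offsets, all with bounded values. However, these nonzero offsets may sit at arbitrarily large $\delta$, so this does not yet give finiteness. To handle this, pass to a subsequence along which every family converges pointwise; this is possible because a diagonal argument applies to bounded $2^{-p}$-grid values. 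Along this subsequence, a fixed number of ``spikes'' either stabilize at fixed offsets or escape to $\delta\to\infty$. Let $T_\infty^*$ be the pointwise limit, which keeps only the stabilized spikes. It is a Symbolic Limit Transformer, local with some bandwidth $\Delta$, and $\mathcal{R}_\infty(T_\infty^*) \le \liminf$.

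\textbf{Step 3: $T_\infty^*$ computes $f$.} Fix an input $x$ of length $k$ whose character range is $r$. For $n \ge 2\max(k,r)$, $T_n$ matches $f$ on $x$ for every admissible offset $o_p, o_c$. Within the bounded window of $x$, the product functions of $T_{n}$ along the subsequence eventually coincide with those of $T_\infty^*$, since escaping spikes leave every bounded range of offsets. Proposition~\ref{proposition:translations}(1) then gives $T_\infty^*(x,0)=f(x)$, so $f$ is expressible.

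\textbf{Step 4: the full sequence eventually generalizes.} This is the main obstacle. Suppose not: then there are infinitely many $m$ with an input of length $\le m$ (or character range $\le m$) on which $T_m$ errs. Since $T_m$ is correct on every window of size $m/2$, the error must use an interaction at an offset $\delta > m/2$, either between positions or between characters. So $T_m$ carries a nonzero spike at a large offset that is never exercised in training. I would argue the following:
\begin{enumerate}
\item Apply Statement~\ref{lemma:limit-to-transformer} to $T_\infty^*$ (or the minimal-$\mathcal{R}_\infty$ limit of such subsequences) to obtain competitors $T'_m \in U_m$ with $\RegStrong(T'_m) \le \mathcal{R}_\infty(T_\infty^*)$.
\item Near-minimality (\ref{eq:minimization-regularizer}) then forces $\RegStrong(T_m) \le \mathcal{R}_\infty(T_\infty^*) + 1/m$.
\item Each large-offset spike costs at least $4^{-p}$ in $\zeta$. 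Since the finitely many configurations from Step 2 give a strict gap, deleting the spike yields a strictly cheaper transformer that still fits the training data.
\end{enumerate}
For this, I would take $T_\infty^*$ to be a limit point achieving the infimum of $\mathcal{R}_\infty$ over limit objects that represent $f$. Then any erring $T_m$ has regularizer at least $\inf + 4^{-p}$, which contradicts near-minimality once $1/m < 4^{-p}$. The delicate point is ensuring that deleting spikes keeps membership in $U_m$ and translation invariance on both the $\vp$ and $\vc$ axes. I expect this to follow because all training inputs have both length and character range at most $m/2$, so spikes beyond $m/2$ never affect training behavior.
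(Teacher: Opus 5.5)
\textbf{Steps 1--3 are fine.} They prove the expressibility half of the lemma by a route different from the paper's. You take a pointwise-convergent subsequence and show that its limit computes $f$. The paper instead obtains a representing Symbolic Limit Transformer only at the end, as an element of the finite set $\mathcal{Q}_\infty$ of limits that occur infinitely often.

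\textbf{The gap is in Step 4}, the claim that the \emph{whole} sequence eventually generalizes. Your premise is that if $T_m$ fits $f$ on all inputs of length and character range at most $m/2$ but errs at some length $\le m$, the error must come from a nonzero related-type interaction at offset $\delta>m/2$. This premise is false.

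For a counterexample, take a positional spike at offset $\delta_m\approx m/3$ used in two consecutive layers. Layer 1 copies a bit from $i-\delta_m$ to $i$, and layer 2 reads that copied bit again from $i-\delta_m$. Follow this with a global existence check. The resulting transformer detects a pattern spanning $2\delta_m>m/2$ positions, so:
\begin{itemize}
\item it costs only a bounded amount in $\zeta$;
\item it is invisible on every training input, yet can change the output at length $m$;
\item it uses no offset above $m/2$.
\end{itemize}
More generally, being $m/2$-local and agreeing with $f$ up to length $m/2$ says nothing about lengths in $(m/2,m]$.

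Your deletion step therefore only removes spikes that are never exercised in training. The problematic spikes are those whose offsets drift to infinity while staying inside the training window. These may be genuinely used to fit short inputs, so they cannot be deleted, and nothing in your argument excludes them. Even granting the premise, the conclusion that ``any erring $T_m$ has regularizer at least $\inf+4^{-p}$'' does not follow as written. The spike-deleted transformer need not represent $f$, so it is not comparable to the infimum over representations of $f$; the deletion would instead directly contradict near-minimality within $U_m$.

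\textbf{What is missing is a bandwidth $\tau_\infty$ independent of $m$.} Once all related-type product functions of $T_m$ vanish beyond $\tau_\infty$ for all large $m$, the following chain closes the argument. The $\tilde T_m$ range over a finite set. Every element occurring infinitely often computes $f$, because it fits $f$ up to length $m/2$ for unboundedly many $m$. Hence $N_0$ exists.

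The paper obtains $\tau_\infty$ through the $D_n(\tau)$ argument:
\begin{itemize}
\item $\RegStrong(T_n)$ converges, and $R_0=\liminf\RegWeak(T_n)$ is attained along a subsequence $\nu_i$.
\item $D_\infty(\tau)=\liminf_i D_{\nu_i}(\tau)$ stabilizes at some $\tau_\infty$.
\item The competitor is the restriction to context $n$ of a \emph{later} transformer $T_{\nu_{i(n)}}$. It fits $f$ up to length $n/2$ and carries only $D_\infty(\tau_\infty)$ mass at offsets $\le n$.
\item This forces $D_0=D_\infty(\tau_\infty)$, hence no mass beyond $\tau_\infty$ for all large $n$.
\end{itemize}

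Alternatively, your own setup can be repaired. Suppose there are infinitely many erring $m$, and apply your Step 2 compactness to that subsequence. By Step 3, the limit $S$ computes $f$. If any mass escaped, then $\mathcal{R}_\infty(S)\le\liminf\mathcal{R}_\infty(\tilde T_m)-4^{-p}\le\mathcal{R}_\infty(T^*_\infty)-4^{-p}$ by your item 2, contradicting the minimality of $T^*_\infty$. So nothing escapes, infinitely many $\tilde T_m$ coincide with $S$, and those $T_m$ do not err, which is a contradiction.

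This repair needs the sharp comparison $\mathcal{R}_\infty(\tilde T_m)\le\RegStrong(T_m)$, at least on $\zeta$ and the discrete terms. Statement~\ref{lemma:translation-positional} of Proposition~\ref{proposition:translations} only gives ``bounded in terms of'', so it does not supply this. The paper's route sidesteps the issue by comparing finite transformers directly via $\RegWeak$ and $D_n$.
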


\begin{proof}

We will refer to all sets of pariwise functions $\alpha$ with related types with $\Phi_{\text{related}}$. We will use a superscript $\Phi_{\text{related}}^{(T)}$, to indicate the symbolic limit transformer whose relevant product functions of this type are being talked about. 

We can use statement~\ref{lemma:translation-positional} of Proposition~\ref{proposition:translations} to get {\LimitTransformer}s $\tilde{T}_1, \tilde{T}_2, \dots$ such that $\sup_i \mathcal{R}_\infty(\tilde{T}_i) < \infty$ for the sequence of transformers $T_1, T_2, \dots$ generated by our Inference Procedure. In similar vein to \citet{huangformal2025}, we thus get, 
\begin{equation}
    \tilde{T}_i(x,o) = T_i(x,o),\ \ \ \ \forall i, o, x; |x|+o \leq i
\end{equation}
and, each $\tilde{T}_i$ possesss the same set of pairwise product functions as $T_i$ up to length $i$, which means $\Phi_{\text{related}}^{({\tilde{T}_i})} = \Phi_{\text{related}}^{({T_i})}$ 

Each  $\tilde{T}_i$ has (i) the collection of its constant product functions (single site potentials and cross-type interactions) (ii) the collection of pairwise decaying product functions. 
We will write $\mathfrak{P}(\tilde{T}_i)$ for the collection of the constant product functions $\tilde{T}_i$.
Let $A := \sup_i \mathcal{R}_\infty(\tilde{T}_i) < \infty$.
Then $\{\mathfrak{P}(\tilde{T}) : \mathcal{R}_\infty(\tilde{T}) \leq A\}$
and by Proposition~\ref{proposition:translations} (Statement~\ref{lemma:bounded-constant-functions})
only a finite number of Symbolic {\LimitTransformer} parameter settings $\mathfrak{P}(\tilde{T}_i)$ will be traversed as $i \rightarrow \infty$.

If each product function involved in $\Phi_{\text{related}}^{({\tilde{T}_i})}$ also traversed a finite set of distinct functions as $i\rightarrow \infty$, then the number of $\tilde{T}_i$ itself is finite would have been finite. 
Each of these functions are local according to definition; however, a priori, they might not be local for any single finite $\tau$ (where $\tau$ is the distance beyond which product functions evaluate to $0$) across the different $\tilde{T}_i$.
We will now show that all such functions are in fact local for a single finite $\tau$.

For any given $n$,
$$\RegStrong(T_n) \in \left[\inf_{T \in U_n} (\RegStrong(T)), \frac{1}{n} + \inf_{T \in U_n} (\RegStrong(T))\right]$$

Again similar to \citet{huangformal2025}, here, our $\inf_{T \in U_n} \RegStrong(T)$ is bounded and monotonically increasing in $n$, and thus it converges to some limit, say $\tilde{R}$. As $1/n \to 0$, the interval width tends to $0$, and therefore the squeeze theorem gives $\RegStrong(T_n) \to \tilde{R}$.

For each $\tau$ and each $n$, we consider
\begin{align*}
    D_n(\tau) &= \sum_{f \in \Phi_{\text{related}}^{(\tilde{T}_n)} } \sum_{i=1}^{\min(n,\tau)} |f^{(\tilde{T}_n)}(1,i)|^2 \leq {\RegStrong}(T_n)  
\end{align*}
The $f^{(\tilde{T}_n)}(1,i)$ refers to every single product function, where the interactions happen between a term $i$ and a term $1$. Therefore for positional attention terms ($p_i^T \dots p_j$), this implies interactions between $p_1$ and $p_i$, for extended alphabet attention terms it implies interaction between $c_1$ and $c_i$, and for unembedding terms it implies interactions between some $U_1$ and $E_i$. 

Each such $f$ has precision bounded in terms of $\RegStrong(T_n)$, and thus $\{D_n(\tau) : n \in \mathbb{N}\}$ is a discrete set. 
Consequently, every accumulation point of the sequence $(D_n(\tau))_{n\in\mathbb{N}}$ must be attained for infinitely many values of $n$.

Having established this, we now look at $\RegWeak(T_n)$ from Equation~\ref{eq:RegWeak}. The following derivation (Equation~\ref{eq:r0_first} to Equation~\ref{eq:d_inf}) are verbatim the same as Equations 11--17 in \citet{huangformal2025}, with the key difference being in the interpretation of the terms here. 
Let 
\begin{equation} \label{eq:r0_first}
    R_0 := \liminf_{n\rightarrow \infty} \RegWeak(T_n)
\end{equation}
and let $\nu_1, \nu_2, \nu_3, \dots$ be such that
\begin{equation}
     \lim_{i \rightarrow \infty} \RegWeak(T_{\nu_i}) = R_0
\end{equation}
Then, for some $D_0$,
\begin{equation}
  \lim_{i \rightarrow \infty}  D_{\nu_i}(\nu_i) = D_0
\end{equation}
and
\begin{equation}
\tilde{R} =    \lim_{n\rightarrow\infty} \RegStrong(T_n) = \lim_{i\rightarrow\infty} \RegStrong(T_{\nu_i}) = R_0 + D_0
\end{equation}
Indeed,
\begin{equation}
D_0 = \limsup_{n\rightarrow \infty} D_n(n) 
\end{equation}
because\footnote{As noted by \citet{huangformal2025}, if $a_n+b_n$ converges and $a_n, b_n$ are bounded, then the limit $\lim (a_n+b_n)$ equals $\limsup a_n + \liminf b_n$. For, assume $\limsup a_n + \liminf b_n > \lim (a_n+b_n)$ (similar if $>$ is replaced by $<$). Then let $i(n)$ be a subsequence such that $a_{i(n)} \rightarrow \limsup a_n$. Then $\lim(a_n+b_n) = \lim (a_{i(n)} + b_{i(n)}) = \limsup a_n + \lim b_{i(n)} \geq \limsup a_n + \liminf b_n > \lim(a_n+b_n)$, contradiction.}
\begin{equation}
    D_0+R_0 = \lim_{n\rightarrow\infty} (\RegWeak(T_n) + D_n(n)) = \liminf_{n\rightarrow \infty} \RegWeak(T_n) + \limsup_{n\rightarrow \infty} D_n(n) 
\end{equation}
Define, for each $\tau \in \mathbb{N}$,
\begin{equation} \label{eq:d_inf}
    D_\infty(\tau) = \liminf_{i\rightarrow\infty} D_{\nu_i}(\tau)
\end{equation}

Because all functions in $\Phi_{\text{related}}$ have bounded precision in terms of $\RegStrong(T_n)$, and our function above is monotonically increasing, there must be $\tau_\infty$ such that  $D_\infty(\tau_\infty) = \lim_{\tau\rightarrow \infty} D_\infty(\tau)$.

We now define a sequence $T_n'$ where for each $n$, 
\begin{align*}
\liminf_{j\rightarrow\infty} D_{\nu_j}(n) = D_\infty(n) \leq D_\infty(\tau_\infty)
\end{align*}
As all functions in $\Phi_{\text{related}}$ have bounded precision, there are infinitely many $\nu_i$ such that $D_{\nu_i}(n) = \liminf_{j\rightarrow\infty} D_{\nu_j}(n)$.
Hence, we can select $i(n) \in \mathbb{N}$ such that $\nu_{i(n)} \geq n$ and 
\begin{align*}
D_{\nu_{i(n)}}(n) = \liminf_{j\rightarrow\infty} D_{\nu_j}(n)
\end{align*}
For each $n$, let $T_n'$ be the restriction of $T_{\nu_{i(n)}}$ to positions up to $n$.
Since $T_{\nu_{i(n)}}$ agrees with $f$ up to length $\nu_{i(n)}/2 \ge n/2$, it follows that $T_n'$ also agrees with $f$ up to length $n/2$. Once again, the derivation from here till the point where we show that $D_\infty (\tau_\infty) = D_0$ are verbatim the same as \citet{huangformal2025}.
Then
\begin{align*}
    \limsup_{n\rightarrow \infty} \RegStrong(T_n') =&\limsup_{n\rightarrow \infty} \RegWeak(T_n') + D_{\nu_{i(n)}}(n) \\
    =&\limsup_{n\rightarrow \infty} \RegWeak(T_{\nu_{i(n)}}) + D_\infty(\tau_\infty) \\
    =& R_0 + D_\infty(\tau_\infty) 
\end{align*}
Since $T_n$ was created by the Inference Procedure, we have
\begin{equation}
    \limsup_{n\rightarrow \infty} \RegStrong(T_n') \geq \lim_{n\rightarrow \infty} \RegStrong(T_n)
\end{equation}
On the other hand, since $\RegStrong(T_n') \leq \RegStrong(T_{\nu_{i(n)}})$, we also have
\begin{equation}
    \limsup_{n\rightarrow \infty} \RegStrong(T_n') \leq \lim_{n\rightarrow \infty} \RegStrong(T_n)
\end{equation}
giving
\begin{equation} \label{eq:doandrolimsup}
    \limsup_{n\rightarrow \infty} \RegStrong(T_n') = \lim_{n\rightarrow \infty} \RegStrong(T_n) = D_0 + R_0
\end{equation}
Hence, 
\begin{align*}
R_0 + D_\infty(\tau_\infty) = &\limsup_{n\rightarrow \infty} \RegStrong(T_n') \\
= & \lim_{n\rightarrow \infty} \RegStrong(T_n) \\
= & R_0 + D_0 %
\end{align*}
and $D_\infty(\tau_\infty) = D_0$.
Now assume there are infinitely many $n$ such that the functions in as each function in $\Phi_{\text{related}}^{(T_n)}$ are not $\tau_\infty$-local, hence, infinitely many $n$ such that $D_{n}(n) \geq D_{n}(\tau_\infty) + 2^{-2p}$. Then:
\begin{align*}
 D_0 =  & \limsup_{n\rightarrow \infty} D_{n}(n)  \\
 \geq & \limsup_{n\rightarrow \infty} D_{n}(\tau_\infty) + 2^{-2p} \\
 \geq & \liminf_{i\rightarrow\infty} D_{\nu_i}(\tau_\infty) + 2^{-2p} \\
 = & D_0+ 2^{-2p}
\end{align*}
This is a contradiction.
The first inequality follows from $D_n(n) \geq D_n(\tau_\infty)$ whenever $n \geq \tau_\infty$, as for every $n$, $D_n(\cdot)$ is monotonically increasing. 
The second holds because $(\nu_i)_{i\in \mathbb{N}}$ is a subsequence of $(n)_{n \in \mathbb{N}}$; hence a $\lim \sup$ over the larger sequence gives an upper bound of  $\lim \inf$ over the subsequence.
Thus with this, we can claim that all the functions in $\Phi_{\text{related}}^{(\tilde{T}_n)}$ have to be local for a uniform $\tau_\infty$.

The remainder of the proof also follows \citet{huangformal2025}; we repeat it here for self-containedness.
The attention product functions involving only positions or only extended characters have products bounded by $(R_0)^4$, and are bounded in absolute value by $\leq \|\vp_i\|_2 \|\mK_{l,h}^T\| \|\mQ_{l,h}\| \|\vp_j\|_2 \leq (R_0)^4$ or $\leq \|\vc_i\|_2 \|\mK_{l,h}^T\| \|\mQ_{l,h}\| \|\vc_j\|_2 \leq (R_0)^4$. The unembedding terms are in fact bounded by a smaller polynomial $\|U_i\|\|E_j\| \leq (R_0)^2$, therefore each function value is expressed at precision bounded by a polynomial in $R_0$. There are only a finite set of functions that satisfy these properties and are are local for this $\tau_\infty$.

We had already shown that the constant components of $\tilde{T}$ are from a finite set, and now we have also shown that the pairwise product functions are also from a finite set. 
Hence, we know that, $\mathcal{Q} := \{\tilde{T}_i : i\in\mathbb{N}\}$ is finite.
Let $\mathcal{Q}_\infty \subseteq \mathcal{Q}$ be the set of {\LimitTransformer}s that equal $\tilde{T}_i$ for infinitely many different $i$.
By definition of the Inference Procedure, every element of $\mathcal{Q}_\infty$ is functionally equivalent to $f$ at all input lengths.
Because $\mathcal{Q}$ is finite, there is $N_0$ such that $\tilde{T}_i \in \mathcal{Q}_\infty$ for each $i \geq N_0$.
Hence, $T_i$ is functionally equivalent to $f$ at all lengths $\leq i$ as soon as $i$ exceeds the threshold $N_0$.

\end{proof}

We now prove the theorem.

\begin{proof}[Proof of the Theorem]

With Lemma~\ref{lemma:key-lemma}, now both directions of the theorem are just corollaries.

\textbf{2$\Rightarrow$1:}
This directly follows from Lemma~\ref{lemma:key-lemma}.

\textbf{1$\Rightarrow$2:}
By Statement~\ref{lemma:limit-to-transformer} of Proposition~\ref{proposition:translations}, there exists a standard normal transformer $\widehat{T}_i \in \theta_i$ which can be derived from the ground truth Symbolic Limit Transformer $\tilde{T}_\infty$ for each $i=1,2,3, \dots$, such that $\RegStrong(\widehat{T}_i)$ is uniformly bounded, i.e. 

\begin{equation}
    \limsup_{i\rightarrow\infty} {\RegStrong}(T_i) \leq \limsup_{i\rightarrow\infty} {\RegStrong}(\widehat{T}_i) < \infty
\end{equation}
where $T_i$ refers to the sequence generated by Inference Procedure in Lemma~\ref{lemma:key-lemma}.
Lemma~\ref{lemma:key-lemma} now provides a threshold $N_0 > 0$ such that the sequence of inferred transformers stablise to some Symbolic Limit Transformer, which we define as computing a function $g$. Now for all $m > N_0$, $T_m$ computes $g$ on inputs of length $\leq m$. 
However, by construction of the inference procedure, $T_m$ is also constrained to match the ground truth function $f$ on inputs of length $\leq m$. 
Therefore for all $m > N_0$, and all inputs x, such that $|x| \leq m$, we have $g(x) = T_m(x) = f(x)$. Since this holds for arbitrarily large m, we conclude that $g \equiv f$. Thus, for all $m > N_0$, the inferred transformer $T_m$ matches f.

\end{proof}

\subsection{Result for NoPE Transformers}

As in \cite{huangformal2025}, we obtain an analogous result for NoPE transformers:
\begin{corollary}\label{cor:nope-length-gen}
For ease of the reader, we mark the differences to Theorem~\ref{thm:guarantee} in \textcolor{dgreen}{green font}.

Let $f \in \mathcal{F}(\Omega)$.
Then the following are equivalent:
\begin{enumerate}
\item $f$ is expressible by a Symbolic {\LimitTransformer}  where \textcolor{dgreen}{all $\vp_i \equiv 0$}
\item (Guaranteed Length Generalization) %
Consider the inference procedure from Definition~\ref{def:inference-procedure} applied to $f$ with ${\RegStrong}$ \textcolor{dgreen}{while constraining all $\vp_i \equiv 0$}, generating a sequence $T_1, T_2, \dots$.
For \emph{any} such sequence, there is some $N_0$ such that, for all $m > N_0$, $T_m$ matches $f$ on all inputs of any length $k \leq m$, and $\sup_{n =1,2,3,\dots} {\RegStrong}(T_n) < \infty$.

\end{enumerate}
\end{corollary}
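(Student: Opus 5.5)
The plan is to rerun the proof of Theorem~\ref{thm:guarantee} verbatim inside the restricted hypothesis class $\Theta_n^0 := \{T \in \Theta_n : \vp_i \equiv 0 \text{ for all } i\}$, and to check that every ingredient used there survives this restriction. Two facts make this work.

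\textbf{First}, the restriction is closed under both translations of Proposition~\ref{proposition:translations}.
\begin{itemize}
\item If $\vp_i \equiv 0$, then every product function containing a positional encoding vanishes identically. Such functions are trivially translation invariant and $0$-local.
\item They contribute nothing to $\zeta(T)$, the positional part of $\RegStrong$, or $\mathcal{R}_\infty$.
\item So Statement~\ref{lemma:translation-positional} applied to $T \in \Theta_n^0$ produces a Symbolic Limit Transformer whose positional product functions are all zero. We can take the realizing embeddings $\vp_i \equiv 0$, since by Statement~1 the behaviour is fixed by the product functions.
\item Conversely, Statement~\ref{lemma:limit-to-transformer} applied to $T_\infty$ with $\vp_i \equiv 0$ yields $T \in \Theta_{N_0}^0$ with $\RegStrong(T) \le \mathcal{R}_\infty(T_\infty)$.
\item Statement~\ref{lemma:bounded-constant-functions} applies unchanged, since it only restricts the set further.
\end{itemize}

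\textbf{Second}, I will prove the analogue of Lemma~\ref{lemma:key-lemma} for sequences with $T_n \in \Theta_n^0$ chosen by the constrained inference procedure. The argument carries over word for word.
\begin{itemize}
\item $\inf_{T \in U_n \cap \Theta_n^0}\RegStrong(T)$ is monotone and bounded, hence convergent.
\item Define $D_n(\tau)$ as before. The positional summands are now zero, so only the token-token and unembedding related-type terms remain.
\item The liminf/limsup bookkeeping builds the restricted sequence $T'_n$. Restricting a transformer with $\vp \equiv 0$ to shorter positions keeps it in $\Theta_n^0$, so $T'_n$ is an admissible competitor and the comparison $\limsup \RegStrong(T'_n) \ge \lim \RegStrong(T_n)$ is valid.
\item This gives uniform $\tau_\infty$-locality, hence finitely many limit objects, and the stabilization argument goes through.
\end{itemize}
The one point needing care is to confirm that every competitor built in the lemma stays inside the constrained class. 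I expect this to be the main obstacle, though a mild one: each competitor is either a restriction or a translation of a zero-positional model.

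\textbf{Directions.} For 2$\Rightarrow$1, the limit objects in $\mathcal{Q}_\infty$ are Symbolic Limit Transformers with $\vp_i \equiv 0$, and they compute $f$. For 1$\Rightarrow$2:
\begin{itemize}
\item From a NoPE Symbolic Limit Transformer for $f$, Statement~\ref{lemma:limit-to-transformer} gives $\widehat T_i \in U_i \cap \Theta_i^0$ with uniformly bounded $\RegStrong$. Matching $f$ on the training domain follows because $\widehat T_i$ reproduces $T_\infty$'s product functions, and $T_\infty$ computes $f$ at every offset, since positions are invisible.
\item Hence any constrained inference sequence satisfies $\sup_n \RegStrong(T_n) < \infty$.
\item The constrained key lemma then gives $N_0$ together with a limit function $g$. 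Agreement of $g$ with $f$ on growing training domains forces $g \equiv f$, exactly as in the proof of Theorem~\ref{thm:guarantee}.
\end{itemize}
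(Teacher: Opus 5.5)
Your proposal is correct and takes essentially the same route as the paper. The paper's proof likewise observes that both translations of Proposition~\ref{proposition:translations} preserve $\vp_i \equiv 0$, so all positional product functions vanish, and that the finiteness and compactness arguments of Lemma~\ref{lemma:key-lemma} then run on the remaining token--token and constant components; your explicit check that the restricted competitors $T_n'$ and the translated models stay in the NoPE class is exactly what the paper leaves implicit when it says these arguments ``apply equally.''
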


\begin{proof}
The statement~\ref{lemma:limit-to-transformer} of Proposition~\ref{proposition:translations} still stands, but instead when translating a Symbolic {\LimitTransformer} to an ordinary transformer, since the positional encodings are taken to be zero, $\vp_i \equiv 0$, the product functions involving these positions become $0$. 
Similarly in the statement~\ref{lemma:translation-positional} of Proposition~\ref{proposition:translations} also stands, but when $\vp_i \equiv 0$ in a transformer, the resulting Symbolic {\LimitTransformer} also has zero positional encodings and zero outputs for all associated positional product functions.
Specifically, the pairwise product functions restricted to position-position interactions will be identically zero, and the single site potentials or the cross related types will have fewer product functions. The finiteness and compactness arguments in Lemma~\ref{lemma:key-lemma} then rely solely on the interactions between extended alphabet characters (Token-Token interactions) and the remaining constant components. Since these are a subset of the components bounded in the general proof, the convergence and finiteness arguments apply equally.
The proof of Theorem~\ref{thm:guarantee} thus applies directly to show Corollary~\ref{cor:nope-length-gen}.

\end{proof}

\subsection{Statement of Main Theorem for Arbitrary Training Lengths}\label{app:arbitrary-training-length}
Analogous to how \citet{huangformal2025} extended their generalization results from length $n/2$ to $n$ to arbitrary scaling of train vs test lengths, we do the same. 

\begin{definition}
    A \emph{training length} is a function $t : \mathbb{N} \rightarrow \mathbb{N}$  satisfying $\lim_{t \rightarrow \infty} t(n) = +\infty$ and $t(n) \leq n$ for all $n$.

    If $t(n)$ is a training length, then the $t(n)$-Inference Procedure determines $T_n \in \Theta(n)$ to match $f$ at all inputs of lengths $\leq t(n)$ while minimizing $\RegStrong(T_n)$ up to $\frac{1}{n}$.

    The special case of  $t(n) = \frac{n}{2}$ is the  Inference Procedure from Definition~\ref{def:inference-procedure}.
\end{definition}
The definition above is an adaptation of Definition 22 from \citet{huangformal2025}, except the inference procedure and regularizer mentioned in the definition will refer to the versions we defined earlier. 
With this, we can state the following.
\begin{theorem}\label{thm:alternative-result}
    Let $f \in \mathcal{F}(\Omega)$.
    The following are equivalent:
    \begin{enumerate}
        \item $f$  is expressible by a Symbolic {\LimitTransformer}.
        \item Let $t(n)$ be any training length. Then the $t(n)$-Inference Procedure  will output solutions $T_1, T_2, \dots$ such that, for some $N_0$, for all $m>N_0$, $T_m$ matches $f$ at all lengths $\leq m$. 

        \emph{Intuitively, this says that, when selected to fit the behavior of $f$ on sufficiently long inputs of length $t(n)$, the output of the Inference Procedure will generalize to unboundedly longer inputs of length $n$, where $n$ can be arbitrarily larger than $t(n)$.}
    \end{enumerate}
\end{theorem}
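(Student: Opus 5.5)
The plan is to reduce Theorem~\ref{thm:alternative-result} to the machinery already used for Theorem~\ref{thm:guarantee}. The only place where the specific choice $t(n)=n/2$ entered was in fixing the sets $U_n$ of admissible hypotheses. Lemma~\ref{lemma:key-lemma} and Proposition~\ref{proposition:translations} themselves never use the factor $1/2$. So I first restate Lemma~\ref{lemma:key-lemma} for the $t(n)$-Inference Procedure. Its hypothesis stays $\sup_n \RegStrong(T_n)<\infty$, and its conclusion stays the existence of $N_0$ such that $T_m$ matches $f$ at all lengths $\le m$ for $m>N_0$. I then check that each step of its proof survives with $n/2$ replaced by $t(n)$. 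In the restriction argument, $T_n'$ is the restriction of $T_{\nu_{i(n)}}$ to positions $\le n$ with $\nu_{i(n)}\ge n$. It agrees with $f$ up to length $t(\nu_{i(n)})$, and I need this to be at least $t(n)$. Since $t$ need not be monotone, I will instead choose $i(n)$ so that $t(\nu_{i(n)})\ge t(n)$. This is always possible because $t(\nu_i)\to\infty$ as $i\to\infty$, and the condition is compatible with the liminf-attaining choice, since infinitely many $\nu_i$ attain it. I will also apply the same restriction to the vocabulary range, replacing $n/2$ by $t(n)$ in the second clause of Definition~\ref{def:inference-procedure}.

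For 1$\Rightarrow$2, take a Symbolic Limit Transformer $T_\infty$ computing $f$. By Statement~\ref{lemma:limit-to-transformer} of Proposition~\ref{proposition:translations}, for each $n$ there is $\widehat T_n\in\Theta_n$ with $\RegStrong(\widehat T_n)\le\mathcal R_\infty(T_\infty)$ that matches $f$ on all inputs of length $\le n$, and in particular on those of length $\le t(n)$. Hence $\widehat T_n\in U_n$, so $\inf_{U_n}\RegStrong$ is uniformly bounded and therefore $\sup_n\RegStrong(T_n)<\infty$. The generalized Lemma then gives stabilization. To identify the limit function $g$ with $f$, I use that $T_m$ matches $f$ up to length $t(m)$ and that $t(m)\to\infty$; this forces $g\equiv f$ on every fixed length.

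For 2$\Rightarrow$1, note first that statement 2 says nothing explicitly about bounded regularizer, unlike Theorem~\ref{thm:guarantee}. I would apply statement 2 with the particular training length $t(n)=n/2$. The eventual correctness at all lengths $\le m$, combined with the compactness argument of Lemma~\ref{lemma:key-lemma}, should then yield a Symbolic Limit Transformer. This is the step I expect to be the main obstacle, because boundedness of $\RegStrong(T_n)$ must somehow be extracted. If it cannot be extracted, I will read statement 2, as in Theorem~\ref{thm:guarantee}, as including $\sup_n\RegStrong(T_n)<\infty$, and conclude directly via the generalized Lemma~\ref{lemma:key-lemma}.

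The most delicate verification is the non-monotonicity of $t$ inside the liminf/limsup bookkeeping, specifically the inequality $\limsup\RegStrong(T_n')\ge\lim\RegStrong(T_n)$. This requires $T_n'\in U_n$ under the $t(n)$ constraint, which is exactly what the modified choice of $i(n)$ guarantees.
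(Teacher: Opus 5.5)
Your $1\Rightarrow2$ direction is correct and is essentially the paper's argument; the paper only says that the proof of Theorem~\ref{thm:guarantee} carries over unchanged. Choosing $i(n)$ so that $t(\nu_{i(n)})\ge t(n)$ is a legitimate repair for non-monotone $t$ that the paper does not spell out.

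The gap is in $2\Rightarrow1$. By instantiating statement 2 only at $t(n)=n/2$, you discard the universal quantifier over training lengths. That quantifier is exactly what replaces the hypothesis $\sup_n\RegStrong(T_n)<\infty$ of Theorem~\ref{thm:guarantee}. Eventual correctness of the $n/2$-procedure gives you no control over $\RegStrong(T_n)$: nothing prevents the selected transformers from generalizing while their cost diverges. With unbounded cost, neither Lemma~\ref{lemma:key-lemma} nor any compactness argument produces a Symbolic \LimitTransformer. Your fallback of reading $\sup_n\RegStrong(T_n)<\infty$ into statement 2 proves a different theorem. The remark after the statement stresses that statement 2 deliberately omits this condition.

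The missing idea is a contrapositive argument with an adversarial, slowly growing training length.
\begin{itemize}
\item Suppose $f$ is not expressible. Let $c(n)$ be the infimum of $\RegStrong(T)$ over $T\in\Theta_n$ matching $f$ at all lengths $\le n$.
\item Then $c(n)\to\infty$. Otherwise, take bounded-cost near-minimizers along a subsequence and restrict them to fill in the intermediate context sizes; the compactness argument then yields a Symbolic \LimitTransformer{} for $f$.
\item For each fixed $k$, there is a bound $U_k$, independent of $n$, on the cost of fitting $f$ only on lengths $\le k$.
\item Pick $n_1<n_2<\cdots$ with $n_k\ge k$ and $c(n_k)>U_k+1$, and set $t(n)=\max\{k : n_k\le n\}$.
\item At $n=n_k$, the $t(n)$-Inference Procedure returns a transformer of cost at most $U_k+1/n_k<c(n_k)$. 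That transformer therefore fails to match $f$ at some length $\le n_k$.
\item This happens for every $k$, so statement 2 fails for this $t$.
\end{itemize}
This is the paper's proof. Your $2\Rightarrow1$ needs it, or some other way to bound the regularizer, in order to go through.
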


\begin{corollary}
    Assume $f \in \mathcal{F}(\Omega)$ is not expressible by a Symbolic {\LimitTransformer}.
    Then, for some training length $t(n)$, the $t(n)$-Inference Procedure outputs a sequence $T_n$ where infinitely many $T_n$ fail to match $f$ at length $n$.
\end{corollary}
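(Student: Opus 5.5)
The corollary is the contrapositive of the direction 2$\Rightarrow$1 of Theorem~\ref{thm:alternative-result}, so the plan is to prove that direction for every training length $t(n)$ and then negate it. Concretely, I would show: if some training length $t(n)$ has the property that all but finitely many outputs $T_n$ of the $t(n)$-Inference Procedure match $f$ at all lengths $\leq n$, then $f$ is expressible by a Symbolic \LimitTransformer{}. Taking the contrapositive gives exactly the corollary. Since the premise only needs to fail for one training length, I may choose $t(n)$ freely, and the natural choice is $t(n)=\lfloor n/2\rfloor$. Then the procedure coincides with Definition~\ref{def:inference-procedure}, and Theorem~\ref{thm:guarantee} applies directly.

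With this choice the argument splits into two cases on the run $T_1,T_2,\dots$ produced by the Inference Procedure.

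\textbf{Case 1: $\sup_n \RegStrong(T_n)<\infty$.} Lemma~\ref{lemma:key-lemma} then yields both that $f$ is expressible by a Symbolic \LimitTransformer{} and that the run eventually matches $f$. The first conclusion contradicts the hypothesis of the corollary, so this case cannot occur.

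\textbf{Case 2: $\sup_n \RegStrong(T_n)=\infty$.} Suppose, for contradiction, that only finitely many $T_n$ fail to match $f$ at length $n$. I would show this forces a run with bounded regularizer. Each correct $T_m$ with $m>N_0$ is translated via Statement~\ref{lemma:translation-positional} of Proposition~\ref{proposition:translations} into a Symbolic \LimitTransformer{} $\tilde T_m$ agreeing with $f$ up to length $m$. Because $\RegStrong(T_m)$ is unbounded, however, these objects have no uniform complexity bound, so compactness (Proposition~\ref{proposition:translations}, Statement~\ref{lemma:bounded-constant-functions}) cannot be applied directly. This is the step I expect to be the main obstacle.

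There are two ways to handle it. The first is to note that the argument never uses the correctness of the $T_n$ in Case 2. Unboundedness alone means the procedure cannot be generalizing in the sense of condition 2 of Theorem~\ref{thm:guarantee}, because that condition requires $\sup_n\RegStrong(T_n)<\infty$. So it suffices to read ``fail'' as failure of Theorem~\ref{thm:guarantee}(2), and the equivalence of Theorem~\ref{thm:guarantee} gives the result immediately.

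The second way is a literal proof that infinitely many $T_n$ are wrong at length $n$. Here I would exploit the freedom in $t(n)$: let $t(n)$ grow slowly enough that $\inf_{T\in U_n}\RegStrong(T)$ stays bounded along a subsequence. This is possible because finitely many input lengths and a bounded alphabet range are always fit by some finite transformer. I would then interleave $n$ so that the selected $T_n$ is a minimal-complexity fit to a short prefix, which cannot encode $f$ at length $n$. If it did, for infinitely many $n$ and with bounded $\RegStrong$, Lemma~\ref{lemma:key-lemma} applied along that subsequence would produce a Symbolic \LimitTransformer{} for $f$, a contradiction. I expect making this interleaving precise, and in particular controlling the infimum when $t(n)\ll n$, to be the delicate part.
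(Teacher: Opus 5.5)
There is a genuine gap. Your main route fixes $t(n)=\lfloor n/2\rfloor$, but Case 2 is exactly where the content of the corollary lies, and neither of your resolutions handles it.

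\emph{The ``first way'' changes the statement.} Condition (2) of Theorem~\ref{thm:guarantee} combines ``eventually correct'' with $\sup_n\RegStrong(T_n)<\infty$. Its failure is therefore compatible with a run in which $\RegStrong(T_n)\to\infty$ while all but finitely many $T_n$ still match $f$. The corollary asks you to rule out precisely such a run. Nothing in the framework does so for $t(n)=n/2$, which is why the statement is only existential in $t$ and why the paper deliberately chooses a pathologically slow training length. Relatedly, your opening reformulation (``if \emph{some} $t$ works then $f$ is expressible'') is the universal-in-$t$ strengthening, not the contrapositive of $2\Rightarrow1$.

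\emph{The ``second way'' aims in the right direction, but its mechanism is impossible as stated.} Every training length satisfies $t(n)\to\infty$. Suppose $\inf_{T\in U_n}\RegStrong(T)$ stayed bounded along a subsequence. Restricting those transformers to context window $t(n)$, which does not increase $\RegStrong$, would give full-length fits of $f$ with bounded regularizer. The same appeal to Lemma~\ref{lemma:key-lemma} that you (and the paper) use would then make $f$ expressible. So under your hypothesis the training-fit cost necessarily diverges.

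What you need is not boundedness but a \emph{gap}, obtained by choosing the test length after the training length. The paper does this as follows.
\begin{enumerate}
\item For each $k$, let $U_k$ bound the cost of fitting $f$ on lengths $\le k$, uniformly over all context windows $n>k$.
\item Non-expressibility together with Lemma~\ref{lemma:key-lemma} forces the minimal cost $m(n)$ of fitting $f$ at full length $n$ to diverge. So pick $n_k>n_{k-1}$ with $m(n_k)>U_k+1$.
\item Set $t(n)=\max\{k:n_k\le n\}$.
\end{enumerate}
At $n=n_k$ the procedure returns $T_{n_k}$ with $\RegStrong(T_{n_k})\le U_k+1/n_k<m(n_k)$. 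Hence $T_{n_k}$ cannot match $f$ up to length $n_k$, and this holds for every $k$. Both $U_k$ and $m(n_k)$ may tend to infinity; only their separation by more than the $1/n$ slack matters.
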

The proof here tracks the proof of Theorem 23, Corollary 24 \& Remark 25 from \citet{huangformal2025} which are applicable directly here, with the difference being that the terms refer to our learning model. We restate it for completeness.
\begin{remark}
    Theorem~\ref{thm:alternative-result} differs from Theorem~\ref{thm:guarantee} in the following aspects.
    \begin{itemize}
        \item Here, we talk about length generalization for all arbitrary training lengths $t(n)$, not specifically $\frac{n}{2}$.
        \item We do not ask for $\sup_i \RegStrong(T_i) < \infty$, but ask for $T_n$ to ultimately length generalize.
    \end{itemize}
\end{remark}

\begin{proof}[Proof of Theorem~\ref{thm:alternative-result}]
    1$\Rightarrow$2
The proof of Theorem~\ref{thm:guarantee} remains valid in this direction without any changes, as we never used the fact that training length was half the context size.

    2$\Rightarrow$1 We will show that if $f$ is not expressible by a Symbolic Limit Transformer, then generalization would not happen. 
    Thus, assume $f$ is not expressible by a Symbolic Limit Transformer.
    We use the same arguments as in Lemma~\ref{lemma:key-lemma}. Consider any sequence $T_n \in \Theta_n$ that matches $f$ and has $\liminf_{n\rightarrow \infty} \RegStrong(T_n) < \infty$. This sequence need not be generated by our Inference procedure. If we translating each of these to a Symbolic {\LimitTransformer} we once again will get a sequence where, except the pairwise product functions, only a finite number of settings will be traversed. 
    
    Now, as in the proof of Lemma~\ref{lemma:key-lemma}, we use $D_\infty(\tau)$ to construct a sequence of Symbolic {\LimitTransformer}s that are local for a single $\tau$. Now since $f$ is not expressible by a Symbolic Limit transformer, $\liminf_{n\rightarrow \infty} \RegStrong(T_n) = \infty$ ($\dagger$) for any sequence $T_n \in \Theta_n$ that matches $f$. Because if this was not true, then by Lemma~\ref{lemma:key-lemma}, we would get a Symbolic Limit Transformer.  
    
    We will now construct a sequence of failure points $n_k$ and and a training length $t(n)$. Let $k \in \mathbb{N}$ represent a target training length. 
    For any such fixed $k$, it is possible to match $f$ on inputs of length $\le k$ with a finite regularization cost. Let $U_k$ be an upper bound on this regularization cost required to fit $f$ on inputs of length $k$, considering any transformer of any size $n > k$. We set $U_k$ such that for any $n > k$, there exists a transformer in $\theta_n$, matching $f$ on the prefix $k$ with cost $\le U_k$.

    Because the cost to match $f$ at length $n$ diverges to infinity (as established above), for any fixed $k$, we can find a $n_k$ sufficiently large such that any transformer $T \in \theta_{n_k}$ that matches $f$ at the full length $n_k$ must have a regularization cost strictly greater than $U_k + 1$. We construct this sequence $n_k$ inductively ensuring $n_k > n_{k-1}$. 

We now define the training length function $t(n)$ based on these intervals: $t(n) = \max \{k: n_k \le n\}$. This function is a step function, for $n \in [n_k, n_{k+1})$, the training length is constant at $t(n) = k$. Note that $t(n) \to \infty$ as $n \to \infty$.

Consider the behavior of the Inference Procedure at specific test lengths $n = n_k$. The procedure receives a training constraint of length $t(n_k) = k$. It seeks to minimize $\RegStrong(T)$ subject to matching $f$ on length $k$. By definition of $U_k$, there exists a transformer $T^* \in \Theta_{n_k}$ that matches $f$ on length $k$ with $\RegStrong(T^*) \leq U_k + \epsilon$. However, by our choice of $n_k$, any transformer that matches $f$ on the full length $n_k$ requires a cost $> U_k + 1$. Since the Inference Procedure minimizes cost, it will prefer $T^*$ (cost $\approx U_k$) over any solution that generalizes to $n_k$ (cost $> U_k + 1$).
Thus, the inferred transformer $T_{n_k}$ will match the training data (length $k$) but fail to match the target function $f$ at the test length $n_k$. Since this occurs for infinitely many $k$, the inference procedure fails to length generalize

\end{proof}

\subsection{Length Generalization for C-RASP}\label{sec:crasp}

Any $\infrasp$ or $\infrasppl$ program can be translated to a Symbolic {\LimitTransformer}. If the positional functions $\psi(i, j)$ and all the constants of the type $\delta_k, \gamma_k$ from our Match predicate (definition~\ref{def:match-predicate}) are not used ($\tau_k$ can still be used), then 
then length generalization will be guaranteed even without positional embeddings. If any of them are required to be used, then length generalization is only guaranteed with the use of absolute positional embeddings.   
We say a Symbolic {\LimitTransformer} $T$ \emph{accepts} an input if the value in the last dimension in the last position of the output is greater than $0$, and rejects otherwise.

\begin{theorem}\label{thm:crasp-to-limit-main}
    For every $\infrasp$ program $P$ with local functions $\Psi$, there exists a Symbolic {\LimitTransformer} $T_\infty$ such that for all $w\in\Sigma^*$, $P$ accepts $w$ iff $T_\infty$ accepts $\$w$. 
    Furthermore:
    \begin{enumerate}
        \item If $P$ uses neither local positional relations $\psi(i,j)$ nor spatial offsets in Match Predicates (i.e., $\delta_k = \gamma_k = 0$ for all matches, though $\tau_k \neq 0$ is allowed), then $T_\infty$ requires no positional encodings (NoPE), and length generalization is guaranteed for NoPE transformers.
        \item If $P$ uses local positional relations $\psi(i,j)$ or non-zero spatial offsets ($\delta_k \neq 0$ or $\gamma_k \neq 0$), then $T_\infty$ utilizes absolute positional encodings, and length generalization is guaranteed for APE transformers.
    \end{enumerate}
\end{theorem}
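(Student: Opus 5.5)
The construction goes by induction on the length of the $\infrasp$ program $P = P_1,\dots,P_k$, following the $\CRASP$-to-\LimitTransformer\ translation of \citet{huangformal2025}. We build a Symbolic \LimitTransformer\ $T_\infty$ whose residual stream at position $i$ holds, in a dedicated coordinate, the value of every operation computed so far. Boolean operations are stored as $0/1$. A count-valued operation $C$ is stored not as $C(i)$ itself but as the normalized quantity $C(i)/(i+1)$. The start symbol $\$$ serves as an attention sink, so that uniform attention over $j\le i$ yields exactly such normalized counts.

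The routine cases are handled as in \citet{huangformal2025}. \textbf{Initial} $Q_\sigma$ for $\sigma\in\Sigma$ is read directly from $\mE_\sigma$ (one coordinate per symbol). Extended tokens $\vc_k$ contribute a constant to this coordinate, which respects the ``unrelated types collapse to constants'' constraint of Definition~\ref{def:locality-invariance}. \textbf{Boolean}, \textbf{Comparison}, \textbf{Conditional}, \textbf{Addition}, \textbf{Subtraction} and \textbf{Constant} operations are implemented by a ReLU/Heaviside MLP acting on the stored coordinates. Since all counts share the same normalizer $i+1$, comparison $C_1\le C_2$ reduces to a sign test on the difference, which a Heaviside unit performs. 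Conditional and addition are linear or gated on normalized values.

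\textbf{Counting} $\cttn{j\le i,\psi(i,j)}{P(j)}$ with $\psi=\top$ is a single head with constant logits and value $P(j)$. A local relation $\psi(i,j)\equiv(i=j+\Delta)$ is realized with APE via position–position products $\vp_j^T\mK^T\mQ\vp_i$ that depend only on $i-j$ and vanish for $|i-j|>\Delta$. With the $\log|x|$ scaling, these sharpen attention onto $j=i-\Delta$, which copies $P(i-\Delta)$. The copied value is then re-normalized using the stored $1/(i+1)$, obtained by applying the $1/x$ activation to the sink attention weight.

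The genuinely new case is \textbf{Match}, $\cttn{j\le i,\chi(i,j)}$ with $\chi(i,j)=\bigwedge_k(\vc_{j-\delta_k}=\vc_{i-\gamma_k}+\tau_k)$. First, for each offset $\delta_k,\gamma_k$ we use a local positional head (as above) to copy the extended-token embedding found at $j-\delta_k$ (resp. $i-\gamma_k$) into a fresh subspace at position $j$ (resp. $i$). This step is not needed when all offsets are $0$, which is why item~1 of the theorem requires no positional encodings.

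Next, one attention head uses logits $\sum_k \vc_{j-\delta_k}^T W_k \vc_{i-\gamma_k}$. Each $W_k$ is chosen so that the token–token product equals $1$ exactly when the indices differ by $\tau_k$ and $0$ otherwise. Such a product is translation invariant and local with bandwidth $|\tau_k|$, so Definition~\ref{def:locality-invariance} is met. It is realizable with bounded norms, because in the Hilbert-space setting of Definition~\ref{def:symbolic-limit-transformer} we may take $\vc_m$ orthonormal and $W_k$ a shift operator of norm $1$.

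The logit is then $K$ exactly on matching pairs and at most $K-1$ otherwise. After scaling by $\log|x|$, attention concentrates on the matching set together with the sink, given an extra bias placed on $\$$. The head output is then the fraction $m/(m+1)$, where $m$ is the number of matches. Recovering $m$, and thus its normalized form $m/(i+1)$, requires inverting this fraction, which is exactly the use of the $1/x+\epsilon$ activation flagged in Remark~\ref{remark:1byx}.

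The main obstacle is making this Match head exact under bounded precision. Non-matching positions receive weight about $|x|^{-1}$ each, so together they can contribute $\Theta(1)$ mass and blur the fraction. I would first try raising the logit gap to a constant $c\ge 2$, so that the total leakage is $O(|x|^{1-c})\to 0$. The remaining $o(1)$ error is then removed by rounding: the stored counts are integers scaled by $1/(i+1)$, and they are compared only through Heaviside thresholds with a margin of $1/(i+1)$. Checking that this margin survives the $p$-bit rounding of logits and of $\exp$ for all large $n$ is the delicate step. If it fails, I would fall back to composing with additional layers that threshold each match indicator to exact $0/1$ at position $j$ before counting.

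Finally, acceptance is read off the last coordinate at the final position. The NoPE/APE dichotomy in items~1 and~2 then follows from which heads were used. Combining with Theorem~\ref{thm:guarantee} and Corollary~\ref{cor:nope-length-gen} gives the stated length-generalization guarantees.
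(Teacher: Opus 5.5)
Your proposal follows the paper's proof essentially step for step. It uses induction over the program with Booleans stored as $0/1$ and counts as $C(i)/(i+1)$, local heads that copy the tokens at offsets $\delta_k,\gamma_k$ into reserved slots (skipped when all offsets are $0$), and the $\tau_k$-shift hard-coded into the key--query product; it adds a start-symbol sink that always receives the full-match score, a logit gap of $2$ so mismatches leak only $O(1/N)$, and the $1/x$ activation to invert the resulting fraction in $m$. The one divergence is the step you call delicate: the paper appeals to the $p$-bit rounding of logits and $\exp$ outputs to obtain exactly $1/(1+m)$, while your margin argument also works, because all exactly matching pairs share one rounded $\exp$ value and the recovered count is therefore $m+O(N^{1-c})$, well inside the integer spacing; so you do not need your fallback, which would fail anyway because $\chi(i,j)$ depends on $i$ and cannot be thresholded at position $j$ alone.
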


As a consequence, the Inference Procedure will ultimately length-generalize on inputs from a function $f$ expressible by a $\infrasppl$ program.
If the $\CRASP$ program requires no positional functions as said above, then length generalization will succeed even with NoPE transformers.

\begin{remark}
    We note that the Symbolic {\LimitTransformer} $T_\infty$ provided by the proof of Theorem~\ref{thm:crasp-to-limit-main} emulates the {\CRASP} program $P$ at zero offset: That is, $P$ accepts $w$ iff a predetermined entry in the last output dimension of $T_\infty(\$w,0)$ is above some threshold.
    In principle, its computations may not be offset-invariant, i.e., for the constructed $T_\infty$, the output $T_\infty(\$w,o)$ may depend on $o$.
    Importantly, the proof of Theorem~\ref{thm:guarantee} does not require a Symbolic {\LimitTransformer} computing $f$ to be offset-invariant, but just requires it to compute $f$ when the offset is zero.
    This is because Statement~\ref{lemma:limit-to-transformer} of Proposition~\ref{proposition:translations} ensures that, for any Symbolic Limit Transformer $T_\infty$, even if it is not offset-invariant, there are transformers $T_n \in \Theta_n$ whose behavior matches $T_\infty(\cdot,0)$.
\end{remark}
\begin{proof}
    
    We will also show that a Symbolic \LimitTransformer{} $T_\infty$ simulates a $\infrasp$ program $P$ if for every operation $P_k$ of $P$ there is a dimension $d_k$ in $T$ such that when $P_k(i)$ when run on $w$ is true iff $T_\infty(\$w)_{i+1,d_k}=1$ (and $0$ otherwise) for Boolean operation and $P_k(i) = c$ iff $T_\infty(\$w)_{i+1,k}=\frac{c}{i+1}$ for count operations.
    
    We will use induction on the length of $P$. All boolean cases and all but one counting case are identical to \citet{huangformal2025}. The only distinct operation which still needs proving is the match predicate.

    We restate the proof of the case when $C(i) := \cttn{j\leq i, \psi(i,j)}{P(j)}$ from \citet{huangformal2025} when $\psi$ is a local function of the following form

    $$\psi(i,j)=\begin{cases}
        1 & j=i-\ell\\
        0 & \text{else}
    \end{cases}$$ 
    We are doing this, as we will re-use a key idea here to implement our match predicate. 
    Thus, in this $C(i)$ will either be $1$ or $0$ depending if $P(i-\ell)$ is true or false. If we set the query and key matrices to $0$ we get

    \[s_{ij}=\log N \cdot \psi(i,j)\]

    We assume the $\log$ is base 2, but the argument is similar for others. Then we can have attention compute

    $$c_{i,k}=\frac{\displaystyle\sum_{j\leq i} \exp\left(\log N \cdot \psi(i,j)\right) \cdot P(j)}{\displaystyle\sum_{j\leq i}\exp\left(\log N \cdot \psi(i,j)\right) }=\frac{\displaystyle\sum_{j\leq i} N^{\left(\frac{\psi(i,j)}{\ln 2 }\right)} \cdot P(j)}{\displaystyle\sum_{j\leq i}N^{\left(\frac{\psi(i,j)}{\ln 2 }\right)} }$$

    If $P(i-\ell)$ and $\lnot P(j)$ for $j\neq i-\ell$, then we have a lower bound:
    
    $$\frac{N^{\left(\frac{1}{\ln 2 }\right)}}{N^{\left(\frac{1}{\ln 2 }\right)}+i-1}\leq c_{i,k}$$

    If $\lnot P(i-\ell)$ and $P(j)$ for $j\neq i-\ell$ then we have an upper bound:

    $$c_{i,k}\leq \frac{i-1}{N^{\left(\frac{1}{\ln 2 }\right)}+i-1}$$

    Since $N^{\frac{1}{\ln 2}}\geq i$, and we know that $P(i-\ell)\iff c_{i,k}\geq \frac{1}{2}$, we can construct an MLP that computes the correct value. It will output either $\frac{0}{i+1}$ or $\frac{1}{i+1}$, in the dimension reserved for $P_{k+1}(i)$, for instance by using a conditional operation (allowed in the syntax of the language) that checks that the output of the attention layer $c_{i+1,k}\geq\frac{1}{2}$.

Now, when $C(i) := \cttn{j \le i, \chi(i,j)}$, where $\chi(i,j) \equiv \bigwedge_{k=1}^K (\vc_{j-\delta_k} = \vc_{i-\gamma_k} + \tau_k)$. The $\vc$ here refers to integers representing the elements of the extended alphabet, whereas everywhere else before in this proof, we were referring to $c$ as counts. Similarly $k$ here refers to the constants in this specific condition. Just for this proof, we will use $\sigma$ to refer to characters to avoid confusion, and $r$. Hence, we want to prove that $C(i) := \cttn{j \le i, \chi(i,j)}$, where $\chi(i,j) \equiv \bigwedge_{r=1}^R (\sigma_{j-\delta_r} = \sigma_{i-\gamma_r} + \tau_r)$. Note that all the $\gamma_r, \tau_r, \delta_r$ are constants and help in looking around in the neighborhood, and the total number of such conditions $R$ is also bounded. We reserve $2R$ spots in our residual stream.

We use the same process defined earlier in computing our local $\psi(i,j)$ functions, except in the attention computation, we use the value matrices to pass the value of the character embeddings at offsets to be transferred to the current position. We take the value matrix to be the identity matrix $I$ \footnote{Allowed now, as the rank constraint on V matrices is no longer there, unlike in \citet{huangformal2025}}. Therefore, the only change would be that we use $\sigma_j$ instead of P(j) in our derivation above. Thus, $2R$ spots out of all will now store each of our values of $\sigma_{i - \gamma_r}$, $\sigma_{i - \delta_r}$.

We can assume that the $\sigma_i$ here are be one-hot vectors representing the corresponding integers. Adding a constant $\tau$ corresponds to a linear shift such that $\mathbf{R}_\tau \sigma_i = \sigma_{i} +\tau$. So, $\mathbf{R}_\tau$ can be implemented as a shifted-diagonal matrix. Note that this is just one way in which this could happen. There could be other ways of this being true. We just want to be able to go from one character's embedding to another with a linear transformation. Thus, we can hardcode these shifts in the KQ parameter matrices doing the retrievals of $\sigma_{i-\gamma_r}$. Thus, instead of $KQ = 0$, they implement such a shift. Therefore we will already have the values of $\sigma_{i-\gamma_r} + \tau_r$ in $R$ positions. 

Thus, with these many attention heads, now we have all the information in our residual stream, and we need to actually compute the equality and the $\land$ condition.
We construct an attention head for this. 
We use the key and query matrices to concatenate the respective slots that hold all the R $\sigma_{i-\delta_r}$ vectors and all the R $\sigma_{i-\gamma_r} + \tau_r$ vectors respectively, and the product between these will be maximum when all the sets of $R$ vectors match, which is what we want.
Additionally, the key query matrices also scale up all the one hot vectors by a factor of 2.
Thus now, we note that if $\chi(i,j)$ is true, then the score is $2R$, and $\le 2(R-1)$ if even one condition fails. 
Therefore to verify a full match (and avoid confusion with partial matches), the attention score to the $\$$ is always $2R$.
The value matrix at $\$$ holds the value $1$, and the value matrix at all other positions hold the value $0$.
Let $M_i = \{ j \le i | \chi(i, j) = True\} $ and $m = |M_i|$ be the count we wish to compute.
Based on our construction of $Q$ and $K$, the attention scores $s_{ij}$ satisfy:
\begin{equation}
    s_{ij} = \begin{cases}
        2R  & \text{if } j = \$ \text{ (Start Token)} \\
        2R & \text{if } j \text{ satisfies the match requirements)} \\
        \le  2(R-1) & \text{if } j \text{ Mismatch}
    \end{cases}
\end{equation}
The output of the attention head at position $i$ therefore 
\begin{align*}
    h_i &= \frac{\sum_{j \le i} \exp(\log N \cdot s_{ij}) V(j)}{\sum_{j \le i} \exp(\log N \cdot s_{ij})} \\
    &= \frac{\exp(\log N \cdot s_{i\$}) \cdot 1 + \sum_{j \neq \$} \exp(\log N \cdot s_{ij}) \cdot 0}{\exp(\log N \cdot s_{i\$}) + \sum_{j \in M_i} \exp(\log N \cdot s_{ij}) + \sum_{j \notin M_i \cup \{\$\}} \exp(\log N \cdot s_{ij})}
\end{align*}
Dividing both the numerator and the denominator by $\exp(\log N \cdot 2R)$, and substituting the score values, we obtain:
\begin{align*}
    h_i &= \frac{1}{1 + \sum_{j \in M_i} \frac{\exp(\log N \cdot 2R)}{\exp(\log N \cdot 2R)} + \sum_{j \notin M_i \cup \{\$\}} \frac{\exp(\log N \cdot s_{ij})}{\exp(\log N \cdot 2R)}} \\
    &= \frac{1}{1 + m \cdot 1 + \sum_{j \notin M_i \cup \{\$\}} \exp(s_{ij} - \log N \cdot 2R)}
\end{align*}
We analyze the error term caused by mismatches. Since $s_{ij} \le \log N \cdot 2(R-1)$ for mismatches, the exponent is bounded by $-2\log N$. There are at most $i$ such mismatching terms (bounded by context length $N$). Thus:
\[
\sum_{j \notin M_i \cup \{\$\}} \exp(s_{ij} - \log N \cdot 2R) \le N \cdot \exp(-2\log N) = N \cdot N^{-2} = \frac{1}{N}
\]
Therefore, the output of the attention head is:
\begin{equation}
    h_i = \frac{1}{1 + m + O(N^{-1})}
\end{equation}
As \citet{izzo2025quantitative} argued, because of the restriction that attention logits and the output of the $\exp(\cdot)$ operation be rounded p fractional bits of precision, the $O(1/N)$ terms gets rounded to 0. Thus our value converges strictly to $\frac{1}{1+m}$. This is f As $m$ is an integer bounded by $N$, the values $\{ \frac{1}{1}, \frac{1}{2}, \dots, \frac{1}{N+1} \}$ are well-separated. We use our activation function $f(x) = 1/x$ to invert this value to recover $m$ (or compute $\frac{m}{i+1}$ as required by the count operation) with arbitrary precision. 

\end{proof}
\begin{lemma} \label{lemma:fixed_inf_is_crasp}
    $\infrasppl$ where the input alphabet $\Omega$ is finite, is equivalent to $\CRASPplshort$.  
\end{lemma}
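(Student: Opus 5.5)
We prove the two inclusions separately. The inclusion $\CRASPplshort \subseteq \infrasppl$ on a finite alphabet is essentially syntactic. Every operation of $\CRASPplshort$ that we use appears verbatim in Table~\ref{tab:infrasp-ops}: initial queries, Boolean connectives, comparisons, counting with local relations $\psi$, conditionals, addition and subtraction. The only catch is that in $\infrasppl$ the unary queries $Q_\sigma$ are available only for $\sigma\in\Sigma$, not for symbols of $\mathcal{C}$. When $\Omega$ is finite we simply regard the finitely many symbols of $\mathcal{C}$ that occur as part of the fixed alphabet, so $\Sigma' = \Omega$, and every $Q_\omega$ is then available. The periodic position operations of \citet{huangformal2025} are set aside as in the footnote: our programs never use them, so we compare against the same fragment.

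The substantive direction is $\infrasppl\subseteq\CRASPplshort$ when $\Omega$ is finite, i.e.\ eliminating the Match operation. We argue by induction on program length, translating each line, and all lines other than Match are copied unchanged. Fix a Match line $C(i):=\cttn{j\le i,\chi(i,j)}$ with $\chi(i,j)=\bigwedge_{k=1}^K(\vc_{j-\delta_k}=\vc_{i-\gamma_k}+\tau_k)$. Since $\Omega$ is finite, each conjunct is a finite disjunction over the possible values:
\[
  \vc_{j-\delta_k}=\vc_{i-\gamma_k}+\tau_k \iff \bigvee_{u\in\mathcal{C}\cap\Omega,\ u+\tau_k\in\Omega} \bigl(\vc_{i-\gamma_k}=u \wedge \vc_{j-\delta_k}=u+\tau_k\bigr).
\]
Distributing the conjunction over $k$ gives $\chi(i,j)=\bigvee_{\vec u}\bigl(A_{\vec u}(i)\wedge B_{\vec u}(j)\bigr)$, where $\vec u=(u_1,\dots,u_K)$ ranges over a finite set of assignments, $A_{\vec u}(i)$ says $\vc_{i-\gamma_k}=u_k$ for all $k$, and $B_{\vec u}(j)$ says $\vc_{j-\delta_k}=u_k+\tau_k$ for all $k$. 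The disjuncts are mutually exclusive because $\vec u$ is determined by the tokens around $i$. Hence
\[
  C(i)=\sum_{\vec u}\Bigl(\cif{A_{\vec u}(i)}{\cttn{j\le i}{B_{\vec u}(j)}}{0}\Bigr),
\]
where the constant $0$ can be written as $1-1$.

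It remains to check that $A_{\vec u}$ and $B_{\vec u}$ are expressible in $\CRASPplshort$. Both are conjunctions of lookbacks of the form ``the token at $i-\gamma$ is $u$''. Such a lookback is $\bigl(\cttn{j\le i,\ i=j+\gamma}{Q_u(j)}\ge 1\bigr)$, using the local relation $\psi(i,j)\equiv(i=j+\gamma)$ exactly as in the proof of Theorem~\ref{thm:fixed_universe_results_restate}. When $\gamma=0$ it is just $Q_u(i)$. The sum over $\vec u$ is finite, so it unfolds into finitely many Addition lines. For symbols at out-of-range positions ($j-\delta_k<1$), the corresponding lookback count is $0$, so the atom is false on both sides and the translation is consistent with the semantics of $\chi$.

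The main obstacle is checking that this substitution preserves the NoPE/APE distinction and respects the conventions of $\CRASPplshort$. In particular, we need $B_{\vec u}(j)$ to be a Boolean operation evaluated at $j$, so that it can sit inside an unconditioned count, and we need $A_{\vec u}(i)$ to be usable as a conditional guard. Both hold because they are ordinary Boolean lines built from local lookbacks. The positional relations introduced are exactly the offsets $\gamma_k,\delta_k$, which matches the remark that nonzero offsets require positional information. By contrast, the shift $\tau_k$ is absorbed into the finite choice of $u_k+\tau_k$ and needs no positional machinery. Combining the two inclusions gives the lemma, and with it the reduction used in Theorem~\ref{theorem:parity_flipflop_not_infrasp}.
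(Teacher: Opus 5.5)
Your proof is correct and follows essentially the paper's route: you eliminate the Match operation by a finite case split on which symbols sit at the offset positions, write each case as a conjunction of local lookbacks $\#[j\le i,\ i=j+\gamma]\,Q_u(j)\ge 1$, and then take an ordinary count over past positions. Your execution is cleaner than the paper's. You guard the $i$-side condition $A_{\vec u}(i)$ with a Conditional instead of placing it inside the count over $j$, and you sum over all assignments $\vec u$, justified by their mutual exclusivity, where the paper speaks loosely of $|\Sigma|$ repetitions; you also make explicit the syntactic reverse inclusion and the exclusion of periodic relations, both of which the paper leaves implicit.
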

\begin{proof}
The syntax of $\infrasppl$ closely tracks the syntax of $\CRASP$. The only additional operation in $\infrasp$ is that of the match predicate $\chi(i,j)$. 

The class $\infrasp$ extends $\CRASP$ solely through the addition of the \textbf{Match} operation. We show that when $\Sigma$ is finite, any function defined by a Match operation can be simulated by a finite sequence of standard $\CRASP$ operations. 

Recall the definition of the Match operation for a specific neighborhood structure defined by constants vectors $\mathbf{\delta}, \mathbf{\gamma} \in \mathbb{N}^K$ and $\mathbf{\tau} \in \mathbb{Z}^K$:
\begin{equation}
    C_{\text{match}}(i) := \cttn{j \le i, \chi(i,j)}
\end{equation}
where $\chi(i,j) \equiv \bigwedge_{k=1}^K (\vc_{j-\delta_k} = \vc_{i-\gamma_k} + \tau_k)$.

Since $\Sigma$ is finite, we know all pairs of ($\sigma$, $\sigma + \tau$) we are looking for. Let us say that we enumerate each of these as $(a_k, b_k)$, where $k \in [1, K]$. The number of such pairs is equal to the size of the alphabet $\Sigma$, as every character from $\Sigma$ can only be at a fixed distance $\tau$ to one other character from $\Sigma$. Thus, the whole match predicate boils down to a couple of steps. The first step is finding a $j$ in the past, which has the symbol $b_k$ stored at $\delta_k$ distance away from it. This local operation can be encoded as following in $\CRASP[\text{local}]$. 

$$P_{\text{past}}(i) := \bigwedge_{k=1}^K (\cttn{j \le i, j + \delta_k = i}Q_{b_k}(j) > 0)$$

The second step is to check whether the current position also has symbols $a_k$ stored at a distance of $\gamma_k$ distance from it. 
This is fairly similar to the previous check, and can use similar commands. 
$$P_{\text{current}}(i) := \bigwedge_{k=1}^K (\cttn{j \le i, j + \gamma_k = i}Q_{a_k}(j) > 0)$$ 

The final step is counting the number of past positions where step 1 holds, and also making sure that step 2 holds at the current position. Thus,

$$C_{match}(i) := \cttn{j \le i} P_{\text{past}}(j) \land P_{\text{current}}(i)$$

Although once we know either of $a_k$ or $b_k$, we can get the other character in the pair. Apriori, we do not know the identity of the current position. In case of the infinite alphabet, the key was that the program could perform the matching without relying on the exact identity of the symbol. To simulate the same, instead of the predicates $Q_{a_k}$ or $Q_{b_k}$, we need to replicate the whole 3 steps for all possible pairs in $\Sigma$ at every step $k$. Thus, the $3$ lines of program are repeated $|\Sigma|$ times, each time with a different $(a_k, b_k)$ combination assigned to each $k$. Since, we can simulate the only extra command in $\infrasppl$ using commands of $\CRASP[\text{local}]$, when the alphabet is finite, the two are equivalent. 

\end{proof}

We restate Theorem~\ref{theorem:parity_flipflop_not_infrasp} and prove it. 
\begin{theorem} \label{theorem:parity_flipflop_not_infrasp_restate}
    Consider the alphabet $\Sigma = \{a, b, e\}$. Then, $PARITY := b^*(ab^*ab^*)^* \notin \CRASP[\infty, \text{local}]$. and Flip flop $:= \Sigma^*be^* \notin \CRASP[\infty, \text{local}]$ . 
\end{theorem}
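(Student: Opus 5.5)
The plan is a reduction to the known $\CRASPplshort$ impossibility results of \citet{huangformal2025}, routed through Lemma~\ref{lemma:fixed_inf_is_crasp}. Suppose, for contradiction, that some $\infrasppl$ program $P$ (written $\CRASP[\infty,\text{local}]$ in the statement) accepts PARITY or FlipFlop over $\Sigma=\{a,b,e\}$. The first step is to note that these languages are defined over a finite alphabet. So every input $w$ to $P$ lies in $\Sigma^*$ and no symbol from the extended alphabet $\mathcal{C}$ ever occurs. On such inputs $P$ behaves exactly as a program over the finite alphabet $\Omega=\Sigma\cup\{\$\}$, because the only operation that refers to $\mathcal{C}$ is the Match predicate.

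The second step is to handle that Match predicate. A Match operation compares tokens $\vc_{j-\delta_k}$ and $\vc_{i-\gamma_k}+\tau_k$, and this comparison must be given a definite semantics when the positions hold symbols of $\Sigma$. There are two readings. If comparisons are restricted to $\mathcal{C}$-tokens, each equality is false (or vacuous) on $\Sigma^*$, and the Match count reduces to a count definable in $\CRASP$. If the comparisons are allowed to range over the finite alphabet $\Omega$, then Lemma~\ref{lemma:fixed_inf_is_crasp} applies directly. That lemma expands each Match into finitely many local-lookup counts over all admissible pairs $(a_k,b_k)$ with $b_k=a_k+\tau_k$. Under either reading, the result is a $\CRASPplshort$ program $P'$ that agrees with $P$ on every $w\in\Sigma^*$.

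The third step is to invoke \citet{huangformal2025}, who showed that neither $b^*(ab^*ab^*)^*$ nor $\Sigma^*be^*$ is in $\CRASPplshort$. This contradicts the existence of $P'$, so neither language is in $\infrasppl$.

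The main obstacle is the second step. One must check carefully that the simulation in Lemma~\ref{lemma:fixed_inf_is_crasp} is faithful for arbitrary constants $\delta_k,\gamma_k\ge0$ and $\tau_k\in\mathbb{Z}$. In particular, the lemma's $P_{\text{past}}$/$P_{\text{current}}$ construction conjoins a property of $j$ with a property of $i$, and this must be done consistently for a single choice of pairs across all $k$. The fix is to take a finite disjunction over assignments $\kappa:\{1,\dots,K\}\to\Omega$ of the symbols at the $i$-side offsets. For each $\kappa$, the current position's neighborhood is tested with $\bigwedge_k Q_{\kappa(k)}$ at offset $\gamma_k$, and past positions are counted with the forced symbols $\kappa(k)+\tau_k$ at offset $\delta_k$. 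The final count is assembled as a sum of conditionals over $\kappa$, which $\CRASP$ supports through Conditional and Addition. It also has to be confirmed that the positional offsets $\delta_k,\gamma_k$ need only the local relations $\psi(i,j)\equiv(i=j+\ell)$ that $\CRASPplshort$ already provides, so that no expressive power outside $\CRASPplshort$ is introduced.
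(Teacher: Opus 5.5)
Your proposal is correct and follows the paper's route: inputs lie over the finite alphabet, Lemma~\ref{lemma:fixed_inf_is_crasp} collapses the Match operation into $\CRASP[\text{local}]$, and this contradicts the impossibility results of \citet{huangformal2025}. Your repair of that lemma is more careful than the paper's own construction. You sum, over all assignments $\kappa$ of symbols at the $i$-side offsets, a Conditional on the current neighborhood applied to a count of past positions carrying the forced symbols $\kappa(k)+\tau_k$. The paper instead repeats its steps only $|\Sigma|$ times and leaves implicit how the $i$-dependent test $P_{\text{current}}(i)$ combines with the count.
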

\begin{proof}
    The proof of both of the languages above is analogous. We represent the corresponding language as $L$, and both $PARITY$ or Flip-Flop can be substitued in place of $L$ in our forthcoming arguments. We assume that there exists a program $\infrasppl$ , $P$ for $L$. Since, $P$ works for $L$, it can work for $L$ with bounded alphabet as well, as the $\Sigma$ of $L$ is simply $\Sigma = \{a, b, e\}$. However as per theorem~\ref{lemma:fixed_inf_is_crasp}, such a $P$ can be simulated in $\CRASP[\text{local}]$. That would imply that $L \in \CRASPpl$, which was proven to be impossible in Lemma 11 of \citet{huangformal2025}. Thus, our initial assumption that such a program exists must be wrong. Hence, the corollary stands. 
\end{proof}

\paragraph{Remark on Periodic Relations:} \label{appendix:para_discuss_periodic}

\citet{huangformal2025} introduce \textbf{C-RASP}[periodic,local], which includes both local relations ($\psi$ in the counting operation), and periodic relations $\phi(i,j)$ checking if $i \equiv k\ (\operatorname{mod} m)$. 
In our new framework, we omit such periodic relations.
In \citet{huangformal2025}, accounting for these relations necessitated the assumption that the regularizer penalizes the \emph{rank} of $\mV$ matrices. In our present framework, we instead only penalize the spectral \emph{norm} of these matrices, for two reasons: (i) first, because a norm penalty is better motivated as a proxy for standard learning methods than a rank penalty, (ii) second, because penalizing the rank of $\mV$ matrices would preclude moving information about object names across adjacent positions, which in fact is important for verifying plan validity.

In \citet{huangformal2025}, periodic relations were important for explaining APE length generalization on certain formal languages such as $(aa)^*$, which are definable in ${\bf FO}[Reg]$ but not ${\bf FO}[<]$. Such languages are not directly relevant to our present results, as they are not star-free but empirically easier for transformers than PARITY/FlipFlop. In particular, including such relations would not change Theorems~\ref{thm:fixed_universe_results} and \ref{thm:gen_variable_universe}.
We leave to future work to better understand the role of periodic relations in length generalization.

\paragraph{Remark on Unique Copying Task:} \label{app:uniquecopy}

Unique copying is the task of copying a string that is composed of distinct tokens. \citet{huangformal2025} showed that that task of unique copying is in $\CRASP$. However such kind of copying would require an increase of vocabulary during test time, and thus actually creates a challenge for rigorously treating the task using the results of \citet{huangformal2025}, which assumes a fixed alphabet. \citet{huangformal2025} addressed this problem by formalizing Unique Copy as repeating a string unboundedly often with a fixed alphabet. However, formalizing the task in our new framework becomes much more principled and more true to the original formulation of the task. Essentially, the following lines of program are enough to detect that unique copying was successful: 
$$\text{OnlyOneMatch}(i):= (\cttn{j \le i, c_{j-1} = c_i} == 1)$$ 
$$\text{UniqueCopySuccessful}(i):= (\cttn{j \le i}\lnot\text{OnlyOneMatch}(i) == 0)$$

\hrule

\section{Experimental Details} \label{appendix:experimental_details}

\subsection{Domains and Dataset Generation}\label{appendix: datasets}

For the generation of our training and test datasets, we use custom generators instead of existing problem generators and symbolic planners in order to control the lengths of the generated plans and get more variation. The generation of correct plans is mostly based on randomly selecting applicable action sequences but varies slightly depending on the domain. We first describe the shared set-up and then provide the domain-specific details below.

Table \ref{tab:datasets} provides an overview over the characteristics of the six datasets we generate (see Table \ref{tab:dataset-splits} for information about size and train/test splits).
Note that the initial state of the \colors\ planning instances is fixed in the sense that all bags are empty initially. However, the part of the initial state that defines which objects are colors and bags varies depending on $\objects$. 

All datasets contain the same number of valid and invalid plans. In particular, we generate a set of planning instances $\instance_i  = \langle \domain, \objects_i, \initial_i, \goal_i \rangle$ and for each $\instance_i$ we create a plan $\plan = [a_1, \ldots, a_k, \ldots, a_n]$ such that $\plan$ induces a state sequence from $s_1=\initial$ to $s_{n+1}$ such that $s_{n+1} \models \goal$, i.e. $\plan$ is a valid plan. Additionally, we generate an invalid plan $\plan'$ for each $\instance_i$ and add $\langle \instance_i, \plan \rangle$ and $\langle \instance_i, \plan' \rangle$ to the dataset. 

We consider two ways of constructing invalid plans: converting the valid plan into an \emph{incomplete} plan or a \emph{non-executable} plan. To obtain the incomplete plans, the valid plan $\plan = [a_1, \ldots, a_k, \ldots, a_n]$ is converted into an invalid plan $\plan' = [a_1, \ldots, a_k', \ldots, a_m']$ inducing a sequence of states $s_1, \ldots, s_{m+1}'$ where $s_1=\initial$ and $s_{m+1}' \not\models \goal$.
The incomplete plans are created such that exactly one (in the case of \grippers, \colors) or at least one (\lightsout) of the literals in the goal is not satisfied. For \lightsout\ and \colors, $\plan'$ contains exactly the same number of actions as $\plan$. For \grippers, the number of actions is similar but some of the invalid plans are slightly shorter.

For \grippers, we additionally generate a dataset with non-executable plans. To obtain a non-executable plan $\plan'$, the valid plan is converted into an action sequence $\plan' = [a_1, \ldots, a_{m-1}, a_m']$ such that $[a_1, \ldots, a_{m-1}]$ induces a sequence of states $s_1, \ldots, s_{m}'$ and $s_{m}' \not\models \Pre{a_m'}$.
We then combine the datasets with the incomplete plans and non-executable plans into the final dataset. In the STRIPS \colors\ domain and the \lightsout\ domain with conditional effects all actions are always applicable. Therefore, we cannot generate a dataset with non-executable plans here and only consider incomplete ones.

The valid and invalid plans for the \colors\ and \lightsout\ domains are minimal pairs, differing only by single action. The same applies to the valid and non-executable plans in the \grippers\ datasets. However, some of the incomplete plans differ slightly more due to the specific characteristics of the domain and might have a slightly different plan length than their valid counterparts.

\begin{table}[]
    \centering
    \begin{tabular}{lllll}
    \toprule
         Domain & Domain variant & Object set \objects & Fixed states & Invalid plans  \\
         \hline
         \grippers & well-formed & varying number and names & none & incomplete + non-executable \\
         \grippers & delete-free &varying number and names & none & incomplete + non-executable \\
         \hdashline
         \colors & well-formed  & varying number and names& initial \initial & incomplete\\
         \colors & STRIPS  & varying number and names& initial \initial& incomplete\\
         \hdashline
         \lightsout & well-formed  & fixed & goal \goal & incomplete \\
        \lightsout & conditional effects  &  fixed & goal \goal&  incomplete \\
    \bottomrule
    \end{tabular}
    \caption{Overview over the datasets generated for our experiments.}
    \label{tab:datasets}
\end{table}

\paragraph{\grippers.} The Gripper domain is a well-known planning domain from the IPC benchmarks. Here, we create a variant of it that introduces two new aspects to the domain, namely that balls can be heavy and that the robot has a charge that affects how it can pick and drop balls (see Section \ref{subsec:planning_definition} and Figure \ref{fig:gripper}). 

We create a well-formed and a delete-free variant of our \grippers\ domain shown in Figure \ref{fig:gripper}. First, we eliminate the conditional effects by separating
pick into two actions, $pick$ and $pickHeavy$, and remove charged from the preconditions of $pick$. Second, we create the well-formed variant by extending the preconditions and
the delete-free variant by removing the delete effects, resulting in the two domains shown in Figure \ref{fig:gripper_wf}.

\begin{figure}[t]
  \centering
  \fboxrule=0.4pt
  \fboxsep=6pt
  \fbox{
    \begin{minipage}{0.47\textwidth}
      \midsize

      \textbf{Predicates:} $\text{room}(r), \text{ball}(b), \text{gripper}(g), \text{free}(g), \text{heavy}(b),$ \\
      $\text{charged, atRobby}(r), \text{at}(b,r), \text{carry}(b,g)$
      
      \vspace{6pt}
      
      \textbf{Action} $\text{move}(r_1, r_2)$:\\
      \hspace*{0.5em}\textbf{Pre:} $\{\text{room}(r_1), \text{room}(r_2), \text{atRobby}(r_1), \lneg \text{atRobby}(r_2), \lneg \text{charged}\}$\\
      \hspace*{0.5em}\textbf{Eff:} $\{\text{charged}, \text{atRobby}(r_2), \lneg \text{atRobby}(r_1)\}$
      
      \vspace{6pt}
      
      \textbf{Action} $\text{pick}(b, r, g)$:\\
      \hspace*{0.5em}\textbf{Pre:} $\{\text{ball}(b), \text{room}(r), \lneg \text{heavy}(b), \text{gripper}(g), \text{atRobby}(r),$\\
      \hspace*{2.8em}$\text{free}(g),  \text{at}(b,r), \lneg \text{carry}(b,g)\}$\\
      \hspace*{0.5em}\textbf{Eff:} $ \{\text{carry}(b,g), \lneg \text{free}(g), \lneg \text{at}(b,r)\}$
      
      \vspace{6pt}

      \textbf{Action} $\text{pickHeavy}(b, r, g)$:\\
      \hspace*{0.5em}\textbf{Pre:} $\{\text{ball}(b), \text{room}(r), \text{heavy}(b), \text{gripper}(g), \text{atRobby}(r), $\\
      \hspace*{2.8em}$\text{free}(g), \text{charged}, \text{at}(b,r), \lneg \text{carry}(b,g)\}$\\
      \hspace*{0.5em}\textbf{Eff:} $\{\text{carry}(b,g), \lneg \text{free}(g), \lneg \text{at}(b,r), \lneg \text{charged}\}$
      
      \vspace{6pt}
      
      \textbf{Action} $\text{drop}(b, r, g)$:\\
      \hspace*{0.5em}\textbf{Pre:} $\{\text{ball}(b), \text{room}(r), \text{gripper}(g), \text{atRobby}(r), \text{carry}(b,g)$, \\
      \hspace*{2.8em}$ \text{charged}, \lneg \text{at}(b,r), \lneg \text{free}(g) \}$\\
      \hspace*{0.5em}\textbf{Eff:} $\{\text{at}(b,r), \text{free}(g), \lneg \text{carry}(b,g), \lneg \text{charged}\}$

    \end{minipage}
  }
  \fbox{
    \begin{minipage}{0.43\textwidth}
      \midsize

      \textbf{Predicates:} $\text{room}(r), \text{ball}(b), \text{gripper}(g), \text{free}(g), \text{heavy}(b),$ \\
      $\text{charged, atRobby}(r), \text{at}(b,r), \text{carry}(b,g)$
      
      \vspace{6pt}
      
      \textbf{Action} $\text{move}(r_1, r_2)$:\\
      \hspace*{0.5em}\textbf{Pre:} $\{\text{room}(r_1), \text{room}(r_2), \text{atRobby}(r_1)\}$\\
      \hspace*{0.5em}\textbf{Eff:} $\{\text{charged}, \text{atRobby}(r_2)\}$
      
      \vspace{6pt}
      
      \textbf{Action} $\text{pick}(b, r, g)$:\\
      \hspace*{0.5em}\textbf{Pre:} $\{\text{ball}(b), \text{room}(r), \text{gripper}(g), \text{atRobby}(r), \text{at}(b,r),$\\
      \hspace*{2.8em}$\text{free}(g), \lneg \text{heavy}(b)\}$\\
      \hspace*{0.5em}\textbf{Eff:} $ \{\text{carry}(b,g)\}$
      
      \vspace{6pt}

      \textbf{Action} $\text{pickHeavy}(b, r, g)$:\\
      \hspace*{0.5em}\textbf{Pre:} $\{\text{ball}(b), \text{room}(r), \text{gripper}(g), \text{atRobby}(r), \text{at}(b,r),$\\
      \hspace*{2.8em}$\text{free}(g), \text{charged}, \text{heavy}(b)\}$\\
      \hspace*{0.5em}\textbf{Eff:} $\{\text{carry}(b,g)\}$
      
      \vspace{6pt}
      
      \textbf{Action} $\text{drop}(b, r, g)$:\\
      \hspace*{0.5em}\textbf{Pre:} $\{\text{ball}(b), \text{room}(r), \text{gripper}(g), \text{atRobby}(r), \text{carry}(b,g)\}$\\
      \hspace*{0.5em}\textbf{Eff:} $\{\text{at}(b,r), \text{free}(g)\}$
        \vspace{9.9pt}
    \end{minipage}
  }
  \caption{\label{fig:gripper_wf} The well-formed (left) and delete-free (right) variants of the \grippers\ domain used in our experiments. Variants were obtained from the variant shown in Figure \ref{fig:gripper}.}
\end{figure}

For the generation of the planning instances we fix the number of grippers to 2 (the standard number for the IPC Gripper variant), but vary the number of balls and rooms. In particular, when generating a plan of length $N$, we sample the number of balls $n_b$ from the range $[0.6 *N, 0.85*N]$. The number of heavy balls and rooms is selected by sampling from $[0.45*n_b, 0.85*n_b]$ and range $[0.2*n_b, 0.5*n_b]$ respectively. This sampling strategy results in different combinations of plan lengths and number of objects while modeling that more objects tend to correlate with larger plans, which is the standard in planning datasets. We randomly assign a name to each object sampled from a fixed set of possible names. This set contains object names ranging from $\obj{Object0}$ to $\obj{ObjectM}$, where M is at least as large as the maximum number of objects within one instance in our dataset. 

For the initial state $\initial$, the initial locations of the robot and all balls are randomly assigned. We then sample a sequence of applicable actions until reaching the target length and make sure that in the resulting state each ball is located in a room, i.e. not carried anymore. The goal $\goal$ consists of the locations
of all balls obtained after applying the sampled sequence of applicable actions.
For the delete-free variant, part of the actions of the sequence are sampled from all actions applicable under the delete-free variant and some are sampled from the smaller set of actions applicable under the well-formed variant. This mixture increases the probability that balls are actually dropped again which is important for getting reasonable goal states. 

The non-executable plans are obtained by replacing the last action of the valid plan with an action for which the preconditions are not valid at that step in the action sequence. This action is sampled at random from the non-executable move and drop actions. Pick actions are not considered because valid plans never end with pick actions, and this type of mistake could be easily identified without actually reasoning over the planning instance and plan.

For the generation of incomplete plans, it is not possible to simply replace a single action. We select one ball and modify the plan such that the last step at which that ball is dropped does not drop the ball in the goal room anymore, which requires also changing the movements of the robot. Figure \ref{fig:gripper_example_plan} shows a small example planning instance from the \grippers\ domain, with plans that are valid ($\plan$), incomplete ($\plan_1'$) and non-executable ($\plan_2'$) under the well-formed domain variant. Under the delete-free variant, both $\plan$ and $\plan_2'$ are valid plans, but $\plan_1'$ is also incomplete.

\begin{figure}[t]
{\centering
  \fboxrule=0.35pt
  \fboxsep=6pt
\fbox{%
\midsize
\begin{minipage}{0.95\textwidth}
$\objects = \{\obj{object\_237}, \obj{object\_223}, \obj{object\_100}, \obj{object\_154}, \obj{object\_280}, \obj{object\_113}, \obj{object\_94}, \obj{object\_7}, \obj{object\_76}\}$\\
$\initial = \{at\text{-}robby(\obj{object\_280}), gripper(\obj{object\_237}), gripper(\obj{object\_223}), free(\obj{object\_237}), free(\obj{object\_223}),$\\
\hspace*{2.5em}$room(\obj{object\_100}), room(\obj{object\_154}), room(\obj{object\_280}), room(\obj{object\_113}),$\\
\hspace*{2.5em}$ball(\obj{object\_94}), ball(\obj{object\_7}), ball(\obj{object\_76}), heavy(\obj{object\_94}),  $\\
\hspace*{2.5em}$at(\obj{object\_94}, \obj{object\_100}), at(\obj{object\_7}, \obj{object\_154}),at(\obj{object\_76}, \obj{object\_280})\}$\\
$\goal = \{at(\obj{object\_94}, \obj{object\_280}),at(\obj{object\_7}, \obj{object\_154}), at(\obj{object\_76}, \obj{object\_154})\}$\\
\\
$\plan = [pick(\obj{object\_76}, \obj{object\_280}, \obj{object\_223}), move(\obj{object\_280}, \obj{object\_100}), pick\_heavy(\obj{object\_94}, \obj{object\_100}, \obj{object\_237}), $\\
\hspace*{2.5em}$move(\obj{object\_100}, \obj{object\_154}), drop(\obj{object\_76}, \obj{object\_154}, \obj{object\_223}), move(\obj{object\_154}, \obj{object\_280}),$\\
\hspace*{2.5em}$drop(\obj{object\_94}, \obj{object\_280}, \obj{object\_237}), move(\obj{object\_280}, \obj{object\_113})]$\\
\\
$\plan_1' = [pick(\obj{object\_76}, \obj{object\_280}, \obj{object\_223}), move(\obj{object\_280}, \obj{object\_100}), pick\_heavy(\obj{object\_94}, \obj{object\_100}, \obj{object\_237}),$\\
\hspace*{2.5em}$ move(\obj{object\_100}, \obj{object\_154}), drop(\obj{object\_76}, \obj{object\_154}, \obj{object\_223}), \underline{move(\obj{object\_154}, \obj{object\_113})},$\\
\hspace*{2.5em}$\underline{drop(\obj{object\_94}, \obj{object\_113}, \obj{object\_237})}]$\\
\\
$\plan_2' = [pick(\obj{object\_76}, \obj{object\_280}, \obj{object\_223}), move(\obj{object\_280}, \obj{object\_100}), pick\_heavy(\obj{object\_94}, \obj{object\_100}, \obj{object\_237}), $\\
\hspace*{2.5em}$move(\obj{object\_100}, \obj{object\_154}), drop(\obj{object\_76}, \obj{object\_154}, \obj{object\_223}), move(\obj{object\_154}, \obj{object\_280}),$\\
\hspace*{2.5em}$drop(\obj{object\_94}, \obj{object\_280}, \obj{object\_237}), \underline{drop(\obj{object\_76}, \obj{object\_280}, \obj{object\_223})}]$\\
\end{minipage}%
}
}
\caption{\label{fig:gripper_example_plan}An example planning instance from the well-formed \grippers\ domain  with a valid plan $\plan$, an incomplete plan $\plan_1'$ and a non-executable plan $\plan_2'$. The actions by which the invalid plans differ from the valid one are underlined.}
\end{figure}

\paragraph{\colors.} This domain involves placing colored balls into bags, 
where actions add or remove a color from a specific bag, and goals specify which colors each bag should contain. 
We design a standard STRIPS and a well-formed variant of the domain as shown in Figure \ref{fig:color}. 
The variants differ in how they handle redundant operations, 
i.e. adding a color that is already in a bag or removing one that is not. 
The well-formed variant enforces that a color can be added or removed 
only if it is absent or present in the bag, respectively, 
while the standard STRIPS does not have this restriction.

For all planning instances, we define the initial state as one in which no color is in any bag.
We choose the number of colors and bags as a function of the target plan length: 
we set the total number of bag--color pairs (i.e., \#colors $\times$ \#bags) 
to approximately one quarter of the plan length, 
and we make the numbers of colors and bags as close as possible.
We randomly assign a name to each object sampled from a fixed set of possible names. 
This set contains object names ranging from $\obj{Object0}$ to $\obj{ObjectM}$, 
where M is at least as large as the maximum number of objects within one instance in our dataset.
Valid plans are generated by randomly sampling applicable actions until reaching the target length.
The goal is then defined as the set of all \texttt{hasColor} facts that hold in the resulting state.

In order to obtain the incomplete plans, 
we randomly substitute an action in the plan with another applicable action, 
and verify that this substitution leads to a different end state, failing the original goals.  
Figure \ref{fig:plan_example_color} shows a small example planning instance from the \colors\ domain 
where the goal is to have color $\obj{object3}$ in bag $\obj{object5}$ and color $\obj{object8}$ in bag $\obj{object6}$. 
The figure shows a valid and an incomplete plan for the well-formed variant 
and a valid and an incomplete plan for the standard STRIPS. 
For both cases, the invalid plans are obtained by replacing a single action 
that has the effect that $\obj{object3}$ will not be in $\obj{object5}$ in the end. 
Note that the valid plan for the STRIPS variant is invalid under the well-formed variant 
but the well-formed valid plan would also be valid under the standard STRIPS domain.

\begin{figure}[t]
  \centering
  \fboxrule=0.35pt
  \fboxsep=6pt
  \fbox{
    \begin{minipage}{0.35\textwidth} 
      \midsize
      
      \textbf{Predicates:} $\text{bag}(b), \text{color}(c), \text{hasColor}(b, c)$
      
      \vspace{6pt}
      
      \textbf{Action} $\text{remove}(c, b)$:\\
      \hspace*{0.5em}\textbf{Pre:} $\{\text{bag}(b), \text{color}(c)\}$\\
      \hspace*{0.5em}\textbf{Eff:} $\{\lneg \text{hasColor}(b, c)\}$
      
      \vspace{6pt}
      
      \textbf{Action} $\text{add}(c, b)$:\\
      \hspace*{0.5em}\textbf{Pre:} $\{\text{bag}(b), \text{color}(c)\}$\\
      \hspace*{0.5em}\textbf{Eff:} $\{\text{hasColor}(b, c)\}$

    \end{minipage}
  }
  \fbox{
    \begin{minipage}{0.35\textwidth} 
      \midsize
      
      \textbf{Predicates:} $\text{bag}(b), \text{color}(c), \text{hasColor}(b, c)$
      
      \vspace{6pt}
      
      \textbf{Action} $\text{remove}(c, b)$:\\
      \hspace*{0.5em}\textbf{Pre:} $\{\text{bag}(b), \text{color}(c), \text{hasColor}(b, c)\}$\\
      \hspace*{0.5em}\textbf{Eff:} $\{\lneg \text{hasColor}(b, c)\}$
      
      \vspace{6pt}
      
      \textbf{Action} $\text{add}(c, b)$:\\
      \hspace*{0.5em}\textbf{Pre:} $\{\text{bag}(b), \text{color}(c), \lneg \text{hasColor}(b, c)\}$\\
      \hspace*{0.5em}\textbf{Eff:} $\{\text{hasColor}(b, c)\}$

    \end{minipage}
  }
\caption{\label{fig:color} The standard (left) and well-formed (right) variants of the \colors\ domain. 
}
\end{figure}

\begin{figure}[t]
{\centering
  \fboxrule=0.35pt
  \fboxsep=6pt
\fbox{%
\midsize
\begin{minipage}{0.85\textwidth}
$\textbf{Well-formed variant}$\\
$\objects = \{\obj{object\_5}, \obj{object\_6}, \obj{object\_3}, \obj{object\_8}\}$\\
$\initial = \{bag(\obj{object\_5}), bag(\obj{object\_6}), color(\obj{object\_3}), color(\obj{object\_8})\}$\\
$\goal = \{hasColor(\obj{object\_5},\obj{object\_3}), hasColor(\obj{object\_6},\obj{object\_8})\}$\\
\\
$\plan_1 = [add(\obj{object\_3}, \obj{object\_5}), add(\obj{object\_8}, \obj{object\_5}), remove(\obj{object\_3}, \obj{object\_5}),$\\
\hspace*{2.5em}$add(\obj{object\_8}, \obj{object\_6}), \underline{add(\obj{object\_3}, \obj{object\_5})}, remove(\obj{object\_8}, \obj{object\_5}) ]$\\
$\plan_1' = [add(\obj{object\_3}, \obj{object\_5}), add(\obj{object\_8}, \obj{object\_5}), remove(\obj{object\_3}, \obj{object\_5}),$\\
\hspace*{2.5em}$add(\obj{object\_8},\obj{object\_6}), \underline{add(\obj{object\_3}, \obj{object\_6})}, remove(\obj{object\_8}, \obj{object\_5}) ]$\\
\\
$\textbf{Standard variant}$\\
$\objects = \{\obj{object\_5}, \obj{object\_6}, \obj{object\_3}, \obj{object\_8}\}$\\
$\initial = \{bag(\obj{object\_5}), bag(\obj{object\_6}), color(\obj{object\_3}), color(\obj{object\_8})\}$\\
$\goal = \{hasColor(\obj{object\_5},\obj{object\_3}), hasColor(\obj{object\_6},\obj{object\_8})\}$\\
\\
$\plan_2 = [remove(\obj{object\_3},\obj{object\_5}), add(\obj{object\_8},\obj{object\_5}), add(\obj{object\_8},\obj{object\_5}),$\\
\hspace*{2.5em}$\underline{add(\obj{object\_8}, \obj{object\_6})}, remove(\obj{object\_8}, \obj{object\_5}), remove(\obj{object\_3},\obj{object\_6}) ]$\\
$\plan_2' = [remove(\obj{object\_3}, \obj{object\_5}), add(\obj{object\_8}, \obj{object\_5}), add(\obj{object\_8}, \obj{object\_5}),$\\
\hspace*{2.5em}$\underline{remove(\obj{object\_8}, \obj{object\_5})}, remove(\obj{object\_8}, \obj{object\_5}), remove(\obj{object\_3}, \obj{object\_6}) ]$\\
\end{minipage}%
}
}
\caption{\label{fig:plan_example_color}A small planning instance of the \colors\ domain with a valid plan $\plan_1\ $ and an incomplete plan ($\plan_1'$) for the well-formed variant and a valid ($\plan_2$) and incomplete ($\plan_2'$) plan for the standard STRIPS. The actions by which the valid and invalid plans differ are underlined.}
\end{figure}

\paragraph{\lightsout.}\label{lightsout-exposition} While the well formed variants of {\grippers} and {\colors} domains serve as positive cases to show generalization over variable objects, we will use {\lightsout} to show negative results. It is important to note that this restriction does not undermine the validity of our evaluation. On the contrary, since conditional effects of {\lightsout} represents a negative result, the failure to generalize \emph{even} for a \emph{fixed} set of objects matches our predictions from Theorem~\ref{thm:fixed_universe_results}. \lightsout\ is a puzzle consisting of lights arranged in a square grid. Each light is either on or off, and pressing a light toggles the state of itself and of each of the adjacent lights. The goal is to have all the lights off.
We consider two formalizations of this domain, one with conditional effects and one that is well-formed. We use the standard 5x5 in all our \lightsout\ datasets with lights named $\obj{L}_{00}$, $\obj{L}_{01}$, ... $\obj{L}_{44}$ according to their positions in the grid.

The variant with conditional effects includes only a single action schema for pressing a light as shown in Figure \ref{fig:lights_cond} (left). There are no preconditions for pressing a light and the effect on each of the affected lights (i.e. the pressed light and the adjacent lights) is conditioned on the current status of that light. 

The well-formed variant has separate actions for pressing each light and for each possible combination of the state of that light and the adjacent lights. Figure \ref{fig:lights_cond} (right) shows two action schemas from the well-formed domain. The shown action schemas are two of the schemas for pressing the light in the upper left corner ($\obj{L}_{00}$) which has two adjacent lights, $\obj{L}_{10}$ and $\obj{L}_{01}$. The other action schemas for pressing $\obj{L}_{00}$ cover all the remaining possible $2^3$ on/off-combinations of $\obj{L}_{00}$, $\obj{L}_{10}$ and $\obj{L}_{01}$. Overall, the domain consists of 512 action schemas. There could be many other ways of defining this, we choose one for our convenience. 

All \lightsout\ planning instances have the same goal, i.e. all lights need to be off. In order to generate the initial states and valid plans, we start from the goal state and sample actions up to the target plan length. The resulting state becomes then the initial state and the valid plan is obtained by inverting the action sequence. The incomplete plans are obtained by replacing the last action with a different, applicable action which is guaranteed to result in a state where at least one light is turned on.

\begin{figure}[t]
  \centering
  \fboxrule=0.4pt
  \fboxsep=6pt
  \fbox{
    \begin{minipage}{0.35\textwidth} %
      \midsize
      
      \textbf{Predicates:} $\text{on}(l), \text{out}(l), \text{adj}(l_1, l_2)$
      
      \vspace{8pt}
      
      \textbf{Action} $\text{press}(l_1)$:\\
      \hspace*{0.5em}\textbf{Pre:} $\emptyset$ \\ %
      \hspace*{0.5em}\textbf{Eff:} \\
      \hspace*{1.5em}\begin{tabular}{@{}r@{\hspace{4pt}}c@{\hspace{4pt}}l@{}}
         $\{\text{out}(l_1)\}$ & $\rhd$ & $\{\text{on}(l_1), \lneg \text{out}(l_1)\}$ \\[3pt]
         $\{\text{on}(l_1)\}$  & $\rhd$ & $\{\text{out}(l_1), \lneg \text{on}(l_1)\}$ \\[3pt]
         $\forall l_2. \{\text{adj}(l_1, l_2), \text{on}(l_2)\}$ & $\rhd$ & $\{\text{out}(l_2), \lneg \text{on}(l_2)\}$ \\[3pt]
         $\forall l_2. \{\text{adj}(l_1, l_2), \text{out}(l_2)\}$ & $\rhd$ & $\{\text{on}(l_2), \lneg \text{out}(l_2)\}$
      \end{tabular}

    \end{minipage}
    }
    \fbox{
    \begin{minipage}{0.5\textwidth} 
      \midsize
      
      \textbf{Constants:} $\obj{L_{00}}, \obj{L_{01}}, \obj{L_{02}}, \obj{L_{03}}, \obj{L_{04}}, \obj{L_{10}}, \obj{L_{11}}, \dots$
      
      \vspace{4pt}

      \textbf{Predicates:} $\text{on}(l), \text{out}(l)$
      
      \vspace{6pt}
      
      \textbf{Action} $\text{press-00-1}$:\\
      \hspace*{0.5em}\textbf{Pre:} $\{\text{out}(\obj{L_{00}}), \text{out}(\obj{L_{01}}), \text{on}(\obj{L_{10}}), \lneg \text{on}(\obj{L_{00}}), \lneg \text{on}(\obj{L_{01}}), \lneg \text{out}(\obj{L_{10}}) \}$\\
      \hspace*{0.5em}\textbf{Eff:} $\{\text{on}(\obj{L_{00}}), \text{on}(\obj{L_{01}}), \text{out}(\obj{L_{10}}), \lneg \text{out}(\obj{L_{00}}), \lneg \text{out}(\obj{L_{01}}), \lneg \text{on}(\obj{L_{10}}) \}$

      \vspace{6pt}

      \textbf{Action} $\text{press-00-2}$:\\
      \hspace*{0.5em}\textbf{Pre:} $\{\text{on}(\obj{L_{00}}), \text{out}(\obj{L_{01}}), \text{on}(\obj{L_{10}}), \lneg \text{out}(\obj{L_{00}}), \lneg \text{on}(\obj{L_{01}}), \lneg \text{out}(\obj{L_{10}}) \}$\\
      \hspace*{0.5em}\textbf{Eff:} $\{\text{out}(\obj{L_{00}}), \text{on}(\obj{L_{01}}), \text{out}(\obj{L_{10}}), \lneg \text{on}(\obj{L_{00}}), \lneg \text{out}(\obj{L_{01}}), \lneg \text{on}(\obj{L_{10}}) \}$

    \end{minipage}
  }
\caption{\label{fig:lights_cond} Left: the \lightsout\ domain with conditional effects and universal quantification over adjacent lights. Right: an excerpt of the well-formed variant of \lightsout. The complete well-formed domain consists of 512 action schemas.
}
\end{figure}

\subsection{Training and evaluation}\label{app:exp_details}

\begin{table}[t]
\centering
\small
\begin{tabular}{llrrrrr}
\toprule
Dataset & Variant & Train & Val-ID & Val-OOD & Test-ID & Test-OOD \\
\midrule
Color Bags       & Well-formed              & 1{,}720{,}000 & 12{,}000 & 13{,}333 & 24{,}000 & 26{,}667 \\
Color Bags      & STRIPS                   & 1{,}720{,}000 & 12{,}000 & 13{,}333 & 24{,}000 & 26{,}667 \\
Grippers     & Well-formed              &   360{,}000 & 18{,}000 & 20{,}000 & 18{,}000 & 20{,}000 \\
Grippers     & Delete-free              &   360{,}000 & 18{,}000 & 20{,}000 & 18{,}000 & 20{,}000 \\
Lights-Out   & Well-formed              &   180{,}000 &  6{,}000 &  6{,}666 & 12{,}000 & 13{,}334 \\
Lights-Out   & Conditional effects      &   180{,}000 &  6{,}000 &  6{,}666 & 12{,}000 & 13{,}334 \\
\bottomrule
\end{tabular}
\caption{Number of instances per dataset, variant, and split.}
\label{tab:dataset-splits}
\end{table}

\paragraph{Setup.}

Unless otherwise specified, we use a decoder-only Transformer based on the GPT-2 architecture, 
with 8 layers, hidden size 768, and 12 attention heads.
We use trainable absolute positional embeddings (APE) and pre-layer normalization, 
and optimize it using the standard causal language modeling objective.

Inputs are tokenized with a domain-specific vocabulary that includes
delimiter tokens, action tokens, argument and object tokens, the verdict tokens, along with other special tokens.
Each example is constructed using delimiter tokens that separate the
initial conditions $\initial$, the plan action sequence $\pi$, the goal $\goal$,
and a final verdict token $V \in \{\texttt{correct},\texttt{incorrect}\}$:
\[
\texttt{<init>}\,\initial\;\texttt{<plan>}\,\pi\;\texttt{<goal>}\,\goal\;\texttt{<verdict>}\,V.
\]

\paragraph{Tokenization.}
We tokenize plans at the level of action primitives and their
arguments, with domain-specific schemes:
(i) \lightsout\ (conditional-effect version): actions of the form
$\texttt{press}(l_{ij})$ are tokenized as
\texttt{<press>}\ \texttt{<i>}\ \texttt{<j>}.
(ii) \lightsout\ (well-formed version): there are multiple distinct actions for pressing a specific light; %
we therefore tokenize these as \texttt{<press>}\ \texttt{<i>}\ \texttt{<j>}\ \texttt{<k>}.
(iii) \colors\ and \grippers:  actions such as 
$\texttt{add}(\texttt{object}_i,\texttt{object}_j)$ and $\texttt{pick}(\texttt{object}_i,\texttt{object}_j, \texttt{object}_k)$
are tokenized as \texttt{<add>}\ \texttt{<object\_i>}\ \texttt{<object\_j>} and 
\texttt{<pick>}\ \texttt{<object\_i>}\ \texttt{<object\_j>}\ \texttt{<object\_k>}. 
Moreover, we only include tokens that are different for different instances. 
For example, for {\colors}, because all instances share the same initial state, 
i.e., all bags are empty, we omit the initial state from the input sequence.

\paragraph{Data splits.} 
For all three domains, we generate planning instances and correct plans of lengths 11 up to 200, and their invalid counterparts. 
We train only on the dataset instances with a plan length of up to 100 and test on the complete range of plans.
All datasets are balanced, with the same number of valid and invalid plans.

\paragraph{Evaluation.}
We train on plans of length 11--100 actions (ID) 
and validate on both 11--100 actions (ID) and 101--200 actions (OOD) datasets; 
here, ``length'' refers to $|\plan|$, instead of the total number of input tokens, 
which also increases monotonically with $|\plan|$.
We run four random seeds and pick the checkpoint with the best validation performance.\footnote{
    For the not-well-formed variant of \colors , 
    we exclude the results of a random seed, because it failed to converge to a high ID performance. 
    The phenomenon is consistent with our observation 
    regarding the in-distribution learnability analysis in Figure \ref{fig:acc_id_learning}.
}
Then we report both ID and OOD accuracy scores on held-out test sets. 
At inference time, we provide the prefix up to \texttt{<verdict>}
and compare only the logits of the two verdict tokens at the last prompt position;
the prediction is the one with the larger logit.

\paragraph{Instance packing.}
To improve training throughput while preserving instance boundaries,
we concatenate multiple variable-length instances into fixed-length blocks of length $B=4096$ tokens, 
without splitting any instance across blocks.
If an instance does not fit in the remaining space, we start a new block.
For \colors\ and \grippers\, 
allowing cross-instance attention leads to unstable optimization with large loss spikes, 
so we prevent tokens from different instances in the same block from attending to each other, 
i.e., attention is restricted to the tokens from the same instance, 
in addition to the standard causal mask. 
Instances are shuffled between each epoch to avoid memorization. 

\paragraph{Optimization.}
Across all domains, we use a per-device batch size of $8$ blocks.
Our pilot shows that a larger block, e.g., 16 or 32, achieves similar performance; 
we therefore stick to 8 for efficiency.
We train for $50{,}000$ steps for all runs, and validate them every
1000 steps.
We use AdamW with a linear learning-rate schedule, with warmup over the
first $10\%$ of training steps, weight decay $0.1$, and gradient clipping to max norm $0.5$. 
We train in bfloat16 on a single A100 or H100 GPU per run.

We performed a learning-rate sweep over
$\{2\times 10^{-5}, 5\times 10^{-5}, 1\times 10^{-4}, 3\times 10^{-4}, 6\times 10^{-4}, 1\times 10^{-3}, 2\times 10^{-3}\}$
and selected the best setting per domain; this yields the following fixed learning rates:
(i) \lightsout\: $2\times 10^{-5}$, (ii) \colors\: $3\times 10^{-4}$, (iii) \grippers\: $6\times 10^{-4}$. 

\section{Additional Empirical Results}
\label{appendix:additional_empirical}

\begin{figure}[t]
  \centering
  \includegraphics[width=0.7\linewidth]{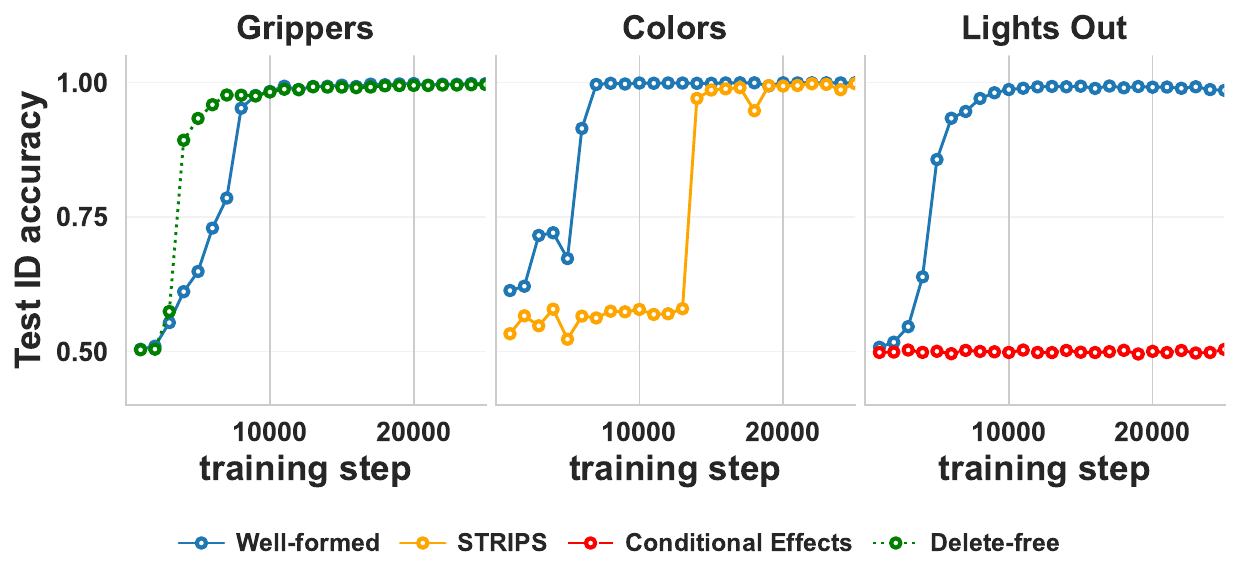}
  \caption{
       In-distribution test accuracy during the learning process (seed 0). 
  }
  \label{fig:acc_id_learning}
\end{figure}

\begin{figure}[t]
  \centering
  \includegraphics[width=0.62\columnwidth]{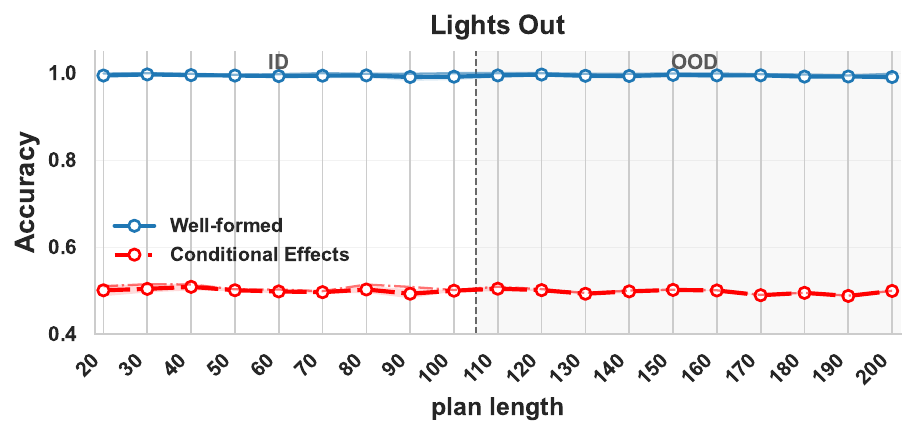}
  \includegraphics[width=0.62\linewidth]{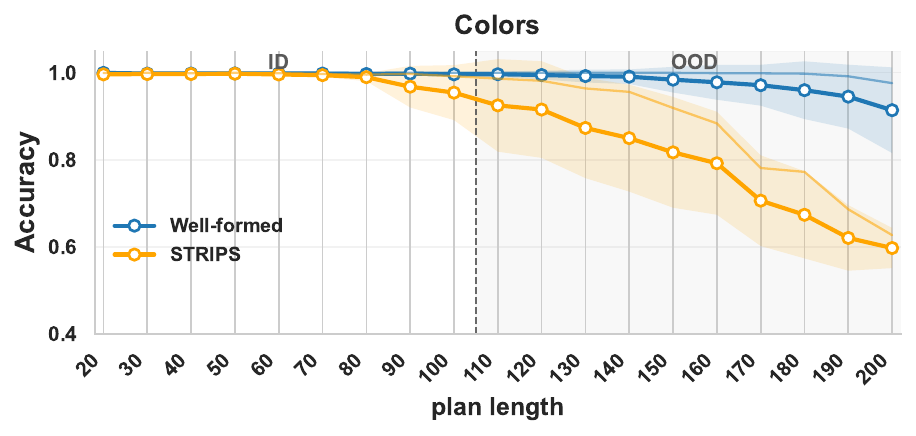}
  
  \includegraphics[width=0.62\linewidth]{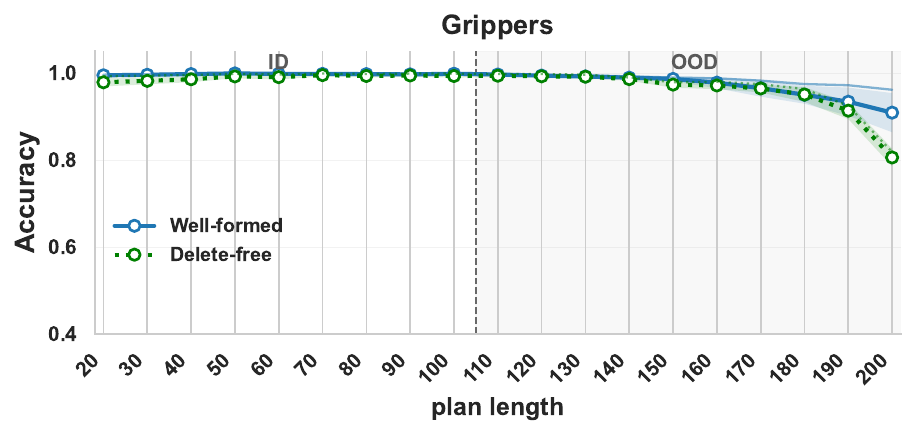}
  \caption{
      Accuracy scores for each 10-length bucket.
      This is an extended version of results in Figure~\ref{fig:results-summary}. 
  }
  \label{fig:acc_detailed_ape}
\end{figure}

\paragraph{Extended main empirical results.}
We first present an extended version of our main empirical results in Figure~\ref{fig:acc_detailed_ape},
i.e., the generalization accuracy across different plan lengths. 
Results are bucketed into 10-length intervals; for example, the point
at $x=20$ reports the accuracy over lengths 11--20.
Besides the average accuracy scores, 
we also show the best seed and the standard deviation across all four seeds.

\paragraph{Disentangling object-count and plan-length shifts.}
In the main experiments for \colors\ and \grippers, the OOD setting increases both the number of objects and the plan length, reflecting the usual correlation between larger planning instances and longer plans. To better understand the separate effects of object-count growth and plan-length growth on generalization, we additionally evaluate two controlled settings: one where $|\plan|$ is fixed while $|\objects|$ increases, and one where $|\objects|$ is fixed while $|\plan|$ increases. We do not include \lightsout\ here because it already uses a fixed object universe.

The results show that the generalizable problem formulations continue to perform well under both settings. In \colors, the well-formed formulation remains near-perfect when either $|\objects|$ or $|\plan|$ increases, while the STRIPS formulation degrades in both settings. In \grippers, both the well-formed and delete-free formulations remain highly accurate. Although the delete-free formulation shows a mild decline as $|\objects|$ increases, performance remains strong overall. These results indicate that the observed variable-universe trends persist when object-count growth and plan-length growth are examined separately.

\begin{table}[t]
\centering
\small
\begin{tabular}{@{}lcc@{\hspace{2em}}lcc@{}}
\toprule
\multicolumn{3}{c}{\textbf{More objects}} &
\multicolumn{3}{c}{\textbf{Longer plans}} \\
\cmidrule(r){1-3}\cmidrule(l){4-6}
Setting & WF & STRIPS &
Setting & WF & STRIPS \\
\cmidrule(r){1-3}\cmidrule(l){4-6}
\colors, $|\objects|=13$ & 0.9998 & 0.8021 &
\colors, $|\plan|=141$--$160$ & 0.9991 & 0.6714 \\
\colors, $|\objects|=14$ & 0.9998 & 0.7174 &
\colors, $|\plan|=161$--$180$ & 0.9977 & 0.6699 \\
\colors, $|\objects|=15$ & 0.9954 & 0.6663 &
\colors, $|\plan|=181$--$200$ & 0.9934 & 0.6351 \\
\addlinespace
\cmidrule(r){1-3}\cmidrule(l){4-6}
Setting & WF & DF &
Setting & WF & DF \\
\cmidrule(r){1-3}\cmidrule(l){4-6}
\grippers, $|\objects|=81$--$100$ & 1.0000 & 0.9948 &
\grippers, $|\plan|=140$--$159$ & 1.0000 & 1.0000 \\
\grippers, $|\objects|=101$--$120$ & 1.0000 & 0.9824 &
\grippers, $|\plan|=160$--$179$ & 1.0000 & 0.9832 \\
\grippers, $|\objects|=121$--$140$ & 1.0000 & 0.9615 &
\grippers, $|\plan|=180$--$199$ & 0.9812 & 0.9012 \\
\bottomrule
\end{tabular}
\caption{
Controlled OOD evaluations separating object-count growth from plan-length growth. Left: $|\plan|=100$, while $|\objects|$ increases. Right: $|\objects|$ is fixed, with 15 objects for \colors\ and 140 objects for \grippers, while $|\plan|$ increases.
}
\label{tab:object_length_ablation}
\end{table}

\paragraph{In-distribution learnability.}  

We also compare the ID learnability of different planning variants, 
by plotting the curve of ID test accuracy during the training process in Figure~\ref{fig:acc_id_learning}. 
We use one random seed as an example here, and our observations are consistent across different seeds. 

\emph{Model failure on {\lightsout} with conditional effects.}
{\lightsout} with conditional effects is the only setting  in which models remain at chance-level accuracy. 
while others achieve near-perfect ID accuracy.
This suggests that conditional-effect {\lightsout} is substantially more challenging, 
and highlights the massive benefit of the well-formed reformulation, 
which introduces an exponential (in grid size) number of actions 
and yields near-perfect accuracy for both ID and OOD lengths.

\emph{Well-formed is easier than STRIPS.}
For {\colors}, although both the well-formed and STRIPS variants achieve near-perfect ID accuracy, 
the well-formed variant is learned substantially earlier, 
again suggesting the benefit of well-formed actions. 

\emph{Delete-free vs well-formed.}
For {\grippers}, the delete-free variant is learned slightly earlier than the well-formed variant, but variants do eventually converge.

\section{Manual Assessment of Well-formedness}

Checking whether a domain is well-formed is hard if a domain is not syntactically well-formed. 
In order to still get a sense of the frequency of well-formed domains in existing benchmarks, we manually checked the 10 domains from the IPC Learning track 2023 \cite{taitler-et-al-aimag2024}. 
For all domains, we first went through all action definitions in the domain definition file and checked for syntactic well-formedness. If some actions were not syntactically well-formed, we assessed their well-formedness based on the action definition together with the respective IPC problem generator \footnote{https://github.com/ipc2023-learning/benchmarks/tree/main}. 
We now illustrate our reasoning for the manual evaluation for one well-formed domain (Ferry) and one not well-formed domain (Satellite).

\paragraph{Ferry.} The Ferry domain defines three actions: ``sail'' for sailing between locations, ``board'' for moving a car onto the ferry and ``debark'' for moving a car from the ferry to a location. The ``sail'' action is syntactically well-formed, i.e. for each of the two effects their negation is one of the preconditions. 

The action ``board'' has three effects: on$(car)$, $\lneg$at$(car,loc)$, $\lneg$empty-ferry(). The preconditions include empty-ferry() and at$(car,loc)$ but for on$(car)$, the negation is not part of the preconditions. Therefore, the well-formedness of the action depends on whether it is possible that on$(car)$ is true together with the preconditions of the ``board'' action. The problem generator for Ferry only generates problem instances where for each car \obj{C}, there is exactly one location \obj{L} such that at(\obj{C}, \obj{L}) is true initially, i.e. each car is at a specific location in the initial state. At the same time, there is no predicate on$(car)$ for any car in the initial state. The only action that makes on(\obj{C}) true for a car \obj{C} is the action ``board'' itself which makes at(\obj{C}, loc) false. Additionally, the only action that makes at(\obj{C}, loc) true for some location is the ``debark'' action which makes on(\obj{C}) false.
Therefore, it is not possible that for any car \obj{C} and any location \obj{L} it is true that on(\obj{C}) $\land$ at(\obj{C}, \obj{L}), and therefore ``board'' is well-formed. 

The reasoning for the action ``debark'' is similar. The action has three effects: at$(car,loc)$, empty-ferry(), $\lneg$on$(car)$. The preconditions are on$(car)$ and at-ferry$(loc)$ but they do not include $\lneg$empty-ferry() nor $\lneg$at$(car,loc)$. As described above, on(\obj{C}) $\land$ at(\obj{C}, loc) can never be true for any car in any reachable state and therefore $\lneg$at$(car,loc)$ is always true (for any location) when the precondition on$(car)$ holds. Regarding empty-ferry(), the problem generator always generates instances where empty-ferry() is true initially. The only action that makes this false is ``board'' which makes on$(car)$ true. Therefore, $\lneg$empty-ferry() is exactly then true when there is some car \obj{C} for which on(\obj{C}) is true.

\paragraph{Satellite.} The Satellite domain defines 5 actions: ``turn\_to'' for turning the satellite into a specific direction, ``switch\_on'' and ``switch\_off'' for switching the power of a an instrument on and off, ``calibrate'' for calibrating an instrument and ``take\_image'' for using an instrument to take an image. ``turn\_to'' is syntactically well-formed and the other actions are not. 

The action ``take\_image'' has only one effect, namely have\_image$(dir, mode)$ which specifies that an image has been taken in a specific mode and direction. There is no other action that has have\_image$(dir, mode)$ as part of its effect. In particular, this means that there is no action that has $\lneg$have\_image$(dir, mode)$ as an effect and therefore have\_image(\obj{D}, \obj{M}) will always stay true for any direction \obj{D} and mode \obj{M} if it was true once. At the same time, the preconditions of ``take\_image'' do not include $\lneg$have\_image$(dir, mode)$. 
Therefore, it is possible to apply take\_image(\obj{D}, \obj{M}) even if have\_image(\obj{D}, \obj{M}) is already true. This makes ``take\_image'' a not well-formed action and therefore Satellite is not well-formed.

\end{document}